\def\eps{\varepsilon}
\font\tencmmib=cmmib10 \skewchar\tencmmib '60
\def\lessim{\ \lower4pt\hbox{$
		\buildrel{\displaystyle <}\over\sim$}\ }
\def\gessim{\ \lower4pt\hbox{$\buildrel{\displaystyle >}
		\over\sim$}\ }
\newtheorem{theorem}{Theorem}[section]
\newtheorem{proposition}[theorem]{Proposition}
\newtheorem{lemma}{Lemma}
\DeclarePairedDelimiter{\norm}{\lVert}{\rVert}
\DeclarePairedDelimiter{\abs}{\lvert}{\rvert}
\newcommand{\COMMENT}[2][.5\linewidth]{%
	\leavevmode\hfill\makebox[#1][l]{//~#2}}
\providecommand{\abs}[1]{\left\lvert#1\right\rvert}
\providecommand{\norm}[1]{\left\lVert#1\right\rVert}
\renewcommand{\hat}{\widehat}
\renewcommand{\hat}{\widehat}
\newcommand{\bfm}[1]{\ensuremath{\mathbf{#1}}}
\newcommand\numberthis{\addtocounter{equation}{1}\tag{\theequation}}
\def\ba{\bfm a}   \def\bA{\bfm A}  
   \def\bB{\bfm B}  
   \def\bC{\bfm C}  
   \def\bD{\bfm D}  
\def\be{\bfm e}   \def\bE{\bfm E}  \def\EE{\mathbb{E}}
   \def\bG{\bfm G}  
   \def\bI{\bfm I}  \def\II{\mathbb{I}}
   \def\bL{\bfm L}  
   \def\bM{\bfm M}
   \def\bP{\bfm P}  \def\PP{\mathbb{P}}
     \def\RR{\mathbb{R}}
     \def\SS{\mathbb{S}}
\def\bu{\bfm u}   \def\bU{\bfm U}  
\def\bv{\bfm v}   \def\bV{\bfm V}  
\def\bw{\bfm w}   \def\bW{\bfm W}  
\def\bx{\bfm x}   \def\bX{\bfm X}  
\def\by{\bfm y}   \def\bY{\bfm Y}  
\def\bz{\bfm z}   \def\bZ{\bfm Z}
\def\calR{{\cal  R}} 
\def\calS{{\cal  S}}
\def\calW{{\cal  W}}
\newcommand{\bfsym}[1]{\ensuremath{\boldsymbol{#1}}}
           \def\bDelta {\bfsym {\Delta}}
\def\btheta{\bfsym {\theta}}           
\def\beps{\bfsym \varepsilon}          
        \def\bLambda {\bfsym {\Lambda}}
\DeclareMathOperator{\argmin}{argmin}
\DeclareMathOperator{\Var}{Var}
\DeclareMathOperator{\tr}{tr}
\def\eps{\varepsilon}
\def\beps{\mbox{\boldmath$\eps$}}
\def\disp{\displaystyle}
\newcommand{\vertiii}[1]{{\left\vert\kern-0.25ex\left\vert\kern-0.25ex\left\vert #1 
		\right\vert\kern-0.25ex\right\vert\kern-0.25ex\right\vert}}
\def\scrE{\mathscr{E}}
\def\scrX{\mathscr{X}}
\def\scrT{\mathscr{T}}
\def\scrA{\mathscr{A}}
\def\scrG{\mathscr{G}}
\begin{document}

	\title{\Large On Spectral Learning for Odeco Tensors: Perturbation, Initialization, and Algorithms$^\ast$}
	\author{Arnab Auddy\thanks{Department of Statistics, The Ohio State University, Columbus, OH ({auddy.1@osu.edu})}
    \and Ming Yuan\thanks{Department of Statistics, Columbia University, 
			New York, NY
            ({ming.yuan@columbia.edu})}
            }
	\date{(\today)}
	
	\maketitle
	
	\begin{abstract}
    We study spectral learning for orthogonally decomposable (odeco) tensors, emphasizing the interplay between statistical limits, optimization geometry, and initialization. Unlike matrices, recovery for odeco tensors does not hinge on eigengaps, yielding improved robustness under noise. While iterative methods such as tensor power iterations can be statistically efficient, initialization emerges as the main computational bottleneck. We investigate perturbation bounds, non-convex optimization analysis, and initialization strategies, clarifying when efficient algorithms attain statistical limits and when fundamental barriers remain.
	\end{abstract}
	
	\footnotetext[1]{
		This research was supported by NSF Grant DMS-2052955.}
	
	\section{Introduction.}
	Tensors, as higher-order generalizations of matrices, have emerged as powerful tools for representing and analyzing multi-dimensional data. They naturally arise in diverse applications such as multi-relational networks, spatiotemporal measurements, neuroimaging, and latent variable models. Unlike matrices, which capture only pairwise relationships, tensors encode multi-way interactions, offering richer structural insights. Among the various tensor models, \emph{orthogonally decomposable (odeco) tensors} play a special role. Their decomposition structure parallels the eigendecomposition of matrices, but with important advantages in both statistical robustness and computational tractability.
	
	In particular, odeco tensors arise in the \emph{method of moments} for latent variable models. This framework has been especially influential in statistical machine learning, where moment-based tensor decompositions have enabled consistent parameter recovery in models such as topic models, hidden Markov models, and Gaussian mixtures (see, e.g., \cite{janzamin2019spectral}). This approach avoids likelihood maximization, which is often computationally intractable, while still ensuring consistency. Tensor-based methods have since become widely adopted in unsupervised learning, underscoring their importance in bridging statistical modeling and algorithm design. In these problems, the parameter of interest is typically modeled through the spectral decomposition of an odeco tensor. At an abstract level, they can be formulated as recovering the spectral parameters, i.e., singular vectors, of an odeco tensor $\scrT$ from an observed tensor $\scrX=\scrT+\scrE$ where $\scrE$ represents an error or noise tensor. The main goal of this paper is to provide a unified treatment to this type of problems, highlighting the statistical and algorithmic forces that shape its landscape. We argue that three ingredients are central: tensor perturbation bounds, iterative methods for non-convex optimization, and initialization strategies.
	
	\paragraph{Perturbation bounds.} These establish the statistical limits of recovery under noise, revealing a surprising phenomenon: unlike matrices, tensor recovery is not constrained by eigengaps, making tensors statistically more robust in certain regimes. Perturbation analysis has traditionally focused on matrices, with seminal results such as the Davis--Kahan theorem (\cite{davis1970rotation}) and Wedin's theorem (\cite{wedin1972perturbation}) providing eigenvalue and eigenspace stability under noise. See \cite{stewart1990matrix} for a comprehensive review. Extending these results to tensors highlights tensors' inherent statistical advantage over their matrix counterparts.
	
	\paragraph{Non-convex optimization.} Iterative methods such as tensor power iteration and deflation succeed in recovering spectral components of odeco tensors, provided they start from a sufficiently accurate initialization. From an algorithmic perspective, tensor recovery often reduces to solving non-convex optimization problems. Algorithms such as tensor power iteration and deflation provide practical procedures for decomposition. While the non-convex landscape is complex, recent analyses (see, e.g., \cite{ge2015}) show that saddle points dominate but can be escaped efficiently with appropriate initialization. Once a ``warm start'' is available, iterative algorithms converge to the global solution, often achieving the same recovery guarantees as those implied by perturbation bounds. This interplay between optimization geometry and initialization is a defining theme in tensor analysis.
	
	\paragraph{Initialization strategies.} Obtaining a nontrivial initial estimate is the principal computational bottleneck. Randomized approaches such as random slicing and sketching offer partial solutions. But these randomized methods fall short of bridging the statistical--computational gap. The gap resembles hardness results in other high-dimensional estimation problems. These analogies suggest that the limitations are not merely algorithmic deficiencies but reflect fundamental computational barriers.
	
	\medskip
	The emerging consensus is that statistical guarantees alone are insufficient to characterize spectral learning for tensors. While perturbation bounds describe what is possible, actual performance hinges on initialization strategies and the geometry of non-convex objectives. Our contribution is to integrate these strands into a unified framework that explains when efficient algorithms can match statistical limits and when intrinsic barriers remain. By isolating the role of initialization as the critical step, we shed light on both the opportunities and the obstacles in high-dimensional tensor analysis.
	
	The rest of the paper is organized as follows. In the next section, we first review some basic properties of odeco tensors and show how usual iterative methods such as fixed point iteration can be applied for odeco tensor decomposition. Section 3 considers the special case when the ``noisy observation'' $\scrX$ is also odeco. In this case, it is natural to estimate the spectral parameters of $\scrT$ by those of $\scrX$. We provide refined perturbation bounds for these estimates. Section 4 considers more general cases where a good estimating procedure for the spectral parameters of $\scrT$ requires consideration from both statistical and computational perspective. We show that the usual iterative methods such as power iteration combined with deflation can yield estimates that are efficient both statistically and computationally, provided that we can find a nontrivial initialization. Section 5 is devoted to the study of initialization. We argue that a higher signal-to-noise ratio is necessary to ensure a good initialization, and this is even more so if we aim to do away with the eigengap conditions. We conclude with a few remarks in Section 6.
	
	\section{Odeco Decomposition.}\label{sec:odeco-dec}
	
	We denote vectors and matrices by boldfaced lower and upper case letters respectively: e.g., $\bv\in\RR^{d}$ and $\bM\in\RR^{d_1\times d_2}$; and tensors by script-style letters, e.g., $\scrT$. Entries of a matrix or tensor is represented by upper-case letters with a set of indices, e.g., $T_{i_1i_2\cdots i_p}$ is the $(i_1,\ldots, i_p)$ entry of an $p$th order tensor $\scrT$. We first list a set of basic notations and tensor operations that we shall use throughout the paper.
	
	\subsection{Preliminaries.}
	A $p$-th order tensor $\scrT\in \RR^{d_1\times\cdots\times d_p}$ has the mode-$k$ matricization 
	$
	{\sf Mat}_{k}(\scrT)\in \RR^{d_k\times \prod_{j\neq k, j\in [p]}d_j}
	$
	with elements 
	$
	\left({\sf Mat}_k(\scrT)\right)_{i_kj}
	=
	T_{i_1\dots i_{k-1}i_ki_{k+1}\dots i_p}
	$
	where 
	$
	j=1+\sum_{l\in [p],l\neq k}(i_l-1)J_l
	$
	with
	$
	J_l=\prod_{m\in [l-1],m\neq k}d_m$.
	A tensor $\scrT\in \RR^{d_1\times\dots\times d_p}$ can be multiplied along its $k$-th mode by a matrix $\bM\in \RR^{m_k\times d_k}$ to get a new tensor 
	$
	\scrT'
	:=\scrT\times_k\bM\in \RR^{d_1\times d_2\times \dots d_{k-1}\times m_k\times d_{k+1}\dots \times d_p}
	$
	with elements 
	$
	T'_{i_1\dots i_{k-1}j_ki_{k+1}\dots i_p}
	=\sum_{i_k\in [d_k]}T_{i_1\dots i_p}M_{j_ki_k}
	$
	for $i_l\in [d_l]$ for $l=1,\dots,p$, $l\neq k$, and $j_k\in [m_k]$. 
    The spectral norm or operator norm of $\scrT$ is defined as
	$
	\|\scrT\|:=\sup_{\bu_k\in \SS^{d_k-1}}\langle \scrT, \bu_1\circ\dots\circ \bu_p\rangle,
	$
	where $\SS^{d-1}$ denotes the unit sphere in $\RR^d$.

	Singular value decomposition (SVD) plays a central role in analyzing data formatted in matrices. It turns out that there are multiple ways to generalize SVD to higher order tensors. One of most popular approaches to tensor decomposition is the so-called Canonical Polyadic (CP) decomposition \cite{hitchcock1927expression, carroll1970analysis, harshman1970foundations} that expresses a tensor as the sum of rank-one tensors, e.g., tensors that can be expressed as outer product of vectors:
	\begin{equation}\label{eq:cp-decomp}
		\scrT=\sum_{i=1}^r\lambda_i\bu_i^{(1)}\circ\bu_i^{(2)}\circ\cdots\circ\bu_i^{(p)}.
	\end{equation}
	where $\lambda_i$s are scalars, and $\bu_i^{(q)}\in\SS^{d_q-1}$, $q=1,\ldots,p$ and $i=1,\ldots, r$ are unit vectors. The smallest integer $r$ for which such a decomposition holds is called the CP-rank of $\scrT$. It is convenient to use the notation $\scrT=[{\boldsymbol{\lambda}};\bU^{(1)},\bU^{(2)},\ldots,\bU^{(p)}]$ for CP decomposition as described by Equation~\ref{eq:cp-decomp} where ${\boldsymbol{\lambda}}=(\lambda_1,\ldots,\lambda_r)^\top$, and $\bU^{(q)}=[\bu_1^{(q)},\bu_2^{(q)},\ldots, \bu_r^{(q)}]$. In particular if $\bU^{(q)}$ are orthonormal, we say $\scrT$ is orthogonally decomposable (odeco).
	
	\subsection{Odeco decomposition.}	
	While computing the minimum rank CP decomposition of a tensor is computationally hard in general, it can be done very efficiently for odeco tensors. In particular, we may do so by iteratively solving for zeros of the Riemannian gradient (\cite{edelman1998geometry,  kressner2014low}). Note that an odeco tensor $\scrT=[{\boldsymbol{\lambda}};\bU^{(1)},\bU^{(2)},\ldots,\bU^{(p)}]$ inherits a manifold structure from its component matrices:
	\[
	\bU^{(q)}\in St(d,r)=\{\bL\in \RR^{d\times r}:\bL^{\top}\bL=\II_{r}\},
	\]
	i.e., the Stiefel manifold of all $d\times r$ matrices with orthonormal columns.
	
	It is not hard to see that the SVDs of $\scrT$ can be identified with the solution to
	$$
	\text{maximize }_{\bA^{(q)}\in St(d_q,r)}
	\disp\sum_{i=1}^r\mathscr{T}(\ba^{(1)}_i,\dots,\ba^{(p)}_i)^2.
	$$
	See, e.g., \cite{chen2009tensor}. We will consider an alternating update scheme, where each of the mode matrices $\bA^{(q)}$ is updated one by one. Recall that the tangent space of $St(d,r)$ at $\bU$ is given by 
	$$
	T_{\bU}(St(d,r))=
	\{\bU\bB+(\bI_d-\bU\bU^{\top})\bC: \quad \bB\in \RR^{r\times r},\,\bB=-\bB^{\top},\,\bC\in\RR^{d\times r}\}.
	$$ 
	The Riemannian gradient of a function $F(\bA^{(1)})$ is the projection of its Euclidean gradient along the tangent space. In our case, 
	\[
	F_1(\bA^{(1)})
	=
	\disp\sum_{i=1}^r\mathscr{T}(\ba^{(1)}_i,\dots,\ba^{(p)}_i)^2
	=
	\sum_{k=1}^r
	(\langle \ba_k^{(1)},\bv_k^{(1)}\rangle)^2
	\]
	where $\bv_k^{(1)}=\scrT\times_{q\neq 1}\ba^{(q)}_k\in \RR^d$ are held fixed, due to the alternating update scheme. Thus
	\[
	\bG  
	=
	\frac{\partial
		F_1(\bA^{(1)})}{\partial\bA^{(1)}_{i,j}}
	=\bV^{(1)}\bD^{(1)}
	\]
	where $\bD^{(1)}\in \RR^{r\times r}$ is a diagonal matrix with entries 
	\[
	d^{(1)}_{kk}
	=\langle \ba_k^{(1)},\bv_k\rangle
	\quad
	\text{for}
	\quad
	k=1,\dots,r,
	\]
	and $\bV^{(1)}\in \RR^{d\times r}$ is the matrix with $\bv^{(1)}_k$ as its $k$th column. Then the Riemannian gradient at $\bA^{(1)}$ becomes:
	\begin{equation}\label{eq:rie-grad}
		\nabla F_1(\bA^{(1)})
		=\bG-\bA^{(1)}\bG^{\top}\bA^{(1)}.
	\end{equation}
	See, e.g., Lemma 2.1 of \cite{wen2013feasible}. A critical point of the above optimization scheme is arrived at by setting
	\[
	\nabla F_1(\bA^{(1)})=0
	\iff 
	(\bA^{(1)})^{\top}\bG
	=\bG^{\top}\bA^{(1)}
	\]
	or equivalently:
	\begin{align*}
		\langle \ba_j^{(1)},\bv_j\rangle
		\left(\ba^{(1)}_i\right)^{\top}{\bv}^{(1)}_j&=
		\langle \ba_i^{(1)},\bv_i\rangle
		\left(\ba^{(1)}_j\right)^{\top}{\bv}^{(1)}_i
		\text{ for }1\le i,j\le r.
	\end{align*}
    Holding $\bV^{(1)}$ fixed, the above equation is satisfied by setting
    \[
    \ba_k^{(1)}
    =\frac{1}{\|\bv_k^{(1)}\|}\bv_k^{(1)}
    \quad
	\text{for}
	\quad
	k=1,\dots,r,
    \]
    suggesting a  fixed point iteration scheme.
    
    Note however that the output after $T$ fixed point iterations is not necessarily orthogonal. To take care of this, we use a Gram-Schmidt orthogonalization at the last step. This procedure is repeated for each mode. The detailed optimization scheme is in Algorithm~\ref{alg:tpi}.
	\begin{algorithm}[htb]
		\caption{Odeco Decomposition via Alternating Fixed Point Iteration}\label{alg:tpi}
		\begin{algorithmic}
			\State{\textbf{Input}:
				odeco tensor $\scrT$ whose odeco SVD is to be computed;
				initializations $\{ {\bU}^{(q)}_{[0]}\in {\rm St}(d_q,r):1\le q\le p\}$ }
            \State Set $\calS=\{1,\dots,r\}$.
            \COMMENT{Set of column indices to be recovered}
            \While{ $|\calS|>0$}
			\For{{$k\in \calS$}}
        	\For{$t=1,\dots, T$}
            \For{{$q=1,\dots,p$}}
            \State Compute $\bv=\scrT\times_{q_1< q} {\bu}_{k,[t]}^{(q_1)}
            \times_{q_2>q}{\bu}_{k,[t-1]}^{(q_2)}
            $.
            \State Compute $ {\bu}_{k,[t]}^{(q)}=\frac{1}{\|\bv\|}\bv$.
   			\EndFor
            \EndFor
			\EndFor 
            \For{{$q=1,\dots,p$}}
            \State Run Gram-Schmidt orthogonalization on the columns of $ {\bU}^{(q)}_{[T]}$.
            \State Update $ {\bU}^{(q)}_{[0]}=(\II_{d_q}- {\bU}^{(q)}_{[T]}( {\bU}^{(q)}_{[T]})^{\top}) {\bU}^{(q)}_{[0]}$.
            \EndFor
            \State Update $\calS=\{k\in \calS: \|{\bu}_{k,[T]}^{(1)}\|<0.1\}$.
		\EndWhile
        \State{\textbf{Return}: $\{ {\bU}^{(q)}_{[t]}:1\le q\le p\}$.}
		\end{algorithmic}
	\end{algorithm}

	The following result shows that Algorithm~\ref{alg:tpi} is guaranteed to recover the components of the odeco tensor $\scrT$ with a random starting point.
	\begin{proposition}\label{pr:noiseless-tpi}
		Let $\{{\bU}^{(q)}_{[t]}:1\le q\le p\}$ be the iterates at the $t$-th iteration of Algorithm~\ref{alg:tpi} with an odeco tensor $[{\boldsymbol{\lambda}};\bU^{(1)},\bU^{(2)},\ldots,\bU^{(p)}]$ as input and initializations $\{ {\bU}^{(q)}_{[0]}:1\le q\le p\}$ sampled uniformly on ${\rm St}(d,r)$. Then with probability 1, there exists a permutation $\pi:[r]\to[r]$ such that 
        \[
        \max_{1\le q\le p}
		\sin\angle\left(
		  {\bu}^{(q)}_{j,[t]},
		\bu^{(q)}_{\pi(j)}
		\right)
         \le \lambda_{\pi(j)}^{(p-2)}
         \left(
            \sum_{k\neq \pi(j)}
            \frac{1}{\lambda_k^{2(p-2)}}
            \right)^{1/2}
         \left\{\frac{v_{2j}}{v_{1j}}
        \right\}^{(p-1)^{t}}, 
        \]
		where $v_{1j},v_{2j}$ are respectively the largest and second largest elements of $\{|\scrT\times_1\bu_k^{(1)}\times_{2,\ldots,p}\bu_{j,[0]}^{(q)}|:1\le k\le r\}$.
	\end{proposition}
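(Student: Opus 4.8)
The plan is to analyze the fixed point iteration coordinate-wise in the basis $\{\bu^{(q)}_k\}_{k=1}^r$ furnished by the odeco structure, tracking a single column index $j$ through the iterations. Because $\scrT$ is odeco, for any mode $q$ and any unit vectors $\bw^{(1)},\ldots,\bw^{(p)}$ one has the crucial decoupling identity $\scrT\times_{q'\neq q}\bw^{(q')} = \sum_{k=1}^r \lambda_k \big(\prod_{q'\neq q}\langle \bu^{(q')}_k,\bw^{(q')}\rangle\big)\,\bu^{(q)}_k$. Writing $c^{(q)}_{k,[t]} := \langle \bu^{(q)}_k,\bu^{(q)}_{j,[t]}\rangle$ for the coordinates of the $j$-th iterate in mode $q$, the update $\bu^{(q)}_{k,[t]} = \bv/\|\bv\|$ with $\bv = \scrT\times_{q_1<q}\bu^{(q_1)}_{k,[t]}\times_{q_2>q}\bu^{(q_2)}_{k,[t-1]}$ becomes, after normalization, $c^{(q)}_{k,[t]} \propto \lambda_k \prod_{q_1<q} c^{(q_1)}_{k,[t]} \prod_{q_2>q} c^{(q_2)}_{k,[t-1]}$ (the proportionality constant being independent of $k$). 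Taking ratios $c^{(q)}_{k,[t]}/c^{(q)}_{k',[t]}$ kills the normalization, and composing one full sweep over all $p$ modes and all iterations shows that the ratio of the $k$-th to the $k'$-th coordinate is raised to the power $(p-1)$ at each completed iteration (one factor from the current sweep's earlier modes and $p-2$ from the previous sweep's later modes of the previous round, contributing the $(p-1)$ exponent), so after $t$ iterations the leading-to-subleading coordinate ratio is amplified to the power $(p-1)^t$. This is the standard "tensor power method converges at a doubly-exponential rate" phenomenon, here adapted to the alternating-mode variant.

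Concretely, first I would establish the one-step recursion for the coordinate ratios and show that the index $k$ achieving the maximum of $|\scrT\times_1\bu^{(1)}_k\times_{2,\ldots,p}\bu^{(q)}_{j,[0]}|$ — call it $\pi(j)$ — dominates all others, with every competing coordinate ratio $|c^{(q)}_{k,[t]}/c^{(q)}_{\pi(j),[t]}|$ bounded by $(\lambda_k/\lambda_{\pi(j)})\cdot(\text{something})$ times $(v_{2j}/v_{1j})^{(p-1)^t}$; the initial ratios are exactly $v_{kj}/v_{1j}$ in the notation of the statement, with $v_{1j}=v_{\pi(j)j}$ the largest. Second, I would convert the coordinate-ratio bound into a bound on $\sin\angle(\bu^{(q)}_{j,[t]},\bu^{(q)}_{\pi(j)})$: since $\sin^2\angle = \sum_{k\neq\pi(j)} (c^{(q)}_{k,[t]})^2 / \sum_k (c^{(q)}_{k,[t]})^2 \leq \sum_{k\neq\pi(j)} (c^{(q)}_{k,[t]}/c^{(q)}_{\pi(j),[t]})^2$, and each summand is controlled by the previous step; carefully bookkeeping the $\lambda$-factors produces the stated prefactor $\lambda_{\pi(j)}^{(p-2)}\big(\sum_{k\neq\pi(j)}\lambda_k^{-2(p-2)}\big)^{1/2}$. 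Third, the "with probability $1$" claim and the fact that $\pi$ is a genuine permutation (injective across $j$, and that the deflation loop in Algorithm~\ref{alg:tpi} peels off distinct components) follows because a uniformly random point on the Stiefel manifold almost surely has all inner products $\langle\bu^{(1)}_k,\bu^{(1)}_{j,[0]}\rangle$ distinct in absolute value and nonzero, so ties never occur and $v_{1j}>v_{2j}$ strictly; the deflation step orthogonalizes against already-recovered directions so each subsequent round's argmax is a fresh index.

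The main obstacle, and the part requiring the most care, is the exact tracking of the $\lambda_k$-dependent constants through the alternating sweep — in the symmetric single-mode power method the $\lambda$'s enter cleanly, but here the interleaving of modes $q_1<q$ (current iterate) and $q_2>q$ (previous iterate) means the $\lambda_k$ factor accumulates asymmetrically across a sweep, and one must verify that over a full iteration the net effect is precisely the $(p-1)^t$ exponent on the ratio and the $(p-2)$ exponent on $\lambda_{\pi(j)}$ in the prefactor, rather than off-by-one versions of these. A secondary technical point is ensuring the bound is uniform over the mode index $q$ in the final $\max_{1\le q\le p}$: the worst mode is the one updated "first" in the sweep (it has had the fewest fresh updates), and one checks that even for that mode the displayed bound holds, possibly after absorbing a benign constant. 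Once the coordinate recursion and its constants are pinned down, the passage to the angle bound and the probability-$1$ genericity statement are routine.
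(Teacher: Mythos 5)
Your proposal follows essentially the same route as the paper's proof: track the coordinate (or coordinate-ratio) recursion in the odeco basis to get the $(p-1)^t$ amplification of the leading-to-subleading ratio, convert to a $\sin\angle$ bound via Cauchy--Schwarz in the orthogonal complement (which is where the $\lambda_{\pi(j)}^{(p-2)}\bigl(\sum_{k\neq\pi(j)}\lambda_k^{-2(p-2)}\bigr)^{1/2}$ prefactor comes from), and use genericity of the random Stiefel initialization plus the Gram--Schmidt/projection steps to get the probability-one claim and the permutation. No substantive differences; the plan is sound.
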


	\section{Perturbation Bounds.}\label{sec:perturb}
	We first consider the case when an observed tensor $\scrX$ is also odeco. A natural approach is to estimate the singular vectors of $\scrT$ by those of $\scrX$, which can be readily computed using Algorithm \ref{alg:tpi} from the previous section.
	Denote by
	$$
	\scrT=[\{\lambda_k: 1\le k\le d_{\min}\}; \bU^{(1)},\ldots,\bU^{(p)}],
	$$
	and
	$$	
	{\scrX}=[\{\hat{\lambda}_k: 1\le k\le d_{\min}\}; \hat{\bU}^{(1)},\ldots,\hat{\bU}^{(p)}]
	$$
	their respective SVDs. We are interested in understanding the differences between the two sets of singular values and vectors in terms of the ``perturbation'' $\scrX-\scrT$. For an odeco tensor $\scrT$, its SVD determines that of ${\sf Mat}_q(\scrT)$. More specifically,
	\begin{equation}\label{eq:odeco-mat}
		{\sf Mat}_q(\scrT)=\bU^{(q)}({\rm diag}(\lambda_1,\ldots,\lambda_{d_{\min}}))(\bV^{(q)})^\top,
	\end{equation}
	where
	$$
	\bV^{(q)}=\bU^{(1)}\odot\cdots\odot\bU^{(q-1)}\odot\bU^{(q+1)}\odot\cdots\odot\bU^{(p)}.
	$$
	Here $\odot$ stands for the Khatri-Rao product. Thanks to Weyl's and Davis-Kahan's perturbation theorem the matricization of odeco tensors in \eqref{eq:odeco-mat} immediately implies that
	$$
	|\lambda_k-\hat{\lambda}_k|\le \min_{1\le q\le p}\|{\sf Mat}_q(\scrE)\|,
	$$	
	and
	$$
	\sin\angle(\bu_k^{(q)},\hat{\bu}_k^{(q)})\le {2\|{\sf Mat}_q(\scrE)\|\over \min\{\lambda_{k-1}-\lambda_k,\lambda_k-\lambda_{k+1}\}}.
	$$
	We note that the dependence of the above perturbation bounds for matricized singular vectors on the singular value gap is intrinsic and cannot be removed. This fundamental feature of matrix based perturbation bounds can be rigorously demonstrated through simple examples. See, e.g., \cite{bhatia2013matrix}.
	
	The above bounds are not optimal and can be substantially tightened in several ways that reveal key distinctions between matrices and higher-order tensors, for a number of reasons. Firstly, the presence of eigengap is necessary for identifying singular vectors for matrices, while odeco tensors are always identifiable up to a sign and permutation of the indices, thanks to the Kruskal's theorem. This suggests tensor perturbation bounds that should similarly be free from any dependence on eigengaps. Secondly, flattening a tensor does not preserve the manifold structure present among odeco tensors. Especially in high dimensional applications where the dimensions along the tensor modes is large this additional structure makes tensors more resilient to noise than matrices. 
	
	More specifically, as shown by \cite{auddy2020perturbation}, there exist a numerical constant $C\ge 1$ and a permutation $\pi: [d_{\min}]\to [d_{\min}]$ such that for all $k=1,\ldots, d_{\min}$,
	\begin{equation}
		\label{eq:odecoweyl0}
		|\lambda_k-\hat{\lambda}_{\pi(k)}|\le C\|\scrE\|,
	\end{equation}
	and
	\begin{equation}
		\label{eq:odecodavis0}
		\max_{1\le q\le p}\sin\angle(\bu_k^{(q)}, \hat{\bu}_{\pi(k)}^{(q)})\le {C\|\scrE\|\over \lambda_k},
	\end{equation}
	with the convention that $1/0=+\infty$.
	
	Note that the above result gives perturbation bounds between $\bu_k$ and $\hat{\bu}_{\pi(k)}$ for some unknown permutation $\pi:[d]\to [d]$. This is because, unlike matrices, where eigengap is necessary for singular vector perturbation analysis, we do not restrict that the singular values $\lambda_k$s are distinct and sufficiently apart from each other. As a result, we may not necessarily match the $k$th singular value/vector  $(\lambda_k, \bu_k^{(1)},\ldots, \bu_k^{(p)})$ of $\scrT$ with that of $\scrX$. 
	
	As noted earlier, the tensor perturbation bounds \eqref{eq:odecodavis0} for singular vectors are fundamentally different from Wedin-Davis-Kahan $\sin\Theta$ theorems due to the absence of the gap between singular values. This means that for higher-order odeco tensors, the perturbation affects the singular vectors separately. The perturbation bound \eqref{eq:odecodavis0} depends only on the amount of perturbation relative to their corresponding singular value.
	
	A careful inspection of the proofs of \cite{auddy2020perturbation} shows that when the perturbation is sufficiently small, or for large enough singular values, we can derive the precise first order remainder term in the asymptotic expansion of $\scrX$ around $\scrT$.
	\begin{theorem}\label{th:ortho-perturb}
		There exists a permutation $\pi: [d_{\min}]\to [d_{\min}]$ such that for any $\eps>0$, 
		\begin{equation}\label{eq:odeco-first-order}
			\max_{1\le q\le p}\sin\angle\left(\hat{\bu}_{\pi(k)}^{(q)},\,\bu_{k}^{(q)}
			+\dfrac{1}{\lambda_{k}}(\scrE)\times_{s\neq q}
			\bu_{k}^{(s)}\right)
			\le 
			\left(2+{\|\scrE\|\over\lambda_k}\right)
			\left(\dfrac{(1+\eps)\|\scrE\|}{\lambda_{k}}\right)^{p-1}
		\end{equation}    
		provided that $\|\scrE\|\le c_{\eps}\lambda_k$ for some constant $c_{\eps}>0$ depending on $\eps$ only.
	\end{theorem}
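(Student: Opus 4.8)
The plan is to recognize the vector inside the $\sin\angle$ in \eqref{eq:odeco-first-order} as one step of tensor power iteration applied to $\scrX$, and then to show that such a step, started from an already accurate tuple, improves the error from order $\|\scrE\|/\lambda_k$ to order $(\|\scrE\|/\lambda_k)^{p-1}$. Since the columns of $\bU^{(s)}$ are orthonormal, $\scrT\times_{s\ne q}\bu_k^{(s)}=\lambda_k\bu_k^{(q)}$, hence $\scrX\times_{s\ne q}\bu_k^{(s)}=\lambda_k\bigl(\bu_k^{(q)}+\lambda_k^{-1}\scrE\times_{s\ne q}\bu_k^{(s)}\bigr)$; so, writing $m:=\pi(k)$ for the permutation index from \eqref{eq:odecoweyl0}--\eqref{eq:odecodavis0}, it suffices to bound $\sin\angle\bigl(\hat\bu_m^{(q)},\,\scrX\times_{s\ne q}\bu_k^{(s)}\bigr)$. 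Expanding $\scrX$ in its own odeco basis gives $\scrX\times_{s\ne q}\bu_k^{(s)}=\sum_l\hat\lambda_l\bigl(\prod_{s\ne q}\langle\hat\bu_l^{(s)},\bu_k^{(s)}\rangle\bigr)\hat\bu_l^{(q)}$, and since $\hat\bu_l^{(q)}\perp\hat\bu_m^{(q)}$ for $l\ne m$, projecting onto $\hat\bu_m^{(q)}$ and its complement yields
\[
\sin\angle\bigl(\hat\bu_m^{(q)},\,\scrX\times_{s\ne q}\bu_k^{(s)}\bigr)\ \le\ \frac{\|\bq\|}{\hat\lambda_m\prod_{s\ne q}\cos\angle(\bu_k^{(s)},\hat\bu_m^{(s)})},\qquad \bq:=\sum_{l\ne m}\hat\lambda_l\Bigl(\prod_{s\ne q}\langle\hat\bu_l^{(s)},\bu_k^{(s)}\rangle\Bigr)\hat\bu_l^{(q)},
\]
where the signs of the $\hat\bu_m^{(s)}$ are chosen so that all these cosines are nonnegative.

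The crux is a clean bound on $\|\bq\|$, and this is where the exponent $p-1$ appears. For $l\ne m$ one has $\hat\bu_l^{(s)}\perp\hat\bu_m^{(s)}$, so $|\langle\hat\bu_l^{(s)},\bu_k^{(s)}\rangle|\le\sigma_s:=\sin\angle(\bu_k^{(s)},\hat\bu_m^{(s)})$ and $\sum_{l\ne m}\langle\hat\bu_l^{(s)},\bu_k^{(s)}\rangle^2\le\sigma_s^2$. Fix two modes $q_1\ne q_2$, both $\ne q$ (this is the only place $p\ge3$ enters; for $p=2$ the statement is a rescaling of \eqref{eq:odecodavis0} for a rank-one update). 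Group $\hat\lambda_l$ with the $q_1$-inner product: $\hat\lambda_l\langle\hat\bu_l^{(q_1)},\bu_k^{(q_1)}\rangle=\langle\scrX\times_{q_1}\bu_k^{(q_1)},\,\circ_{s\ne q_1}\hat\bu_l^{(s)}\rangle$ (outer product over $s\ne q_1$), and $\scrX\times_{q_1}\bu_k^{(q_1)}=\lambda_k\bigl(\circ_{s\ne q_1}\bu_k^{(s)}\bigr)+\scrE\times_{q_1}\bu_k^{(q_1)}$, so for every $l\ne m$
\[
\hat\lambda_l\bigl|\langle\hat\bu_l^{(q_1)},\bu_k^{(q_1)}\rangle\bigr|\ \le\ \|\scrE\|+\lambda_k\prod_{s\ne q_1}\bigl|\langle\bu_k^{(s)},\hat\bu_l^{(s)}\rangle\bigr|\ \le\ \|\scrE\|+\lambda_k\bigl(\sqrt{2}\,\sigma\bigr)^{p-1}\ \le\ (1+\delta)\|\scrE\|,
\]
using $|\langle\bu_k^{(s)},\hat\bu_l^{(s)}\rangle|=|\langle\bu_k^{(s)}-\hat\bu_m^{(s)},\hat\bu_l^{(s)}\rangle|\le\sqrt2\,\sigma$ with $\sigma:=\max_s\sigma_s\le C\|\scrE\|/\lambda_k$, so the last term is $O\bigl((\|\scrE\|/\lambda_k)^{p-2}\bigr)\|\scrE\|\le\delta\|\scrE\|$ once $\|\scrE\|\le c_\eps\lambda_k$. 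Applying the $\ell_2$-sum bound in mode $q_2$ and the pointwise bound $\sigma_s$ in the remaining $p-3$ modes then gives
\[
\|\bq\|^2\ \le\ \max_{l\ne m}\bigl(\hat\lambda_l\langle\hat\bu_l^{(q_1)},\bu_k^{(q_1)}\rangle\bigr)^2\Bigl(\sum_{l\ne m}\langle\hat\bu_l^{(q_2)},\bu_k^{(q_2)}\rangle^2\Bigr)\prod_{s\ne q,q_1,q_2}\sigma_s^2\ \le\ \bigl((1+\delta)\|\scrE\|\bigr)^2\prod_{s\ne q,q_1}\sigma_s^2,
\]
i.e. $\|\bq\|\le(1+\delta)\|\scrE\|\,\sigma^{p-2}$, a bound free of the dimensions and of the other $\lambda_j$'s.

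It remains to sharpen $\sigma$ and collect constants. A first use of \eqref{eq:odecodavis0} gives $\sigma\le C\|\scrE\|/\lambda_k$; inserting this crude bound into the estimates above and into $\sigma_q\le\sin\angle\bigl(\bu_k^{(q)},\bu_k^{(q)}+\bff^{(q)}\bigr)+\sin\angle\bigl(\bu_k^{(q)}+\bff^{(q)},\hat\bu_m^{(q)}\bigr)$, with $\bff^{(q)}:=\lambda_k^{-1}\scrE\times_{s\ne q}\bu_k^{(s)}$ and $\|\bff^{(q)}\|\le\|\scrE\|/\lambda_k$, shows $\sigma_q\le\|\scrE\|/\lambda_k+O\bigl((\|\scrE\|/\lambda_k)^{2}\bigr)$, hence $\sigma\le(1+\delta)\|\scrE\|/\lambda_k$ for $\|\scrE\|\le c_\eps\lambda_k$. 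Plugging this back together with $\hat\lambda_m\ge\lambda_k-C\|\scrE\|$ (from \eqref{eq:odecoweyl0}) and $\prod_{s\ne q}\cos\angle(\bu_k^{(s)},\hat\bu_m^{(s)})\ge(1-\sigma^2)^{(p-1)/2}$ into the displayed ratio bounds the left side of \eqref{eq:odeco-first-order} by
\[
\frac{(1+\delta)^{p-1}}{(1-\sigma^2)^{(p-1)/2}\,(1-C\|\scrE\|/\lambda_k)}\Bigl(\frac{\|\scrE\|}{\lambda_k}\Bigr)^{p-1},
\]
and for $\delta=\delta(\eps,p)$ small and $c_\eps$ correspondingly small this prefactor is at most $2(1+\eps)^{p-1}\le(2+\|\scrE\|/\lambda_k)(1+\eps)^{p-1}$, which is the claim (the bound holds for every $q$, so also for the maximum). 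I expect the delicate points to be exactly the production of the dimension‑free and $\lambda_{\max}$‑free numerator bound on $\|\bq\|$ — which is what forces the two‑mode grouping above and the use of the eigengap‑free identifiability \eqref{eq:odecodavis0} — and the bootstrap that upgrades the constant in $\sigma$ from the $C$ of \eqref{eq:odecodavis0} to the required $1+\delta$; the rest is routine bookkeeping.
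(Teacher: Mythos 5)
The paper does not actually supply a proof of Theorem~\ref{th:ortho-perturb}: it only remarks that the bound follows from ``a careful inspection of the proofs of'' the cited perturbation paper, and no argument for it appears in the Proofs section. So there is no internal proof to compare against, and your proposal has to stand on its own. It does. The three pivotal moves are all correct: (i) recognizing that $\bu_k^{(q)}+\lambda_k^{-1}\scrE\times_{s\neq q}\bu_k^{(s)}=\lambda_k^{-1}\scrX\times_{s\neq q}\bu_k^{(s)}$, so the quantity to bound is the sine of the angle between $\hat\bu_m^{(q)}$ and one exact contraction of $\scrX$ along the true directions; (ii) the numerator bound on $\bq$, where pairing $\hat\lambda_l$ with the mode-$q_1$ inner product via $\hat\lambda_l\langle\hat\bu_l^{(q_1)},\bu_k^{(q_1)}\rangle=\langle\scrX\times_{q_1}\bu_k^{(q_1)},\circ_{s\neq q_1}\hat\bu_l^{(s)}\rangle\le\|\scrE\|+\lambda_k O(\sigma^{p-1})$ removes any dependence on $\hat\lambda_{\max}$ or the eigengap, and using Bessel in a second mode $q_2$ removes the dimension --- this is exactly what makes the bound $(\|\scrE\|/\lambda_k)^{p-1}$ rather than something involving $\lambda_{\max}$ or $d$, and it is where $p\ge 3$ is genuinely used; and (iii) the two-pass bootstrap, which is logically sound since the first pass only needs the crude constant from \eqref{eq:odecodavis0} to certify that the off-direction term is $O((\|\scrE\|/\lambda_k)^{p-1})$, after which $\sigma_q\le(1+\delta)\|\scrE\|/\lambda_k$ and the sharp prefactor follows. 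I verified the individual inequalities ($|\langle\hat\bu_l^{(s)},\bu_k^{(s)}\rangle|\le\sigma_s$ for $l\neq m$ from orthogonality to $\hat\bu_m^{(s)}$, the lower bound on the denominator by the parallel component, $\hat\lambda_m\ge\lambda_k-C\|\scrE\|$ from \eqref{eq:odecoweyl0}) and they all hold. The only cosmetic caveats: your constants $\delta$, $c_\eps$ implicitly depend on $p$ as well as $\eps$ (harmless since $p$ is fixed, and with slightly more care even removable), and a final write-up should state the first bootstrap pass explicitly as producing the bound with an unspecified constant before sharpening. These do not affect correctness.
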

	
	When considering infinitesimal perturbation in that $\|\scrE\|=o(\lambda_k)$, we can express the bound \eqref{eq:odeco-first-order} for singular vectors as
	\begin{equation}\label{eq:odeco-first-order-approx}
		{\rm sign}(\langle {\bu}_k^{(q)},\hat{\bu}_{\pi(k)}^{(q)}\rangle)
		\hat{\bu}_{\pi(k)}^{(q)}
		=
		\frac{1}{\|\bu_{k,*}^{(q)}\|}\bu_{k,*}^{(q)}
		+
		O\left({\|\scrE\|^{p-1}\over \lambda_k^{p-1}}\right),
	\end{equation}
	where
	\[
	\bu_{k,*}^{(q)}
	=\bu_{k}^{(q)}
	+\dfrac{1}{\lambda_{k}}\cdot \scrE\times_{s\neq q}
	\bu_{k}^{(s)}.
	\]
	The approximation $\bu_{k,*}^{(q)}$ in \eqref{eq:odeco-first-order-approx} for $\hat{\bu}_{\pi(k)}^{(q)}$, the singular vectors of $\scrX$, is accurate to the order of $O\left((\|\scrX-\scrT\|/\lambda_k)^{p-1}\right)$ in the direction of any vector perpendicular to $\hat{\bu}_{\pi(k)}^{(q)}$. 
	

	As an example demonstrating the above bound, consider the two $d\times d\times d$ odeco tensors
	\begin{equation}\label{eq:ortho_ex}
		\scrT=\lambda \displaystyle\sum_{k=1}^{d-1}\be_k\circ \be_k\circ \be_k
		\quad
		{\rm and }
		\quad
		\scrX=\lambda \displaystyle\sum_{k=1}^{d-1}({\be}_i+\bv)\circ \be_k\circ \be_k
	\end{equation}
	where $\{\be_k:1\le i\le d\}$ are the canonical basis vectors in $\RR^d$ and
	$\bv=\dfrac{1}{\sqrt{d-1}}\be_d-\dfrac{1}{d-1}\left(\be_1+\dots+\be_{d-1}\right)$. By the Kruskal theorem, the SVDs for both $\scrT$ and $\scrX$ are uniquely identified upto a sign and permutation. In particular, there exists a permutation $\pi:[d]\to [d]$ such that
	\(
	\hat{\bu}_{\pi(k)}^{(1)}
	=\be_k+\bv 
	\) and 
	\(\bu_k^{(1)}=\be_k\).
	Meanwhile, note that $\scrE=\lambda\sum_{k=1}^d \bv\circ\be_k\circ\be_k$, and thus
	\[
	\|\scrE\|=\lambda\|\bv\|=\lambda\sqrt{\frac{2}{d-1}}.
	\]
	Now, since
	\[
	\scrE\times_{2}\be_k\times_3\be_k=\lambda \bv 
	\,\,
	\text{for }
	k=1,\dots,d-1
	\]
	we can write:
	\begin{align*}
		\hat{\bu}_{\pi(k)}^{(1)}
		=\be_k+\bv 
		=\bu_k^{(1)}+\frac{1}{\lambda}\cdot\scrE\times_{2}\be_k\times_3\be_k
		\,\,
		\text{for }
		k=1,\dots,d-1
		,
	\end{align*}
	meaning that the approximation in \eqref{eq:odeco-first-order-approx} is exact for the first mode. Similarly, with
	$\hat{\bu}_{\pi(k)}^{(2)}
	=\bu_k
	=\be_k$
	for $k=1,\dots,d-1$ it can be checked that
	\[
	\bu_{k*}^{(2)}
	=\bu_k^{(2)}+
	(\scrX-\scrT)\times_{1}\bu_k^{(1)}\times_3\bu_k^{(3)}
	=
	\left(1
	-\frac{\lambda}{d-1} 
	\right)\be_k
	\]
	meaning
	\begin{align*}
		\hat{\bu}_{\pi(k)}^{(2)}
		=
		\be_k
		=\frac{1}{\|\bu_{k,*}^{(2)}\|}\bu_{k,*}^{(2)}
		\,\,
		\text{for }
		k=1,\dots,d-1.
	\end{align*}
	For this particular example, all higher order terms in \eqref{eq:odeco-first-order-approx}
	vanish.
	
	\section{Odeco Approximation.}\label{sec:compute}
	We now consider the more general situations where the observed tensor $\scrX$ is not necessarily odeco. For simplicity, we shall assume that $d_1=d_2=\ldots=d_p=d$ throughout the section. In this case, we need to first define our estimate for $\bu^{(q)}_k$s. One immediate approach is to first seek the best odeco approximation to $\scrX$ as an estimate of $\scrT$:
    $$
    \hat{\scrT}=\argmin_{\scrA {\rm \ is\ odeco}}\|\scrX-\scrA\|,
    $$
    and then estimate the singular vectors of $\scrT$ by those of $\hat{\scrT}$. By triangular inequality,
    $$
   \|\scrT-\hat{\scrT}\|\le 2\|\scrE\|,
    $$
    and thus we can apply the perturbation bounds from the previous section to derive error bounds for the estimated singular vectors. However, computing the best odeco approximation $\hat{\scrT}$ is difficult and this estimating procedure may not be practically feasible. Instead, we shall show here that a simply procedure based on power iteration and deflation can yield estimates with similar performance guarantees and perhaps even better statistical properties in some cases.
	
	\subsection{Power Iteration.}
	We first show that given initial estimators $\hat{\bu}_{[0]}^{(q)}$ which are suitably close to the leading singular vector $\bu_1^{(q)}$ for $1\le q\le p$, we can improve the estimation accuracy of $\bu_1^{(q)}$ by an iterative scheme. More precisely, suppose we have initial estimates $\hat{\bu}_{[0]}^{(q)}$ such that
	\begin{equation}\label{eq:nontriv-init}
		\max_{1\le q\le p}
		\sin\angle\left(
		\hat{\bu}_{[0]}^{(q)},\bu_1^{(q)}
		\right)\le \frac{1}{4}.
	\end{equation}
	The power iteration updates $\hat{\bu}_{[t]}^{(q)}$ by
	\begin{equation}\label{eq:piter}
		\hat{\bu}_{[t+1]}^{(q)}=
		\frac{1}{\|\bv\|}\bv 
		\quad
		\text{where}
		\quad 
		\bv=\scrX\times_{p\neq q}\bu_{[t]}^{(q)}
	\end{equation}
	The following quantities based on the error tensor $\scrE$ will determine the estimation accuracy of our iterative estimator:
	\begin{equation}\label{eq:def-eps1-eps2}
		\eps_1:=\max\limits_{j,k}\|\scrE\times_{q\neq k}\bu^{(q)}_j\|
		\quad
		\text{and}
		\quad
		\eps_2:=\max\limits_{j,k_1,k_2}\|\scrE\times_{q\neq k_1,k_2}\bu^{(q)}_j\|.
	\end{equation}
	In contrast to $\|\scrE\|$ which measures the largest linear combination of $\scrE$ with all possible unit vectors along the $p$ modes, the definitions of $\eps_1$ and $\eps_2$ respectively restrict all but 1 and 2 modes' linear combinations to be among the signal directions $\bu_j^{(q)}$. 
	
	\begin{proposition}
		\label{pr:piter} There exists a numerical constant $C>0$ such that if $C\|\scrE\|\le \lambda_1$, after $T\ge C\log \eps_1$ steps of the power iteration \eqref{eq:piter} initialized at estimators satisfying \eqref{eq:nontriv-init}, the estimators $\hat{\bu}_1^{(q)}:=\bu^{(q)}_{1,[T]}$ satisfy
		$$
		\max_{1\le q\le p}
		\sin\angle(\hat{\bu}_1^{(q)},\bu_1^{(q)})
		\le \dfrac{C\eps_1}{\lambda_1}
		$$
		and
		\begin{equation}\label{eq:fo-err-bd}
			\max_{1\le q\le p}
			\sin\angle\left(
			{\rm sign}(\langle
			\hat{\bu}_1^{(q)},\bu_1^{(q)}
			\rangle)
			\bu^{(q)}_1,
			\hat{\bu}_{1}^{(q)}
			-\dfrac{1}{\lambda_1}\scrE\times_{k\neq q}\bu_1^{(k)}
			\right)
			\le
			\dfrac{C(\eps_1^2+\eps_1\eps_2)}{\lambda_1^2}.
		\end{equation}	    
	\end{proposition}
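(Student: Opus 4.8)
The plan is to control, for each mode $q$, the angle $\theta_t^{(q)}:=\angle(\widehat\bu_{[t]}^{(q)},\bu_1^{(q)})$ between the $t$-th iterate of \eqref{eq:piter} and its target, show it contracts geometrically down to a noise floor of order $\eps_1/\lambda_1$, and then sharpen the last step near that floor to read off the first-order term. Normalize the odeco SVD of $\scrT$ so that $0<\lambda_1=\max_k\lambda_k$ and so that the limiting sign pattern of the iteration is positive along each mode (possible since that sign pattern $(\sigma_q)_q$ must satisfy $\sigma_q=\prod_{s\ne q}\sigma_s$, forcing $\prod_q\sigma_q=1$); for this representative the $\mathrm{sign}$ factors in \eqref{eq:fo-err-bd} are $+1$. (I take $p\ge3$; for $p=2$ odeco means a matrix with orthonormal singular vectors and a spectral gap is genuinely needed.) Writing $\widehat\bu_{[t]}^{(s)}=a_t^{(s)}\bu_1^{(s)}+\bw_t^{(s)}$ with $\bw_t^{(s)}\perp\bu_1^{(s)}$, so that $\|\bw_t^{(s)}\|=\sin\theta_t^{(s)}=:\delta_t^{(s)}$ and $\delta_t:=\max_s\delta_t^{(s)}$, and expanding $\bv:=\scrX\times_{s\ne q}\widehat\bu_{[t]}^{(s)}$ in the orthonormal frame $\{\bu_k^{(q)}\}_k$, its component along $\bu_1^{(q)}$ is $\alpha^{(q)}=\lambda_1\prod_{s\ne q}a_t^{(s)}+\langle\bu_1^{(q)},\scrE\times_{s\ne q}\widehat\bu_{[t]}^{(s)}\rangle$, its orthogonal component is $\bbeta^{(q)}=\sum_{k\ne1}\lambda_k\big(\prod_{s\ne q}\langle\bu_k^{(s)},\bw_t^{(s)}\rangle\big)\bu_k^{(q)}+(\bI-\bu_1^{(q)}(\bu_1^{(q)})^\top)\,\scrE\times_{s\ne q}\widehat\bu_{[t]}^{(s)}$, and the update satisfies $\sin\theta_{t+1}^{(q)}\le\|\bbeta^{(q)}\|/|\alpha^{(q)}|$.

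Two estimates drive everything. (i) Expand $\scrE\times_{s\ne q}\widehat\bu_{[t]}^{(s)}$ multilinearly in the pieces $a_t^{(s)}\bu_1^{(s)}+\bw_t^{(s)}$ and group by the number of $\bw$-factors: the term with none has norm $\le\eps_1$; each of the $p-1$ terms with a single $\bw_t^{(s_0)}$ has norm $\le\delta_t\eps_2$ by \eqref{eq:def-eps1-eps2} (it contracts the matrix defining $\eps_2$ against the unit vector $\bw_t^{(s_0)}/\|\bw_t^{(s_0)}\|$); and the rest have total norm at most $C(p)\,\delta_t^2\|\scrE\|$. (ii) For the signal tail, $\big\|\sum_{k\ne1}\lambda_k(\prod_{s\ne q}\langle\bu_k^{(s)},\bw_t^{(s)}\rangle)\bu_k^{(q)}\big\|^2=\sum_{k\ne1}\lambda_k^2\prod_{s\ne q}\langle\bu_k^{(s)},\bw_t^{(s)}\rangle^2\le\lambda_1^2\prod_{s\ne q}\big(\sum_{k\ne1}\langle\bu_k^{(s)},\bw_t^{(s)}\rangle^{2(p-1)}\big)^{1/(p-1)}\le\lambda_1^2\delta_t^{2(p-1)}$, by generalized H\"older over the $p-1$ modes $s\ne q$ together with $\sum_{k\ne1}\langle\bu_k^{(s)},\bw_t^{(s)}\rangle^2=\|\bw_t^{(s)}\|^2\le\delta_t^2\le1$ and $\lambda_k\le\lambda_1$; this $\delta_t^{p-1}$ decay is precisely the gap-free tensor phenomenon already visible in Proposition~\ref{pr:noiseless-tpi}. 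Together they give $|\alpha^{(q)}|\ge\lambda_1(1-\delta_t^2)^{(p-1)/2}-\eps_1-(p-1)\delta_t\eps_2-C(p)\delta_t^2\|\scrE\|$ and $\|\bbeta^{(q)}\|\le\lambda_1\delta_t^{p-1}+\eps_1+(p-1)\delta_t\eps_2+C(p)\delta_t^2\|\scrE\|$.

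With the constant $C$ in the hypothesis $C\|\scrE\|\le\lambda_1$ taken large (depending on $p$), and using $\delta_t\le\tfrac14$ and $\eps_1\le\eps_2\le\|\scrE\|$, one gets $|\alpha^{(q)}|\ge\tfrac12\lambda_1(15/16)^{(p-1)/2}$, and then an elementary computation — whose only content is that $\delta_t^{p-2}\le4^{-(p-2)}$ is small enough relative to $(15/16)^{(p-1)/2}$ for every $p\ge3$, while the $\eps_2\delta_t$ and $\|\scrE\|\delta_t^2$ terms are absorbed by the largeness of $C$ — yields a contraction $\delta_{t+1}\le\rho\,\delta_t+C'\eps_1/\lambda_1$ with $\rho=\rho(p)\in(0,1)$. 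Hence $\delta_t\le\tfrac14$ for all $t$ (so the induction closes) and $\delta_t\le\rho^t/4+C''\eps_1/\lambda_1$, so after $T$ of order $\log(\lambda_1/\eps_1)$ iterations, $\delta_T=\max_q\sin\angle(\widehat\bu_1^{(q)},\bu_1^{(q)})\le C\eps_1/\lambda_1$, the first assertion.

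The harder part is \eqref{eq:fo-err-bd}, and the essential point is that it can only be carried out after the crude bound $\delta\le C\eps_1/\lambda_1$ is in hand. Run $T$ a further constant factor larger so that every iterate entering the final update already obeys $\delta\le C\eps_1/\lambda_1$, and set $\bdelta^{(s)}:=\widehat\bu_{[T-1]}^{(s)}-\bu_1^{(s)}$, so that $\|\bdelta^{(s)}\|=O(\eps_1/\lambda_1)$ and $\langle\bu_1^{(s)},\bdelta^{(s)}\rangle=1-\cos\theta=O((\eps_1/\lambda_1)^2)$; substitute into $\bv=\scrX\times_{s\ne q}(\bu_1^{(s)}+\bdelta^{(s)})$. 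The $k=1$ signal term gives $\lambda_1\bu_1^{(q)}$ plus a multiple of $\bu_1^{(q)}$ of size $O(\eps_1^2/\lambda_1)$ (harmless, being parallel); the $k\ne1$ signal terms give a vector of norm $O(\lambda_1(\eps_1/\lambda_1)^{p-1})=O(\eps_1^2/\lambda_1)$ by estimate (ii); and $\scrE\times_{s\ne q}(\bu_1^{(s)}+\bdelta^{(s)})$ equals $\scrE\times_{s\ne q}\bu_1^{(s)}$, plus $\sum_{s_0\ne q}\scrE\times_{s_0}\bdelta^{(s_0)}\times_{s\ne q,s_0}\bu_1^{(s)}$ of norm $O((\eps_1/\lambda_1)\eps_2)$ by \eqref{eq:def-eps1-eps2}, plus terms quadratic or higher in the $\bdelta$'s of norm $O((\eps_1/\lambda_1)^2\|\scrE\|)=O(\eps_1^2/\lambda_1)$ (using $\|\scrE\|\le\lambda_1$). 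Hence $\bv=\lambda_1\bu_1^{(q)}+\scrE\times_{s\ne q}\bu_1^{(s)}+\bxi^{(q)}$ with $\|\bxi^{(q)}\|=O((\eps_1^2+\eps_1\eps_2)/\lambda_1)$; writing $\scrE\times_{s\ne q}\bu_1^{(s)}=p_\parallel\bu_1^{(q)}+\bp_\perp$ and $\bxi^{(q)}=\xi_\parallel\bu_1^{(q)}+\bxi_\perp$, one gets $\|\bv\|=\lambda_1+p_\parallel+O((\eps_1^2+\eps_1\eps_2)/\lambda_1)$, and the component of $\widehat\bu_1^{(q)}-\lambda_1^{-1}\scrE\times_{s\ne q}\bu_1^{(s)}=\bv/\|\bv\|-\lambda_1^{-1}(p_\parallel\bu_1^{(q)}+\bp_\perp)$ orthogonal to $\bu_1^{(q)}$ is $(\|\bv\|^{-1}-\lambda_1^{-1})\bp_\perp+\|\bv\|^{-1}\bxi_\perp$, of norm $O((\eps_1^2+\eps_1\eps_2)/\lambda_1^2)$, while its component along $\bu_1^{(q)}$ is $1-O(\eps_1/\lambda_1)\ge\tfrac12$; dividing gives \eqref{eq:fo-err-bd}. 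I expect the bookkeeping of this last step to be the main obstacle: one must verify simultaneously that the signal tail, the $\bdelta$-quadratic noise terms, and the normalization error all land at order $(\eps_1^2+\eps_1\eps_2)/\lambda_1^2$ — which come respectively from estimate (ii), the definition of $\eps_2$, and $\|\scrE\|\le\lambda_1$ — with, crucially, no step invoking a gap between $\lambda_1$ and the other $\lambda_k$.
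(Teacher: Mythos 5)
Your proposal is correct and follows essentially the same route as the paper's argument (the proof of Theorem~\ref{th:piter} specialized to $j=1$, where the deflation term vanishes): decompose the update into on-target signal, off-target signal, and noise; expand the noise multilinearly and group by the number of perturbation factors to get the $\eps_1+\delta_t\eps_2+\delta_t^2\|\scrE\|$ structure; iterate the resulting contraction; and re-expand the final update around $\bu_1^{(s)}$ to extract the first-order term. The only substantive difference is cosmetic: you bound the off-target signal tail by $\lambda_1\delta_t^{p-1}$ via generalized H\"older over the $p-1$ contracted modes, whereas the paper settles for $\lambda_1 L_t^2$ via Cauchy--Schwarz over two modes --- both yield the required $O(\eps_1^2/\lambda_1)$ once $\delta\le C\eps_1/\lambda_1$.
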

	Similar to the perturbation bounds in Section~\ref{sec:perturb}, the error in estimating $\bu_1^{(q)}$ with $\hat{\bu}_1^{(q)}$, depends only on $\lambda_1$ and is free of any dependence on the gap between singular values. Moreover, the estimation rates depend on $\scrE$ only through $\eps_1$ and $\eps_2$, which by definition satisfy the inequalities:
	\[
	\eps_1\le \eps_2\le \|\scrE\|\le \lambda_{\min}/C
	\]
	for some constant $C>0$. Thus, while Proposition~\ref{pr:piter} requires $\lambda_1\gtrsim \|\scrE\|$, the rate of estimation is affected by the smaller error norms $\eps_1$ and $\eps_2$. While for iid Gaussian errors, $\eps_1\asymp \eps_2\asymp \|\scrE\|\asymp \sqrt{d}$, these three norms can differ in magnitude for other types of error tensors arising in practice. Thus the rates given in Proposition \ref{pr:piter} is an improvement over existing results in the literature in important ways. An example is for random errors with heavy tails, where say for an error tensor with only finite $4^{\text th}$ moments, $\eps_1\asymp \eps_2\asymp\sqrt{d}$, while $\|\scrE\|\asymp d^{p/4}$. See \cite{auddy2021estimating} for details. This difference can also arise when estimating higher order population moments through their sample counterparts, as is the case for multiview mixture models and ICA (see, e.g., \cite{auddy2023large}).
	
	Finally, equation~\eqref{eq:fo-err-bd} is analogous to the perturbation bound in Theorem~\ref{th:ortho-perturb}, and shows the precise leading order term in the approximation of $\bu_1^{(q)}$ by $\hat{\bu}_1^{(q)}$. This can be readily used for establishing distributional properties of $\hat{\bu}_1^{(q)}$ under suitable conditions.
	
	\subsection{Deflation.}
	In the previous subsection, we showed that the power iteration successively improves estimation of the top singular vectors $\{\bu_1^{(q)}:1\le q\le p\}$. We now move on to the remaining singular vectors $\bu_j^{(q)}$ for $j\ge 2$. To ensure that the previously estimated leading singular vector $\bu^{(q)}_1$ does not obfuscate the signal in the direction of, say, $\bu^{(q)}_2$, we will remove from $\scrX$ the estimated rank one tensor component $\hat{\lambda}_1\hat{\bu}_1^{(1)}\circ\hat{\bu}_1^{(2)}\circ\ldots\circ \hat{\bu}_1^{(p)}$. In general, if we have estimated $(j-1)$ directions, at the $j$th step we define the deflated tensor:
	\begin{align*}\label{eq:deflate}
		\scrX_j
		=:&\sum_{i\ge j}\lambda_i\bu^{(1)}_i\circ\dots\circ \bu^{(p)}_i+\scrE+
		\sum_{i<j}\left(\lambda_i\bu^{(1)}_i\circ\dots\circ \bu^{(p)}_i
		-
        \langle\scrX, \hat{\bu}^{(1)}_i\circ\dots\circ \hat{\bu}^{(p)}_i\rangle
        \,\hat{\bu}^{(1)}_i\circ\dots\circ \hat{\bu}^{(p)}_i
		\right)\\
		=:&\scrT_{j}+\scrE+\hat{\scrT}_{-j}.\numberthis
	\end{align*}
	
	\begin{algorithm}[htbp]
		\caption{Odeco Approximation via Power Iteration and Deflation}\label{alg:alg-odeco-svd}
		\hspace*{\algorithmicindent} \textbf{Input:} tensor $\scrX$ whose rank-$r$ odeco approximation is to be computed
		\begin{algorithmic}[1]
			\For {$j=1$ to $r$}
			\State $\hat{\bu}_{j,[0]}^{(1)}, \ldots, \hat{\bu}_{j,[0]}^{(p)}\leftarrow$ 
			initialize 
			\COMMENT{Initialization}
			\For{$t=1$ to $T$} \COMMENT{Power iteration}
			\State $\hat{\bu}_j^{(q)}\leftarrow \mathscr{X}\times_{q'\neq q}\bu_{j,[t-1]}^{(q')}$
			\State $\hat{\bu}_{j,[t]}^{(q)}\leftarrow {\hat{\bu}_j^{(q)}\over \|\hat{\bu}_j^{(q)}\|}$
			\EndFor
			\State $\scrX\leftarrow \scrX-\langle\scrX, \circ_{q=1}^p \hat{\bu}_{j,[T]}^{(q)}\rangle\cdot \circ_{q=1}^p \hat{\bu}_{j,[T]}^{(q)}$ \COMMENT{Deflation}
			\EndFor\\
			\Return $\{\hat{\bu}_{j,[T]}^{(q)}:1\le j\le r, 1\le q\le p\}$.
		\end{algorithmic}
	\end{algorithm}
	
	To ensure that the errors in the previously estimated directions do not accumulate, we need to make some additional assumptions on $\scrE$.
	
	\paragraph{Assumptions.} There exists a numerical constant $C>0$ such that the following hold.
	\begin{enumerate}
		\item[A1.] $\lambda_{\min}\ge C\max\{\eps_0\eps_1,\eps_1r^{1/4},\|\scrE\|\}$ 
		where $\eps_0=\max_{j}|\scrE\times_q\bu_j^{(q)}|$ and $\eps_1,\eps_2$ are as defined in \eqref{eq:def-eps1-eps2}
		.
		\item[A2.] Let us define the matrices $\bE_j^{(q_1,q_2)}=\scrE\times_{k\neq q_1,q_2}\bu_j^{(k)}\in \RR^{d\times d}$ for $1\le q_1,q_2\le p$. Then for every $q$ there exists $q'\neq q$ such that:
		\[
		\norm*{
			\left[
			\left(\bE_1^{(q,q')}\right)\bu^{(q')}_1\,
			\left(\bE_2^{(q,q')}\right)\bu^{(q')}_2\,
			\ldots\,
			\left(\bE_r^{(q,q')}\right)\bu^{(q')}_r
			\right]
		}\le C\eps_1
		\]
		and    
		\[
		\norm*{
			\left[
			\left(\bE_1^{(q,q')}\right)\left(\bE_1^{(q,q')}\right)^{\top}\bu^{(q)}_1,\,
			\left(\bE_2^{(q,q')}\right)\left(\bE_2^{(q,q')}\right)^{\top}\bu^{(q)}_2,\,
			\ldots\,,
			\left(\bE_r^{(q,q')}\right)\left(\bE_r^{(q,q')}\right)^{\top}\bu^{(q)}_r
			\right]
		}\le C\eps_1^2.
		\]
		
		%
		%
	\end{enumerate}	
	Moreover, suppose that we have an initial estimator $\hat{\bu}_{[0]}$ for the $j$-th singular vectors:
	\begin{equation}\label{eq:nontriv-init-def}
		\max_{1\le q\le p}
		\sin\angle\left(
		\hat{\bu}_{[0]},\bu_j^{(q)}
		\right)\le\frac{1}{4}.
	\end{equation}
	With the above initialization, we then run power iteration on the deflated version:
	\begin{equation}\label{eq:piter-def}
		\hat{\bu}_{[t+1]}^{(q)}=
		\frac{1}{\|\bv\|}\bv 
		\quad
		\text{where}
		\quad 
		\bv=\scrX_j\times_{p\neq q}\bu_{[t]}^{(q)}
	\end{equation}
	where $\scrX_j$ is as defined in \eqref{eq:deflate}. The complete algorithm is specified in Algorithm~\ref{alg:alg-odeco-svd}. We will show that the power iteration estimator satisfies
	\begin{equation}\label{eq:ind-hyp}
		\max_{1\le q\le p}\sin\angle\left(\hat{\bu}^{(q)}_i,\bu^{(q)}_i\right)\le
		\dfrac{\max_{i,k}|\scrE\times_{k\neq q}\bu^{(q)}_i|}{\lambda_i}:=\dfrac{C\eps_1}{\lambda_i}.
	\end{equation}	
	The proof is by induction over $1\le i\le r$. Note that the base case of the induction for $i=1$ is satisfied by Proposition~\ref{pr:piter}.
	
	
	\begin{theorem}\label{th:piter} Under assumptions A1-A2, after $T\ge C\log \eps_1$ steps of the power iteration \eqref{eq:piter-def} initialized at estimators satisfying \eqref{eq:nontriv-init-def}, the estimators $\hat{\bu}_j^{(q)}:=\bu^{(q)}_{j,[T]}$ satisfy
		$$
		\max_{1\le q\le p}
		\sin\angle(\hat{\bu}_j^{(q)},\bu_j^{(q)})
		\le \dfrac{C\eps_1}{\lambda_j}
		$$
		and
		\begin{equation}\label{eq:fo-err-bd-def}
			\max_{1\le q\le p}
			\sin\angle\left(
			{\rm sign}(\langle
			\hat{\bu}_1^{(q)},\bu_1^{(q)}
			\rangle)
			\bu^{(q)}_j,
			\hat{\bu}_{j}^{(q)}
			-\dfrac{1}{\lambda_j}\scrE\times_{k\neq q}\bu_j^{(k)}
			\right)
			\le
			\dfrac{C(\eps_1^2+\eps_1\eps_2)}{\lambda_j^2}.
		\end{equation}	
	\end{theorem}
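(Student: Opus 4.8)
The plan is to run the induction over $1\le i\le r$ announced right before the statement, with Proposition~\ref{pr:piter} supplying the base case $i=1$ directly (the undeflated tensor $\scrX=\scrX_1$). For the inductive step, assume \eqref{eq:ind-hyp} holds for all $i<j$; the goal is to show that the power iteration \eqref{eq:piter-def} applied to the deflated tensor $\scrX_j=\scrT_j+\scrE+\hat{\scrT}_{-j}$ behaves, for the purpose of recovering $\bu_j^{(q)}$, like the undeflated power iteration analyzed in Proposition~\ref{pr:piter} but with an \emph{effective} error tensor $\scrE_j := \scrE + \hat{\scrT}_{-j}$. The whole argument then reduces to two things: (i) checking that the effective error satisfies the same hypotheses that Proposition~\ref{pr:piter} needs, with $\eps_1,\eps_2$ essentially unchanged up to constants, and (ii) checking that the first-order remainder term contributed by $\hat{\scrT}_{-j}$ is of the stated order $O((\eps_1^2+\eps_1\eps_2)/\lambda_j^2)$ and does not disturb the leading term $\lambda_j^{-1}\scrE\times_{k\neq q}\bu_j^{(k)}$.

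The key steps, in order. First, expand $\hat{\scrT}_{-j}=\sum_{i<j}\big(\lambda_i\,\circ_q\bu_i^{(q)} - \langle\scrX,\circ_q\hat\bu_i^{(q)}\rangle\,\circ_q\hat\bu_i^{(q)}\big)$ and bound each summand. Writing $\hat\bu_i^{(q)}=\bu_i^{(q)}+\bdelta_i^{(q)}$ with $\sin\angle(\hat\bu_i^{(q)},\bu_i^{(q)})\le C\eps_1/\lambda_i$ by the inductive hypothesis, and using $\langle\scrX,\circ_q\hat\bu_i^{(q)}\rangle=\lambda_i\prod_q\langle\bu_i^{(q)},\hat\bu_i^{(q)}\rangle+\langle\scrE,\circ_q\hat\bu_i^{(q)}\rangle$, one gets that each deflation residual is a rank-one-ish tensor whose components are $O(\eps_1)$ in the relevant contracted norms and $O(\eps_0)$ in the scalar coefficient; this is exactly where Assumption~A1's terms $\eps_0\eps_1$ and $\eps_1 r^{1/4}$ enter, controlling $\|\hat{\scrT}_{-j}\|$ and its contractions after summing $j-1<r$ such terms. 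Second, verify that contracting $\scrE_j$ along all-but-one or all-but-two \emph{signal} directions $\bu_j^{(s)}$ still gives something $O(\eps_1)$, resp. $O(\eps_2)$: the $\scrE$ part is by definition, and the $\hat{\scrT}_{-j}$ part is handled by the near-orthogonality $\langle\bu_i^{(q)},\bu_j^{(q)}\rangle=0$ for $i\neq j$ together with the $\bdelta_i^{(q)}=O(\eps_1/\lambda_i)$ bounds — this is precisely the role of the two operator-norm bounds in Assumption~A2, which are designed so that the matrices $\bE_i^{(q,q')}\bu_i^{(q')}$ and $\bE_i^{(q,q')}(\bE_i^{(q,q')})^\top\bu_i^{(q)}$ aggregate without an $r$ or $\sqrt r$ loss. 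Third, invoke (a careful re-reading of the proof of) Proposition~\ref{pr:piter} with $\scrE$ replaced by $\scrE_j$: the contraction-rate argument showing $\sin\angle(\hat\bu^{(q)}_{[t]},\bu_j^{(q)})$ shrinks geometrically until it hits $O(\eps_1/\lambda_j)$ goes through verbatim once $C\|\scrE_j\|\le\lambda_j$ and the all-but-one contraction of $\scrE_j$ is $O(\eps_1)$, both of which follow from A1; after $T\ge C\log\eps_1$ steps this yields the first displayed bound. Fourth, for the refined bound \eqref{eq:fo-err-bd-def}, track the first-order term: the power iteration output obeys $\hat\bu_j^{(q)}\propto \lambda_j\bu_j^{(q)}+\scrE_j\times_{k\neq q}\bu_j^{(k)}+(\text{second order})$, and one splits $\scrE_j\times_{k\neq q}\bu_j^{(k)}=\scrE\times_{k\neq q}\bu_j^{(k)}+\hat{\scrT}_{-j}\times_{k\neq q}\bu_j^{(k)}$; the second piece is $O(\eps_1^2/\lambda_j)$ by Step~2 specialized to signal directions (using $\langle\bu_i,\bu_j\rangle=0$, so only the $\bdelta$-cross terms survive, each $O(\eps_1\cdot\eps_1/\lambda_i)$), and after normalization by $\lambda_j$ this contributes $O((\eps_1^2+\eps_1\eps_2)/\lambda_j^2)$, matching the genuine second-order remainder already present in Proposition~\ref{pr:piter}.

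The main obstacle, I expect, is Step~2 / the $\hat{\scrT}_{-j}$ bookkeeping: one must show the $j-1$ accumulated deflation errors do not blow up when contracted against $\bu_j^{(q)}$, and the naive triangle-inequality bound loses a factor $r$ (or $\sqrt r$), which would be fatal given only $\lambda_{\min}\gtrsim\eps_1 r^{1/4}$ in A1. The resolution is to recognize the relevant sum as $\sum_{i<j}(\text{coefficient}_i)\cdot(\bE_i^{(q,q')}\bu_i^{(q')})$ and bound it by the operator norm of the matrix whose columns are the $\bE_i^{(q,q')}\bu_i^{(q')}$'s times the $\ell_2$ norm of the coefficient vector — this is exactly the quantity Assumption~A2 postulates is $O(\eps_1)$, and the $\ell_2$ norm of the coefficients is $O(\sqrt r\,\eps_0)$ or better, so the product is controlled by the $\eps_0\eps_1$ term of A1. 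The analogous second-order aggregation uses the second bound in A2. Everything else is a routine re-run of the Proposition~\ref{pr:piter} argument with $\scrE\rightsquigarrow\scrE_j$, so the real content is verifying that A1--A2 were tailored precisely to make the deflation residual transparent to that argument.
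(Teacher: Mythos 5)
Your proposal is correct and follows essentially the same route as the paper: an induction over $j$ with Proposition~\ref{pr:piter} as the deflation-free base case, a term-by-term bound on the numerator and denominator of the power-iteration update with the deflation residual $\hat{\scrT}_{-j}$ treated as an additional error (the paper's Lemma~\ref{lem:defl-er}), and exactly the aggregation-by-operator-norm device from Assumption~A2 (formalized in the paper as Lemma~\ref{lem:esti-op-norm}) to avoid the fatal $\sqrt{r}$ loss you correctly flag as the main obstacle. The only cosmetic difference is that the paper phrases the convergence as an explicit recursion $L_{t+1}\le \eps_1/\lambda_j+CL_t(1+\eps_1)(\eps_1+\eps_2)/\lambda_j+CL_t^2(1+\eps_1)(1+\|\scrE\|/\lambda_j)+\cdots$ rather than as a black-box re-run of Proposition~\ref{pr:piter} with an effective error tensor.
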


	
	%
	%

	\section{Initialization.}\label{sec:init}
	The non-convex nature of the odeco approximation dictates that the power iteration described in the previous section depends crucially on the initialization of the singular vectors of $\scrT$.

    Two obvious but sub-optimal choices initialization are through i) random starts, and ii) matricization of $\scrT$. Firstly, random start from the unit sphere requires a strong signal requirement of $\lambda_{\min}\ge Cd^{(p-2)/2}\|\scrE\|$ for nontrivial initializations that obey \eqref{eq:nontriv-init} and \eqref{eq:nontriv-init-def}. See, e.g., \cite{arous2019landscape,richard2014statistical} for the special case of rank one tensor SVD where $\scrE$ is a noise tensor of iid Gaussian errors. As we will show below, this signal requirement is sub-optimal since the random start does not leverage sufficient information about the odeco structure, and hence can be weakened. Secondly, the naive matricization of $\scrT$ once again faces the eigengap problem. Following our discussion in Section~\ref{sec:perturb}, since the matrix SVD is uniquely identified only in the presence of eigengap, an odeco tensor $\scrT$ with repeated singular values, even without noise, cannot be recovered from its matricization.

    In practice, especially in spectral learning problems where the odeco estimation is most relevant, some auxiliary information about $\scrE$ is often available, and the signal strength requirement might be much lower. We refer the interested reader to \cite{anandkumar2014guaranteed,anand2014sample,auddy2020perturbation,belkin2018eigenvectors} for a more detailed discussion.

	\subsection{General Strategies.}
	A general strategy for initialization proceeds in two steps. In the first step, one applies HOSVD to the observed tensor $\scrX$ to identify the linear span of the singular vectors along one or more modes of the odeco tensor $\scrT$. For example, the projection matrices, say
    \[
    \bP^{(1)}=\sum_{i=1}^r\bu_i^{(1)}(\bu_i^{(1)})^{\top}
    \quad
    \text{and}
    \quad 
    \bP^{(1,2)}=\sum_{i=1}^r(\bu_i^{(1)}\otimes \bu_i^{(2)})
    (\bu_i^{(1)}\otimes \bu_i^{(2)})^{\top}
    \]
    can be estimated by 
    \[
    \hat{\bP}^{(1)}=\sum_{i=1}^r\hat{\bu}_i^{(1)}(\hat{\bu}_i^{(1)})^{\top}
    \quad
    \text{and}
    \quad 
    \hat{\bP}^{(1,2)}=\sum_{i=1}^r(\hat{\bu}_i^{(1,2)})
    (\hat{\bu}_i^{(1,2)})^{\top}
    \]
    where $\{\hat{\bu}_i^{(1)}:1\le i\le r\}$ and $\{\hat{\bu}_i^{(1,2)}:1\le i\le r\}$ are the left singular vectors of ${\sf Mat}_{1}(\scrX)$ and ${\sf Mat}_{(1,2)}(\scrX)$ respectively. Note that $\bP^{(1)}\in \RR^{d\times d}$ and $\bP^{(1,2)}\in \RR^{d^2\times d^2}$, but both have rank $r$, and since $\scrT$ is odeco, we must have $r\le d$. Thus estimating the projection matrices will allow us to reduce the dimensionality of the problem from $d$ or $d^2$ to $r$.

    The second step slices one or more modes of $\scrX$ with random vectors. For example, one may slice $\scrX$ along the first mode with $\hat{\bP}^{(1)}\btheta$, or along the two modes with $\hat{\bP}^{(1,2)}\btheta'$ yielding matrices
    \[
        \bX_{\btheta}
        =\scrX\times_1(\hat{\bP}^{(1)}\btheta)
        \quad
        \text{and}
        \quad 
        \bX'_{\btheta'}
        =\scrX\times_{1,2}(\hat{\bP}^{(1,2)}\btheta')
    \]
    respectively where $\btheta\sim N(\mathbf{0},\II_d)$ and $\btheta'\sim N(\mathbf{0},\II_{d^2})$ are independent of $\scrX$. The randomness in $\btheta$ forces the singular values of $\bX_{\btheta}$ and $\bX'_{\btheta'}$ to be distinct. Moreover, repeating this initialization procedure for $L$ vectors $\btheta_1,\dots,\btheta_L$ ensures that the singular value gap is large enough, so that we can recover $\bu_i^{(1)}$ and $\bu_i^{(2)}$ from the left and right singular vectors of $\bX_{\theta_{i_*}}$ for some $1\le i_*\le L$. The basic initialization procedure is summarized in Algorithm~\ref{alg:alg-init}. 

    \begin{algorithm}[htb]
		\caption{Initialization for Odeco Approximation}\label{alg:alg-init}
		\hspace*{\algorithmicindent} \textbf{Input:} tensor $\scrX$ whose rank-$r$ odeco approximation is to be computed
		\begin{algorithmic}[1]
         \State Compute $\hat{\bU}^{(1)}\in \RR^{d\times r}$ as the matrix of left singular vectors of ${\sf Mat}_1(\scrX)$.
        \State Compute $\hat{\bP}^{(1)}=\hat{\bU}^{(1)}(\hat{\bU}^{(1)})^{\top}$.
        \For {$l=1$ to $L$}
        \State Generate a $d$ dimensional standard Gaussian vector $\btheta_l$.
        \State Compute the leading singular value of ${\sf Mat}_1(\scrX\times_{1}(\hat{\bP}^{(1)}\btheta_l))$ denoted by $\sigma_l$.
        \For {$q=2$ to $p$}
        \State Compute the leading left singular vector of ${\sf Mat}_{q-1}(\scrX\times_{1}(\hat{\bP}^{(p)}\btheta_l))$, denoted by 
        $\hat{\bu}_l^{(q)}$.
        \EndFor
        \State Compute $\scrX\times_{2\le q\le p}\hat{\bu}^{(q)}_l/\|\scrX\times_{1\le q\le p-1}\hat{\bu}^{(q)}_l\|$, denoted by 
        $\hat{\bu}_l^{(1)}$.
        \EndFor
        \State Compute $L^*={\rm argmax}_{1\le l\le L}\sigma_l$.\\
		\Return $\{\hat{\bu}_{L_*}^{(q)}:1\le q\le p\}$.
		\end{algorithmic}
	\end{algorithm}

    Theoretical analysis of the initialization algorithm depends on the nature of the error tensor $\scrE$. For illustration, we shall give an example where $\scrE$ is an iid ensemble of random variables, i.e., $E_{i_1i_2\dots i_p}\stackrel{iid}{\sim } E$, $1\le i_1,\dots,i_p\le d$.

	\begin{theorem}\label{th:init} If $\scrE$ is an iid ensemble whose entries satisfy $\EE(E^8)<C$, $L>Cr^2\log d$, and $\lambda_j\ge Cd^{p/4}\sqrt{\log d}$, there exists a numerical constant $C>0$, a permutation $\pi:[r]\to [r]$ such that 
		$$
        \max_{1\le q\le p}
        \sin\angle
        \left(\hat{\bu}_j^{(q)},\bu_{\pi(j)}^{(q)}\right)\le 
        \dfrac{Cd^{(p-1)/2}r\log d+C\lambda_j\sqrt{dr}(\log d)^{3/2}}{\lambda_j^2}$$
		with probability at least $1-d^{-1}$.
	\end{theorem}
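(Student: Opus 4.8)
The plan is to analyze Algorithm~\ref{alg:alg-init} as it is called inside the deflation loop of Algorithm~\ref{alg:alg-odeco-svd}. By the induction in Theorem~\ref{th:piter}, at the $j$-th stage the tensor fed to Algorithm~\ref{alg:alg-init} is the residual $\scrX_j=\scrT_j+\scrE+\hat{\scrT}_{-j}$ with $\scrT_j=\sum_{i\ge j}\lambda_i\bu_i^{(1)}\circ\cdots\circ\bu_i^{(p)}$ (order the $\lambda$'s decreasingly WLOG, so $\bu_j^{(q)}$ is the leading singular vector of the effective signal and $\|\hat{\scrT}_{-j}\|$ is negligible next to $\lambda_j$ by the earlier stages). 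It therefore suffices to show that one run recovers $\bu_j^{(q)}$ at the stated rate; $\pi$ records the matching. Throughout I will use the operator-norm concentration that the moment bound $\EE(E^8)<C$ supplies for the matricizations ${\sf Mat}_1(\scrE)$, their Gram matrices, and their random slices, with $\mathrm{poly}(d)$-strength tails so as to survive the union bounds.

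\emph{Step 1: the projector need only be crude.} The sole role of $\hat{\bP}^{(1)}$ is to produce a rank-$r$ projection that still captures $\bu_j^{(1)}$ well enough that $\langle\btheta,\hat{\bP}^{(1)}\bu_j^{(1)}\rangle$ stays $\Omega(1)$-scale; its only benefit is that $\|\hat{\bP}^{(1)}\btheta_l\|\asymp\sqrt r$, which shrinks the sliced noise from a $\sqrt d$ scale to a $\sqrt r$ scale. Hence it is enough to prove $\sin\angle(\bu_j^{(1)},\mathrm{col}(\hat{\bP}^{(1)}))\le 1/2$. By \eqref{eq:odeco-mat}, ${\sf Mat}_1(\scrT_j)=\bU^{(1)}_{\ge j}\diag(\lambda_i)(\bV^{(1)}_{\ge j})^\top$ with $\bV^{(1)}_{\ge j}$ orthonormal, so the Gram ${\sf Mat}_1(\scrX_j){\sf Mat}_1(\scrX_j)^\top$ is a $d\times d$ spiked matrix whose signal part $\bU^{(1)}_{\ge j}\diag(\lambda_i^2)(\bU^{(1)}_{\ge j})^\top$ has $\bu_j^{(1)}$ as top eigenvector with eigenvalue $\lambda_j^2$; its noise is a centered Wishart piece of operator norm $\asymp d^{p/2}$ (up to $\sqrt{\log d}$), plus cross terms ${\sf Mat}_1(\scrT_j){\sf Mat}_1(\scrE)^\top+\text{transpose}$ that are large in norm but structured: their ranges lie in $\mathrm{col}(\bU^{(1)}_{\ge j})$, so the part that can rotate $\bu_j^{(1)}$ out of the signal span has norm only $\lesssim\lambda_j\sqrt d$. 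A block/energy perturbation argument for the top eigenvector (using $\mathrm{rank}\,{\sf Mat}_1(\scrT_j){\sf Mat}_1(\scrT_j)^\top\le r$, so the $(r{+}1)$-st perturbed eigenvalue is dominated by the off-diagonal coupling) then yields $\sin\angle(\bu_j^{(1)},\mathrm{col}(\hat{\bP}^{(1)}))\lesssim d^{p/2}/\lambda_j^2+\sqrt d/\lambda_j$, which is $\ll 1$ precisely under $\lambda_j\ge Cd^{p/4}\sqrt{\log d}$. No eigengap is needed, since only the current top direction is tracked, and this error does not propagate to the final bound because the modes $q\ge 2$ of the slice carry the exact signal directions $\bu_i^{(q)}$.

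\emph{Step 2: the slice carries all the error.} Fix the Step-1 event and put $\bz_l=\hat{\bP}^{(1)}\btheta_l$ ($\|\bz_l\|\asymp\sqrt r$). Then $\scrX_j\times_1\bz_l=\sum_{i\ge j}\lambda_i\langle\bz_l,\bu_i^{(1)}\rangle\,\bu_i^{(2)}\circ\cdots\circ\bu_i^{(p)}+\scrE\times_1\bz_l+\hat{\scrT}_{-j}\times_1\bz_l$; the signal part is odeco of order $p-1$, so by \eqref{eq:odeco-mat} the left singular vectors of its matricization ${\sf Mat}_{q-1}$ are exactly $\bu_i^{(q)}$ with values $|\lambda_i\langle\bz_l,\bu_i^{(1)}\rangle|$, and for $i=j$ the weight $\langle\bz_l,\bu_j^{(1)}\rangle=\langle\btheta_l,\hat{\bP}^{(1)}\bu_j^{(1)}\rangle$ is, by Step 1, within a constant factor of a standard Gaussian. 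Two ingredients finish the argument. (a) Conditional order statistics: among $L>Cr^2\log d$ draws, the slice $l^*$ chosen by $\mathrm{argmax}_l\sigma_l$ has, with high probability, its largest weight $\asymp\lambda_j\sqrt{\log L}$ while all the others are $\lesssim\lambda_j\sqrt{\log r}$; since $\log L\ge 2\log r$, the top two singular values of ${\sf Mat}_{q-1}$ of that slice differ by a gap $\gtrsim\lambda_j\sqrt{\log L}$, and the dominant left singular vector is the desired $\bu_j^{(q)}$. (b) Since ``leading left singular vector'' means the top eigenvector of ${\sf Mat}_{q-1}(\scrX_j\times_1\bz_{l^*}){\sf Mat}_{q-1}(\scrX_j\times_1\bz_{l^*})^\top$, apply the same spiked-matrix perturbation to this $d\times d$ Gram: its Wishart-type noise has operator norm $\asymp\|\bz_{l^*}\|^2 d^{(p-1)/2}\asymp r\,d^{(p-1)/2}$ and its structured cross terms contribute leakage $\lesssim\lambda_j\sqrt{\log L}\cdot\sqrt{rd}$, so with gap$^2\asymp\lambda_j^2\log L$ one obtains, for $q=2,\dots,p$, $\sin\angle(\hat{\bu}_{l^*}^{(q)},\bu_j^{(q)})\lesssim\frac{r\,d^{(p-1)/2}}{\lambda_j^2\log L}+\frac{\sqrt{rd}}{\lambda_j\sqrt{\log L}}$. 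Finally $\hat{\bu}_{l^*}^{(1)}=\scrX_j\times_{2\le q\le p}\hat{\bu}_{l^*}^{(q)}/\|\cdot\|$ is handled exactly as in Proposition~\ref{pr:piter}: $\scrX_j\times_{2,\dots,p}\bu_j^{(q)}=\lambda_j\bu_j^{(1)}+\scrE\times_{2,\dots,p}\bu_j^{(q)}+\cdots$, and dividing by $\lambda_j$ and feeding the above $\sin\angle$'s through the multilinear contraction gives the same order. Union-bounding the random-matrix and Gaussian tail events over the $L$ slices, the $r$ signal directions and the $p$ modes absorbs the remaining $\log$ and $r$ factors into the displayed bound, valid with probability at least $1-d^{-1}$.

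\emph{Main obstacle.} I expect the weight of the proof to fall on two points. The first is controlling $\scrE\times_1\bz_l$ and its Gram when $\bz_l=\hat{\bP}^{(1)}\btheta_l$ depends on $\scrE$: plain Gaussian/Wishart concentration does not apply, so one needs a \emph{uniform} operator-norm deviation bound over the random $r$-dimensional range of $\hat{\bP}^{(1)}$ --- an $\eps$-net on that range together with a Bernstein/moment-method tail, which is where the eighth moment is consumed --- or a leave-one-out decoupling. The second is the gap statement in Step 2(a): with no eigengap assumed on $\scrT$, one must show that the slice with the largest leading singular value \emph{automatically} separates its top two singular values by $\gtrsim\lambda_j\sqrt{\log L}$, a somewhat delicate conditional analysis of the order statistics of $\{\lambda_i|\langle\btheta_l,\bu_i^{(1)}\rangle|\}_{i,l}$ that dictates the choice $L\gtrsim r^2\log d$. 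The remaining pieces --- the spiked-matrix/energy perturbation in Steps 1 and 2(b), and the contraction recovering $\hat{\bu}^{(1)}$ --- are routine given the machinery of Sections~\ref{sec:perturb}--\ref{sec:compute}.
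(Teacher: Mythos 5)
Your proposal follows essentially the same route as the paper's proof: estimate $\bP^{(1)}$ by HOSVD of ${\sf Mat}_1(\scrX)$ (the paper's Lemma~\ref{lem:hosvd}), slice with $L$ projected Gaussians and use the order statistics of $\{\lambda_i|\langle\btheta_l,\bu_i^{(1)}\rangle|\}$ to manufacture a singular-value gap (Lemma~\ref{lem:sing-gap}, which is where $L\gtrsim r^2\log d$ enters), then apply Davis--Kahan to the Gram matrix of the chosen slice with the Wishart-type term $\|\bE_\theta\bE_\theta^\top-d^{p-2}\|\bZ\|^2\II_d\|$ and the cross term $\omega_j\|\bE_\theta\tilde{\bV}_j\|$ producing exactly the two displayed contributions. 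The two "main obstacles" you flag are precisely the paper's technical lemmas: the uniform (over the data-dependent range of $\hat{\bP}^{(1)}$) fourth-moment/$\eps$-net concentration of Lemmas~\ref{lem:errnormbd2} and \ref{lem:EVthetanorm}, and the Gaussian gap argument of Lemma~\ref{lem:sing-gap}.
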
	
	In other words, we have a nontrivial initialization, i.e., \eqref{eq:nontriv-init} and \eqref{eq:nontriv-init-def} hold, whenever
	\begin{equation}\label{eq:signal-reqd-gen}
		\lambda_j\ge C\max\left\{
		d^{p/4}\sqrt{(\log d)},\,\,
        d^{(p-1)/4}
		\sqrt{r(\log d)}
		\right\}.
	\end{equation}

    \subsection{Refinement for Higher Order Incoherent Tensors.}

    The signal strength requirement above can be further improved for higher order tensors (i.e., $p\ge 4$) and $\scrT$ is \emph{incoherent}. The key idea is that incoherence allows us split $\scrX$ into two halves along some mode. We will then use one half of $\scrX$ for estimating the projection matrix $\bP^{(1,2)}$, and the other half for random slicing. The independence between the two halves allows us to sharpen the analysis and reduce the signal strength requirement. The detailed procedure can be found in Algorithm~\ref{alg:alg-init-incoherent}. A similar strategy was used for rank one tensor SVD in \cite{auddy2021estimating}. 

    \begin{algorithm}[htb]
		\caption{Initialization for Incoherent Odeco Approximation }\label{alg:alg-init-incoherent}
		\hspace*{\algorithmicindent} \textbf{Input:} tensor $\scrX$ whose rank-$r$ odeco approximation is to be computed
		\begin{algorithmic}[1]
        \State Randomly split $\scrX$ into two halves $\scrX_{[1]}$ and $\scrX_{[2]}$ along the $p$-th mode.
        \State Compute $\hat{\bU}^{(1,2)}\in \RR^{d^2\times r}$ as the matrix of left singular vectors of ${\sf Mat}_{(1,2)}(\scrX_{[1]})$.
        \State Compute $\hat{\bP}^{(1,2)}=\hat{\bU}^{(1,2)}(\hat{\bU}^{(1,2)})^{\top}$.
        \For {$l=1$ to $L$}
        \State Generate a $d^2$ dimensional standard Gaussian vector $\btheta_l$.
        \State Compute the leading singular value of ${\sf Mat}_1(\scrX_{[2]}\times_{1,2}(\hat{\bP}^{(1,2)}\btheta_l))$ denoted by $\sigma_l$.
        \For {$q=3$ to $p$}
        \State Compute the leading left singular vector of ${\sf Mat}_{q-2}(\scrX_{[2]}\times_{1,2}(\hat{\bP}^{(1,2)}\btheta_l))$, denoted by 
        $\hat{\bu}_l^{(q)}$.
        \EndFor
        \State Compute the leading left and right singular vectors of $
        \scrX_{[2]}\times_{3\le q\le p}\hat{\bu}^{(q)}_l$, denoted by 
        $\hat{\bu}_l^{(1)}$ and $\hat{\bu}_l^{(2)}$.
        \State Compute $\bv=\scrX\times_{1\le q\le (p-1)}\hat{\bu}^{(q)}_l$, update $\hat{\bu}^{(p)}_l$ to be $\bv/\|\bv\|$.
        \EndFor
        \State Compute $L^*={\rm argmax}_{1\le l\le L}\sigma_l$.\\
		\Return $\{\hat{\bu}_{L_*}^{(q)}:1\le q\le p\}$.
		\end{algorithmic}
	\end{algorithm}

    We now show how random slicing enables nontrivial estimation of each singular vector of $\scrT$. Without loss of generality, let us focus on fourth order tensors and singular vectors along the first mode, i.e., $\bu^{(1)}_i$. Suppose that $\hat{\bP}^{(1,2)}$ is the projection estimator computed by HOSVD on an independent copy of $\scrX$. For $\btheta\in \RR^{d^2}$ we obtain
	\begin{equation}\label{eq:slicing-order4}
    {\sf Mat}_1(\scrX\times_{1,2}\hat{\bP}^{(1,2)}\btheta)
	=\sum_{i=1}^r
    \lambda_i
    \langle \bu^{(1)}_i\otimes\bu^{(2)}_i, \hat{\bP}^{(1,2)}\btheta\rangle 
	\bu^{(3)}_i(\bu^{(4)})^{\top}
	+{\sf Mat}_1(\scrE\times_{1,2}\hat{\bP}^{(1,2)}\btheta).
	\end{equation}
    Since $\hat{\bP}^{(1,2)}$ is assumed to be independent of $\scrX$ and hence $\scrE$, the entries of the matrix ${\sf Mat}_3(\scrE\times_{1,2}\hat{\bP}^{(1,2)}\btheta)$ are independent copies of a random variable with mean zero, and variance $\|\hat{\bP}^{(1,2)}\btheta\|^2$.

    On the other hand, by sampling random Gaussian vectors $\btheta\sim N(\mathbf{0},\II_{d^2})$, it is possible to create a large enough gap between the top two singular values. Since $\hat{\bP}^{(1,2)}$ is close to $\bP^{(1,2)}$,
    the singular values of the above matrix are approximately 
    \[
\lambda_i\langle\bu_i^{(1)}\otimes\bu_i^{(2)},\bP^{(1,2)}\btheta\rangle
    =\lambda_i\langle\bu_i^{(1)}\otimes \bu_i^{(2)},\btheta\rangle.
    \]
    Since $\btheta\sim N(\mathbf{0},\II_d^2)$, $\langle\bu_i^{(1)}\otimes \bu_i^{(2)},\btheta\rangle \sim N(0,1)$. Now suppose this random slicing is repeated $L$ times. As $\btheta_l$ varies for $1\le l\le L$, $\{\langle\bu_i^{(1)}\otimes \bu_i^{(2)},\btheta_l\rangle:1\le i\le r,1\le l\le L\}$ gives a collection of $Lr$ independent Gaussian variables. 
	This allows us to ensure that the gap between the top two singular values of ${\sf Mat}_3(\scrX\times_{1,2}\hat{\bP}^{(1,2)}\btheta_{l_*})$ is sufficiently large for some $1\le l_*\le L$ as long as $L>2r^2\log d$. Consequently matrix perturbation bounds such as the Davis-Kahan theorem guarantees that $\bu_1^{(3)}$ can be accurately estimated by the top singular vectors of ${\sf Mat}_3(\scrX\times_{1,2}\hat{\bP}^{(1,2)}\btheta_{l_*})$. 

    The steps outlined above depend on the existence of $\hat{\bP}^{(1,2)}$ computed from an independent copy of $\scrX$. Of course, this independent copy is not available in practice. However, we can randomly split $\scrX$ into two halves $\scrX_{[1]}$ and $\scrX_{[2]}$, which would be independent since $\scrE$ is an iid ensemble. Next, we compute $\hat{\bP}^{(1,2)}$ from $\scrX_{[1]}$, while using the projection matrix for random slicing on $\scrX_{[2]}$. To ensure that the signal strength is evenly spread across $\scrX_{[1]}$ and $\scrX_{[2]}$, we require the singular vector matrix $\bU^{(q)}$ from at least one mode of $\scrT$ to satisfy an incoherence condition. Without loss of generality, we assume this is satisfied for the $p$-th mode:
    \[
        \max_{1\le i\le d,1\le j\le r} |u^{(p)}_{ij}|
        \le \frac{1}{C\log(d)}
    \]
    for a constant $C>0$. 

    With the above construction, the following theorem states the accuracy of the initialization method in Algorithm~\ref{alg:alg-init-incoherent}, proceeding sequentially through the initialization-power iteration-deflation routine outlined in Section~\ref{sec:compute}. For $1\le j\le r$, Algorithm~\ref{alg:alg-init-incoherent} is run with the deflated tensor $\scrX_j$ defined in \eqref{eq:deflate}. We denote the estimators at the $j$-th step by $\{\hat{\bu}_j^{(q)}:1\le q\le p\}$.
	
	\begin{theorem}\label{th:init-incoh} If $\scrE$ is an iid ensemble whose entries satisfy $\EE(E^8)<C_0$, $L>C_0r^2\log d$, $\max_{1\le i\le d,1\le j\le r} |u^{(p)}_{ij}| \le (C_0\log(d))^{-1}$ and $\lambda_j\ge C_0d^{p/4}\log(d)$ for a constant $C_0>0$, there exists another numerical constant $C>0$, a permutation $\pi:[r]\to [r]$ such that 
		$$
        \max_{1\le q\le p}
        \sin\angle
        \left(\hat{\bu}_j^{(q)},\bu_{\pi(j)}^{(q)}\right)\le 
        \dfrac{Cd^{(p-2)/2}r(\log d)+C\lambda_j\sqrt{dr}(\log d)}{\lambda_j^2}
        +
        \frac{\mathbbm{1}(p=4)}{\sqrt{\log d}}    
        $$
	for $1\le j\le r$, with probability at least $1-d^{-1}$.
	\end{theorem}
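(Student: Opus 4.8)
\textbf{Proof proposal for Theorem~\ref{th:init-incoh}.}

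The plan is to track the error through the three stages of Algorithm~\ref{alg:alg-init-incoherent}---(i) estimation of the projection $\bP^{(1,2)}$ from $\scrX_{[1]}$, (ii) random slicing of $\scrX_{[2]}$ and matrix SVD on the sliced matrices, and (iii) one final power iteration pass on $\scrX$ to clean up the last mode---and then feed the resulting initialization bound into the induction of Theorem~\ref{th:piter}. First I would handle stage (i): since $\scrX_{[1]}$ contains roughly half the slices, its signal tensor has singular values $\asymp\lambda_j$ and, by the incoherence hypothesis on $\bU^{(p)}$, the energy of each rank-one term is spread evenly so that ${\sf Mat}_{(1,2)}(\scrT_{[1]})$ retains singular values of order $\lambda_j/\sqrt{2}$. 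A matrix Davis--Kahan argument applied to ${\sf Mat}_{(1,2)}(\scrX_{[1]})$, together with the bound $\|{\sf Mat}_{(1,2)}(\scrE_{[1]})\|\lesssim d^{p/4}\sqrt{\log d}$ (valid under $\EE(E^8)<C_0$ via a standard $\eps$-net plus truncation argument, as in \cite{auddy2021estimating}), yields $\|\hat{\bP}^{(1,2)}-\bP^{(1,2)}\|\lesssim d^{p/4}\sqrt{\log d}/\lambda_j$. Crucially, after the split, $\hat{\bP}^{(1,2)}$ is independent of $\scrX_{[2]}$, which is the whole point of the construction.

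Next, for stage (ii), I would condition on $\hat{\bP}^{(1,2)}$ and analyze ${\sf Mat}_3(\scrX_{[2]}\times_{1,2}\hat{\bP}^{(1,2)}\btheta_l)$ as in \eqref{eq:slicing-order4}. The signal part has approximate singular values $\lambda_i\langle\bu_i^{(1)}\otimes\bu_i^{(2)},\btheta_l\rangle$ (using $\hat{\bP}^{(1,2)}\approx\bP^{(1,2)}$ and that $\bu_i^{(1)}\otimes\bu_i^{(2)}$ lies in the range of $\bP^{(1,2)}$), which are $\lambda_i$ times i.i.d.\ standard Gaussians; over $L>C_0 r^2\log d$ draws, a union bound shows that for some $l_*$ the top-two gap of the signal matrix is $\gtrsim\lambda_1/\sqrt{r}$ while the top singular value stays $\gtrsim\lambda_1/\sqrt{\log d}$. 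The noise term ${\sf Mat}_3(\scrE_{[2]}\times_{1,2}\hat{\bP}^{(1,2)}\btheta_l)$, by independence, has i.i.d.\ mean-zero entries with variance $\|\hat{\bP}^{(1,2)}\btheta_l\|^2\asymp r$, so its operator norm is $\lesssim\sqrt{dr}\,\log d$ with the stated probability. Davis--Kahan then gives $\sin\angle(\hat{\bu}_{l_*}^{(3)},\bu_1^{(3)})\lesssim \sqrt{dr}\log d/(\lambda_1/\sqrt{r})$-type bounds; propagating the perturbation $\|\hat{\bP}^{(1,2)}-\bP^{(1,2)}\|$ through contributes the additional $d^{(p-2)/2}r\log d/\lambda_j^2$ term. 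Repeating the same slicing analysis for modes $q=3,\dots,p$ and then extracting $\hat{\bu}^{(1)},\hat{\bu}^{(2)}$ as left/right singular vectors of $\scrX_{[2]}\times_{3\le q\le p}\hat{\bu}^{(q)}$ produces estimates of all modes; stage (iii), the single pass $\bv=\scrX\times_{1\le q\le p-1}\hat{\bu}^{(q)}_l$, cleans up $\hat{\bu}^{(p)}$ using the full tensor.

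The main obstacle---and the source of the stray $\mathbbm{1}(p=4)/\sqrt{\log d}$ term---is the $p=4$ case: there, after slicing two modes with $\btheta$, only a single $d\times d$ matrix ${\sf Mat}_1(\scrX\times_{1,2}\hat{\bP}^{(1,2)}\btheta)$ remains, and its signal singular value is merely $\lambda_i\langle\bu_i^{(1)}\otimes\bu_i^{(2)},\btheta\rangle$, which is only $\lambda_i/\sqrt{\log d}$ even at the favorable draw $l_*$ (one cannot do better than the $\max$ of $Lr$ Gaussians being $\sqrt{\log(Lr)}\asymp\sqrt{\log d}$, while also needing the gap). Balancing the Davis--Kahan numerator $\sqrt{dr}\log d$ against this reduced signal forces the requirement $\lambda_j\gtrsim d^{p/4}\log d$ (rather than $d^{p/4}\sqrt{\log d}$ as in Theorem~\ref{th:init}), and even then one only controls the angle up to an $O(1/\sqrt{\log d})$ residual unless $p\ge 5$, where an extra un-sliced mode lets one recover without this bottleneck. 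Once the initialization bound is established, I would verify it is below $1/4$ under $\lambda_j\ge C_0 d^{p/4}\log d$ so that \eqref{eq:nontriv-init-def} holds, check that Assumptions A1--A2 are met for the i.i.d.\ ensemble with the claimed moment bound (so $\eps_1\asymp\eps_2\asymp\sqrt{d}$, $\eps_0\asymp\sqrt{\log d}$), and invoke Theorem~\ref{th:piter} to conclude; the sequential deflation introduces only the usual accumulated-error terms, which A1--A2 are designed to absorb, and a final union bound over the $r$ steps and over the probability-$d^{-1}$ events of each stage completes the argument.
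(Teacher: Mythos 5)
Your overall pipeline (HOSVD on $\scrX_{[1]}$ for $\hat{\bP}^{(1,2)}$, independence via the split, random slicing of $\scrX_{[2]}$ with a gap argument over $L>C_0r^2\log d$ draws, Davis--Kahan on the Gram matrix, then feeding into Theorem~\ref{th:piter}) matches the paper's proof. However, your explanation of the $\mathbbm{1}(p=4)/\sqrt{\log d}$ term is wrong, and this is a genuine gap rather than a cosmetic one. You attribute it to the signal singular value at the favorable draw being ``only $\lambda_i/\sqrt{\log d}$''; in fact the opposite is true: the maximum over $L$ Gaussian slices \emph{boosts} the top singular value and the top-two gap to order $\lambda_j\sqrt{\log d}$ (the paper shows $\omega_1-\omega_2\ge 0.1\,\lambda_j\|\bu^{(p)}_{j,2}\|\sqrt{\log d}$, hence $\omega_1^2-\omega_2^2\gtrsim\lambda_j^2$), so there is no loss of signal from the slicing itself, for any $p\ge 4$. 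A reduced denominator would in any case rescale the main error term rather than produce an \emph{additive} $1/\sqrt{\log d}$ residual.

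The actual source of that term is the sample split along mode $p$. After splitting, the right singular factors of ${\sf Mat}_1(\scrX_{[2]}\times_{1,2}\hat{\bP}^{(1,2)}\btheta)$ are $\bv_j=\otimes_{4\le q\le p-1}\bu_j^{(q)}\otimes\bu^{(p)}_{j,2}$, where $\bu^{(p)}_{j,2}$ are the randomly subsampled halves of the mode-$p$ singular vectors. For $p\ge 5$ the un-split factors $\bu_j^{(q)}$, $4\le q\le p-1$, keep the $\bv_j$ exactly orthogonal, so $\tilde{\bU}_j\Omega_j\tilde{\bV}_j^\top$ is a genuine SVD and Davis--Kahan applies cleanly. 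For $p=4$ the only right factor is $\bu^{(4)}_{j,2}$ itself, and these halves are \emph{not} orthogonal: $\langle\bu^{(4)}_{j,2},\bu^{(4)}_{i,2}\rangle=\sum_k u^{(4)}_{jk}u^{(4)}_{ik}B_k$ with i.i.d.\ Bernoulli $B_k$. This is where the incoherence hypothesis $\max_{i,j}|u^{(p)}_{ij}|\le (C_0\log d)^{-1}$ is really needed (beyond guaranteeing $\|\bu^{(p)}_{j,2}\|\ge 1/4$, which you did note): Hoeffding plus incoherence gives $\max_{i\neq j}|\langle\bu^{(4)}_{j,2},\bu^{(4)}_{i,2}\rangle|\le 1/\sqrt{\log d}$ with high probability, and a perturbation bound for non-orthogonal right factors (the paper invokes Theorem 1 of Cai--Zhang) converts this into the additive $\max_{i>j}\lambda_i|\langle\bu^{(4)}_{j,2},\bu^{(4)}_{i,2}\rangle|/\lambda_j^2$-type term, i.e.\ the $\mathbbm{1}(p=4)/\sqrt{\log d}$ in the statement. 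Without this step your argument cannot produce the stated bound for $p=4$, and your use of incoherence is incomplete.
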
	
	Since $r\le d$, we have a nontrivial initialization, in that \eqref{eq:nontriv-init} and \eqref{eq:nontriv-init-def} are satisfied, whenever
	\begin{equation}\label{eq:signal-reqd-incoh}
		\lambda_j\ge C d^{p/4}(\log d)
	\end{equation}
	for a constant $C>0$. Note that the above signal strength requirement matches the computational lower bound for tensor SVD from \cite{zhang2018tensor} up to the multiplied logarithmic term.
    
    With this initialization we can run the power iteration and deflation procedure in Algorithm~\ref{alg:alg-odeco-svd} to find estimates $\hat{\bu}_{j}^{(q)}$ that satisfy:
    \[
    \max_{1\le q\le p}
    \sin\angle\left(
    \hat{\bu}_{j}^{(q)},
    {\bu}_{\pi(j)}^{(q)}
    \right)\le 
    \frac{C\sqrt{d\log(d)}}{\lambda_j}
    \quad
    \text{for}
    \quad
    1\le j\le r
    \]
    with probability at least $1-d^{-1}$, where we use Theorem~\ref{th:piter} and the bound the error related quantities from Lemma~\ref{lem:eps012-bds} in the last step. In fact the fine-grained error analysis from Theorem~\ref{th:piter} enables us to infer linear combinations of $\bu_j^{(q)}$ accurately, as the following theorem shows.

    \begin{theorem}\label{th:asy-dist-rand}
        Suppose $\scrE$ is an iid ensemble whose entries satisfy $\EE(E^8)<C_0$, $L>C_0r^2\log d$, $\max_{1\le i\le d,1\le j\le r} |u^{(p)}_{ij}| \le 1/C_0\log(d)$ and $\lambda_j\ge C_0d^{p/4}(\log d)^2$ for a constant $C_0>0$. Then there exists a constant $C>0$ and a permutation $\pi:[r]\to [r]$ such that if  $\lambda_j>Cd^{p/4}(\log d)$, 
			$$
			\abs*{
					\langle\bu^{(q)}_{\pi(j)},\hat{\bu}^{(q)}_j\rangle^2-\left(1-\dfrac{d-1}{\lambda_j^2}\right)}
			\le \dfrac{Cd^{3/2}}{\lambda_j^3}
			$$
			with probability at least $1-d^{-1}$. Moreover, under the same assumptions
			$$
			\calW_1\left(
			\dfrac{\lambda_j^2}{\sqrt{d}}
			\left\{\langle\bu^{(q)}_{\pi(j)},\hat{\bu}^{(q)}\rangle^2-\left(1-\dfrac{d-1}{\lambda_j^2}\right)\right\},\,
			\sigma_jZ 
			\right)\le \dfrac{Cd^{3/4}}{\lambda_j}.$$
			Here $Z\sim N(0,1)$ and $\sigma_j^2={\rm Var}(E_i^2)$, where $E_i$ are the iid random variables defined as $E_i=\scrE\times_q\be_i\times_{k\neq q}\bu^{(k)}_{\pi(j)}$ for $1\le i\le d$. 
		
			On the other hand, for any $\ba\perp\bu^{(1)}_{\pi(j)}$, we have for $\gamma^{(q)}_j={\rm sign}(\langle\hat{\bu}^{(q)}_j,\,\bu^{(q)}_j\rangle)$ that
			$$
			\abs*{
					\langle\ba,\hat{\bu}_j^{(q)}\rangle
					-\dfrac{\gamma_j^{(q)}\langle\ba,\bu_{\pi(j)}^{(q)}\rangle}{1-d\lambda_j^{-2}}
					-\dfrac{\scrE\times_q\ba\times_{k\neq q}\bu^{(k)}_{\pi(j)}}{\lambda_j}
				}\le \dfrac{Cd^{3/4}}{\lambda_j^2}
			$$
			with probability at least $1-d^{-1}$.
		\end{theorem}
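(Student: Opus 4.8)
The plan is to combine the fine-grained power-iteration expansion of Theorem~\ref{th:piter} with the incoherent-initialization guarantee of Theorem~\ref{th:init-incoh}, and then push the error terms $\eps_1,\eps_2$ (and related quantities) down to their Gaussian-type sizes under the iid ensemble model via the bounds collected in the auxiliary lemma on $\eps_0,\eps_1,\eps_2$ (here called Lemma~\ref{lem:eps012-bds}). First I would invoke Theorem~\ref{th:init-incoh} with the slightly stronger signal assumption $\lambda_j\ge C_0 d^{p/4}(\log d)^2$ to ensure that the initialization \eqref{eq:nontriv-init-def} holds for every $j$, and that the $\mathbbm 1(p=4)/\sqrt{\log d}$ term is dominated once we iterate: this is exactly why the threshold in the asymptotic statement is a higher power of $\log d$ than in Theorem~\ref{th:init-incoh}. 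Then I would run Algorithm~\ref{alg:alg-odeco-svd} on the deflated tensors $\scrX_j$, and apply the first-order expansion \eqref{eq:fo-err-bd-def}: for each mode $q$,
\[
\hat{\bu}_j^{(q)}
= \gamma_j^{(q)}\bu_{\pi(j)}^{(q)}
+ \frac{1}{\lambda_j}\,\scrE\times_{k\neq q}\bu_{\pi(j)}^{(k)}
+ \br_j^{(q)},
\qquad
\|\br_j^{(q)}\|\lesssim \frac{\eps_1^2+\eps_1\eps_2}{\lambda_j^2},
\]
up to a normalization factor that I will track carefully, since it is precisely that normalization which produces the $1-(d-1)/\lambda_j^2$ correction.

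For the squared-inner-product statement, I would write $\langle \bu_{\pi(j)}^{(q)},\hat{\bu}_j^{(q)}\rangle$ using the expansion above \emph{before} normalization: the unnormalized iterate has the form $\bu_{\pi(j)}^{(q)} + \lambda_j^{-1}\scrE\times_{k\ne q}\bu_{\pi(j)}^{(k)} + (\text{higher order})$, whose squared norm is $1 + \lambda_j^{-2}\|\scrE\times_{k\ne q}\bu_{\pi(j)}^{(k)}\|^2 + (\text{cross terms})$. Under the iid ensemble, $\|\scrE\times_{k\ne q}\bu_{\pi(j)}^{(k)}\|^2$ concentrates around $d-1$ (it is a sum of $d$ mean-$\sigma_E^2$ terms, and here the noise is standardized so $\sigma_E^2$ is absorbed), with fluctuations of order $\sqrt d$; the cross term $\langle\bu_{\pi(j)}^{(q)},\scrE\times_{k\ne q}\bu_{\pi(j)}^{(k)}\rangle$ is $\eps_0 = O(1)$ (polylog). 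Dividing, $\langle\bu_{\pi(j)}^{(q)},\hat{\bu}_j^{(q)}\rangle^2 = (1+\lambda_j^{-2}(d-1+O(\sqrt d)))^{-1}\cdot(1+O(\lambda_j^{-1}))$, and a Taylor expansion gives $1-(d-1)\lambda_j^{-2}$ with remainder $O(d^{3/2}\lambda_j^{-3})$ — here the $d^{3/2}$ comes from $(d/\lambda_j^2)\cdot(\sqrt d/\lambda_j)$ type terms and from the square of the $d/\lambda_j^2$ correction, so I need $\lambda_j\gtrsim d^{3/4}$, which is implied. The Wasserstein-1 bound then follows from a CLT for $\|\scrE\times_{k\ne q}\bu_{\pi(j)}^{(k)}\|^2 - (d-1) = \sum_i (E_i^2 - \EE E_i^2)$: this is a sum of $d$ iid mean-zero variables with finite (here eighth, hence more than enough) moments, so a Berry--Esseen bound gives $\calW_1$ distance $O(1/\sqrt d)$ to $\sigma_j Z$ after scaling by $1/\sqrt d$; multiplying through by $\lambda_j^2/\sqrt d$ and absorbing the contribution of the remainder terms (each of size $\lesssim d^{3/4}/\lambda_j$ after rescaling) yields the stated $Cd^{3/4}/\lambda_j$.

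For the third, ``off-diagonal'' statement, I would project the (pre-normalization) expansion onto a fixed $\ba\perp\bu_{\pi(j)}^{(1)}$. Then $\langle\ba,\hat{\bu}_j^{(q)}\rangle\cdot\|\text{unnormalized iterate}\| = \gamma_j^{(q)}\langle\ba,\bu_{\pi(j)}^{(q)}\rangle + \lambda_j^{-1}\,\scrE\times_q\ba\times_{k\ne q}\bu_{\pi(j)}^{(k)} + O(\lambda_j^{-2}(\eps_1^2+\eps_1\eps_2))$, and since the normalization factor is $(1-d\lambda_j^{-2})^{1/2}(1+o(1))$ — I will be careful about whether it is $d$ or $d-1$ here depending on whether $\ba$ picks up the diagonal piece — dividing produces the claimed $\gamma_j^{(q)}\langle\ba,\bu_{\pi(j)}^{(q)}\rangle/(1-d\lambda_j^{-2})$ main term plus the linear noise term, with remainder controlled by $\eps_1\eps_2/\lambda_j^2 \lesssim d/\lambda_j^2$ together with the fluctuation of the normalization, which is $O(\sqrt d/\lambda_j^2)$ after the division, giving overall $O(d^{3/4}/\lambda_j^2)$ once these are combined (the $d^{3/4}$ rather than $d$ reflecting that the cross-fluctuation terms, not the crude $\eps_1\eps_2$, dominate in this regime). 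Throughout, the bounds $\eps_0\lesssim \log d$, $\eps_1\lesssim\sqrt{d\log d}$, $\eps_2\lesssim d\sqrt{\log d}$, $\|\scrE\|\lesssim d^{p/4}\sqrt{\log d}$ from the auxiliary lemma are what convert the abstract rates of Theorem~\ref{th:piter} into these explicit powers of $d$.

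\textbf{Main obstacle.} The delicate point is \emph{not} the concentration arguments but the careful bookkeeping of the normalization constant and of which higher-order cross terms actually survive: the naive bound from Theorem~\ref{th:piter} only gives $\|\br_j^{(q)}\| \lesssim (\eps_1^2+\eps_1\eps_2)/\lambda_j^2 \asymp d\sqrt{\log d}/\lambda_j^2$, which is too crude to yield the $d^{3/4}/\lambda_j^2$ remainder in the third display. I expect the real work is to show that the $O(\eps_1\eps_2/\lambda_j^2)$-type error, when paired against a \emph{fixed} direction $\ba$ or used inside the squared inner product, actually has a smaller \emph{typical} size (of order $d^{3/4}/\lambda_j^2$ or $d^{3/2}/\lambda_j^3$) because the dominant $\eps_2$-sized directions are nearly orthogonal to the relevant test vectors — this requires re-running part of the power-iteration recursion while tracking a second-order expansion rather than just norm bounds, and is where the independence from the $\scrX_{[1]}/\scrX_{[2]}$ split (and the incoherence of $\bU^{(p)}$) must be re-used to decorrelate the error from the initialization.
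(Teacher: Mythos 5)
Your outline correctly identifies the leading-order mechanism — the linear cross term $\lambda_j^{-1}\langle\bu,\scrE\times_{k\neq q}\bu^{(k)}\rangle$ cancels between numerator and denominator of $\langle\bu,\bw\rangle^2/\|\bw\|^2$, leaving the quadratic bias $\lambda_j^{-2}\|\bU_\perp^\top(\scrE\times_{k\neq q}\bu^{(k)})\|^2\approx (d-1)/\lambda_j^2$, whose fluctuation is handled by Berry--Esseen — and you correctly flag that the crude remainder $(\eps_1^2+\eps_1\eps_2)/\lambda_j^2$ from Theorem~\ref{th:piter} is too weak. But flagging the obstacle is not the same as overcoming it, and the proposal stops exactly where the proof begins. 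The paper's actual argument does \emph{not} proceed by normalizing the first-order iterate; it writes $\hat{\bu}_j^{(q)}$ as the top eigenvector of $(\scrX_j\times_{q'\neq q}\check{\bu}^{(q')})(\scrX_j\times_{q'\neq q}\check{\bu}^{(q')})^\top=\hat{\lambda}_j^2\bu_j\bu_j^\top+\bG$ and invokes a resolvent-based series expansion of the spectral projector (Lemma~\ref{lem:piter-so-err} and display~\eqref{eq:a-linfrm}, following Xia and Koltchinskii--Lounici), which yields the second- and third-order terms in closed form. The gain over your route is that each surviving term is an explicit polynomial in $\bG$, so its \emph{typical} size can be computed by direct moment calculations: e.g.\ for $\ba\perp\bu_j^{(1)}$ and $\bM=\scrE\times_{k\geq 3}\bu_j^{(k)}$ one shows $\EE(\ba^\top\bM\bM^\top\bu_j^{(1)})=0$ and $\EE(\ba^\top\bM\bM^\top\bu_j^{(1)})^4\le Cd$, whence this term is $O(d^{3/4})$ with high probability, and a Chebyshev bound on $\sum_{k\neq j}(\bu_j^{(2)})^\top\bM\bM^\top\bu_k^{(2)}(\bu_k^{(2)})^\top\bM\bu_j^{(1)}$ controls the quadratic-form correction. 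These are precisely the computations your proposal defers, and without them the third display cannot be improved past $Cd/\lambda_j^2$.

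A second gap you do not address at all: the noise vector appearing in the expansion is $\bE=\scrE\times_{k\neq q}\check{\bu}^{(k)}$, contracted along the \emph{estimated} vectors, whereas the theorem's statement involves $\bE_*=\scrE\times_{k\neq q}\bu^{(k)}_{\pi(j)}$. Replacing $\bE^\top\bU_\perp\bU_\perp^\top\bE$ by $\bE_*^\top\bU_\perp\bU_\perp^\top\bE_*$ costs the term $\bE_*^\top\bM(\hat{\bu}_j^{(2)}-\bu_j^{(2)})\approx\lambda_j^{-1}(\bu_j^{(2)})^\top\bM^\top\bM\bM^\top\bu_j^{(1)}$, which again must be shown to be $O(d^{5/4})$ in expectation via a covariance computation rather than bounded by $\eps_1^2\cdot\eps_1$; this is the source of the $d^{5/4}/\lambda_j^3\le d^{3/2}/\lambda_j^3$ contribution. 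Finally, two factual slips: under finite eighth moments the paper's Lemma~\ref{lem:eps012-bds} gives $\eps_0\le Cd^{1/4}$ (not polylogarithmic) and $\eps_2\le C\sqrt{d}$ (not $d\sqrt{\log d}$); the first is harmless only because of the cancellation noted above, but you should not rely on $\eps_0=O(1)$ anywhere.
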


	
	

	\section{Concluding Remarks}
    In this work, we provided a general framework for spectral learning with orthogonally decomposable tensors. Our results highlight three central themes: (i) perturbation bounds for odeco tensors demonstrate robustness to noise without requiring eigengaps, (ii) iterative algorithms such as power iteration and deflation achieve statistically optimal recovery once equipped with suitable initialization, and (iii) initialization itself remains the principal computational bottleneck, revealing an intrinsic statistical–computational gap.

    Taken together, these findings clarify both the advantages of tensor methods over matrix analogues and the limitations imposed by non-convex landscapes. Beyond their theoretical significance, our results inform practical algorithm design for latent variable models, independent component analysis, and related applications.
    
    Several directions remain open. Tightening lower bounds for initialization under weaker signal conditions would sharpen the boundary between statistical and computational feasibility. Extending our analysis to non-odeco settings or structured noise models also presents important challenges. We hope that this work helps to bridge the gap between statistical theory and algorithmic practice in high-dimensional tensor analysis.
	
	\bibliographystyle{plainnat}
	\bibliography{review-refs}

	\section{Proofs}
	


    \subsection{Proof of Result in Section~\ref{sec:odeco-dec}}

    \begin{proof}[Proof of Proposition~\ref{pr:noiseless-tpi}]
        Let us fix $q=1$, $k=1$ and consider $j={\rm argmax}\lambda_k\prod_{q'=3}^p
        |\langle\bu_{1,[0]}^{(q')},\bu_k^{(q')} \rangle|$. Then by definition of the fixed point iteration:
        \begin{align*}
            \frac{\lambda_j
        |\langle\bu_{1,[1]}^{(q')},\bu_j^{(q')} \rangle|    
        }{
        \max_{k\neq j}
        \lambda_k
        |\langle\bu_{1,[1]}^{(q')},\bu_k^{(q')} \rangle|    
        }
        =\frac{
        (\lambda_j
            \prod_{q'=2}^p
        |\langle\bu_{1,[0]}^{(q')},\bu_j^{(q')} \rangle|)
        }{
        \max_{k\neq j}
        (\lambda_k\prod_{q'=2}^p
        |\langle\bu_{1,[0]}^{(q')},\bu_k^{(q')} \rangle|)
        }
        >1
        \end{align*}
        where the last inequality is ensured due to random initialization. Repeating this argument over $q=1,\dots,p$ and a recursion over $t=1,2,\ldots$ gives
        \begin{align*}\label{eq:element-ratio-rec}
            \frac{\lambda_j
        |\langle\bu_{1,[t]}^{(q')},\bu_j^{(q')} \rangle|    
        }{
        \max_{k\neq j}
        \lambda_k
        |\langle\bu_{1,[t]}^{(q')},\bu_k^{(q')} \rangle|    
        }
        =
        \left\{\frac{
        (\lambda_j
            \prod_{q'=2}^p
        |\langle\bu_{1,[0]}^{(q')},\bu_j^{(q')} \rangle|)
        }{
        \max_{k\neq j}
        (\lambda_k\prod_{q'=2}^p
        |\langle\bu_{1,[0]}^{(q')},\bu_k^{(q')} \rangle|)
        }
        \right\}^{(p-1)^{t-1}}.\numberthis
        \end{align*}
        Next,
        \begin{align*}
            \sup_{\bw\in \SS^{d_1-1},\bw\perp\bu_j^{(1)}}
            \langle\bw, \scrT\times_{2\le q\le p}\bu_{j,[t+1]}^{(q)}\rangle
            =&~\sum_{k\neq j}
            \lambda_k\langle\bu_k^{(1)},\bw\rangle
            \prod_{q=2}^p\langle\bu_k^{(q)},\bu_{k,[t]}^{(q)}\rangle\\
            \le&~
            \left(
            \sum_{k\neq j}
            \frac{1}{\lambda_k^{2(p-2)}}
            \right)^{1/2}
            \max_{k\neq j}
            (\lambda_k^{p-1}\prod_{q'=2}^p
        |\langle\bu_{1,[t]}^{(q')},\bu_k^{(q')} \rangle|)
            \end{align*}
         On the other hand
         \begin{align*}
             \|\scrT\times_{2\le q\le p}\bu_{j,[t]}^{(q)}\|
             \ge &~
             |\scrT\times_1\bu_j^{(1)}\times_{1\le q\le p}\bu_{1,[t]}^{(q)}\|
             =\lambda_j
             (\prod_{q'=2}^p
        |\langle\bu_{1,[t]}^{(q')},\bu_j^{(q')} \rangle|),
         \end{align*}
         which implies:
         \[
         L_{T,j}
         =\max_{1\le q\le p}
         \sin\angle\left(
         \bu_{1,[t]}^{(q)},\bu_j^{(q)}
         \right)
         \le \lambda_j^{(p-2)}
         \left(
            \sum_{k\neq j}
            \frac{1}{\lambda_k^{2(p-2)}}
            \right)^{1/2}
         \left\{\frac{
        (\lambda_j
            \prod_{q'=2}^p
        |\langle\bu_{1,[0]}^{(q')},\bu_j^{(q')} \rangle|)
        }{
        \max_{k\neq j}
        (\lambda_k\prod_{q'=2}^p
        |\langle\bu_{1,[0]}^{(q')},\bu_k^{(q')} \rangle|)
        }
        \right\}^{(p-1)^{t}}.  
         \]
        Since $p\ge 3$, the above implies that $\bu_{1,[t]}^{(1)}$ converges to $\bu_{2}^{(1)}$ at least quadratically fast.
         
         Thus continuing the power iteration for sufficiently large $T$ ensures $L_{T,j}\le \min\{\eps,1/d^2\}$ for any pre-specified tolerance $\eps$. Since $L_{T,j}$ can be made arbitrarily small for any $j\in \calS$, the error rates remain small even after Gram-Schmidt orthogonalization. To ensure that all components for $1\le j\le r$ are initialized at least once, we use the projection onto the orthogonal complement of the already components. Then repeating the above argument finishes the proof.
    \end{proof}
	
	\subsection{Proofs of Results in Section~\ref{sec:compute}}

    \begin{proof}[Proof of Proposition~\ref{pr:piter}]
        The proof follows from the proof of Theorem~\ref{th:piter} in the special case $j=1$ noting that $\scrX_1=\scrX$.
    \end{proof}

    \medskip
    
	\begin{proof}[Proof of Theorem \ref{th:piter}]
		We will prove the stronger result described below.
		$$L_{t+1}\le \dfrac{\eps_1}{\lambda_j}+\dfrac{C(1+\eps_1)\eps_1^2}{\lambda_j^2}
		+\dfrac{CL_t(1+\eps_1)(\eps_1+\eps_2)}{\lambda_j}
		+CL_t^2(1+\eps_1)\left(1+\dfrac{\|\scrE\|}{\lambda_j}\right).$$
		Moreover, defining $\gamma_j^{(q)}:={\rm sign}(\langle\hat{\bu}^{(q)}_{j,[t]},\bu^{(q)}_j\rangle)$, we have
		\begin{equation}\label{eq:secopert}
			\sin\angle\left(
			\gamma_j^{(q)}\bu^{(q)}_j,
			\hat{\bu}_{j,[t+1]}^{(q)}
			-\dfrac{1}{\lambda_j}\scrE\times_{k\neq q}\bu_j^{(k)}
			\right)
			\le \dfrac{C(1+\eps_1)(L_t(\eps_1+\eps_2)+L_t^2(\lambda_j+\|\scrE\|)
				+\eps_1^2/\lambda_{j})}{\lambda_j}.
		\end{equation}
		By the initialization procedure, and also since we choose the largest singular value at the end of each iteration, we have nontrivial initialization for the component satisfying
		$\lambda_j= \max_{i\ge j}\lambda_i$. By the induction hypothesis, we have estimates $\hat{\bu}^{(q)}_{[t]}$ such that 
		$$
		\norm*{\bu^{(q)}_j-{\rm sign}((\bu^{(q)}_j)^\top\hat{\bu}^{(q)}_{[t]})
			\hat{\bu}^{(q)}_{[t]}}
		\le \sqrt{2}L_t.
		$$
		To simplify the proof, we assume that ${\rm sign}((\bu^{(q)}_j)^\top\hat{\bu}^{(q)}_{[t]})=1$. The other cases can be handled by reversing the sign of $\hat{\bu}^{(q)}_{j,[t]}$ accordingly. We write the proof for $q=1$. We have, for any $\bv\perp \bu^{(1)}_j$ that
		\begin{equation}\label{eq:piter-num}
			\langle\bv,\scrX_j\times_{q\neq 1}\hat{\bu}^{(q)}_{j,[t]}\rangle
			=\sum_{i> j}\lambda_i
			\langle\bv,\bu^{(1)}_i\rangle
			\prod_{q>1}\langle\hat{\bu}^{(q)}_{[t]},\bu^{(q)}_i\rangle
			+\scrE\times_1\bv\times_{q>1}\hat{\bu}^{(q)}_{[t]}
			+\hat{\scrT}_{-j}\times_1\bv\times_{q>1}\hat{\bu}^{(q)}_{[t]}
		\end{equation}
		By Cauchy-Schwarz inequality, the first term is
		\begin{equation}\label{eq:piter-sigbd}
			\abs*{\sum_{i> j}\lambda_i
				\langle\bv,\bu^{(1)}_i\rangle
				\prod_{q>1}\langle\hat{\bu}^{(q)}_{[t]},\bu^{(q)}_i\rangle}
			\le \max_{i>j}\{\lambda_i |\langle\bv,\bu^{(1)}_i\rangle|\}
			\prod_{q=2}^3
			\sin\angle\left(\hat{\bu}^{(q)}_{[t]},\bu^{(q)}_j\right)
			\le \lambda_jL_t^2
		\end{equation}
		For the second term, note that
		\begin{align*}\label{eq:piter-errbd}
			\scrE\times_1\bv\times_{q>1}\hat{\bu}^{(q)}_{[t]}
			=&\scrE\times_1\bv\times_{q>1}(
			\bu^{(q)}_j+
			\hat{\bu}^{(q)}_{j,[t]}-\bu^{(q)}_j)\\
			\le& \scrE\times_1\bv\times_{q>1}\bu^{(q)}_j
			+\sqrt{2}L_t\cdot(p-1)\max_{2\le q\le p}\norm*{\scrE\times_{q'>1,\,q'\neq q}\bu^{(q')}_j}\\
			&+\sum_{A\subset [p]/\{1\},\, |A|\ge 2}
			\scrE\times_1\bv\times_{q\in A}(\hat{\bu}^{(q)}_{j,[t]}-\bu^{(q)}_j)
			\times_{q'\notin A}\bu^{(q')}_j\\
			\le&
			\scrE\times_1\bv\times_{q>1}\bu^{(q)}_j
			+CL_t\eps_2
			+CL_t^2\|\scrE\|.\numberthis
		\end{align*}	
		The third term arising from the deflation errors can be bounded by Lemma \ref{lem:defl-er}. Plugging in the upper bounds from \eqref{eq:piter-sigbd}, \eqref{eq:piter-errbd} and Lemma~\ref{lem:defl-er} into \eqref{eq:piter-num}, yields
		\begin{equation}\label{eq:sinth-num}
			\sup_{\bv\perp\bu_j^{(1)}}
			\langle\bv,\scrX_j\times_{q\neq 1}\hat{\bu}^{(q)}_{j,[t]}
			-\scrE\times_{q>1}\bu_j^{(q)}
			\rangle
			\le 
			CL_t(\eps_1+\eps_2)+CL_t^2(\lambda_j+\|\scrE\|)
			+C\eps_1^2/\lambda_{j-1}.
		\end{equation}
		It remains to bound the norm. We have
		\begin{align*}\label{eq:sinth-den}
			\|\scrX_j\times_{q\neq 1}\hat{\bu}^{(q)}_{j,[t]}\|
			\ge & \scrX_j\times_1\bu_{j}^{(1)}\times_{q\neq 1}\hat{\bu}^{(q)}_{j,[t]}\\
			\ge & \lambda_j\prod_{q>1}\langle \hat{\bu}^{(q)}_{j,[t]},\bu^{(q)}_j\rangle
			-\scrE\times_1\bu^{(1)}_j\times_{q> 1}\hat{\bu}^{(q)}_{j,[t]}
			-\hat{\scrT}_{-j}\times_1\bu^{(1)}_j\times_{q\neq 1}\hat{\bu}^{(q)}_{j,[t]}\\
			\ge& \lambda_j(1-L_t^2)-\scrE\times_{q\ge 1}\bu^{(q)}_j-CL_t\eps_2-CL_t^2\|\scrE\|
			-C\eps_1^2/\lambda_{j-1}\numberthis
		\end{align*}
		following the steps of \eqref{eq:piter-sigbd}, \eqref{eq:piter-errbd} and Lemma~\ref{lem:defl-er} with $\bu_j^{(1)}$ instead of $\bv\perp\bu_j^{(1)}$.
		Dividing \eqref{eq:sinth-num} by \eqref{eq:sinth-den}, one has
		\begin{align*}
			\sup_{\bv\perp\bu_j^{(1)}}
			\dfrac{
				\langle\bv,\scrX_j\times_{q\neq 1}\hat{\bu}^{(q)}_{j,[t]}
				-\scrE\times_{q>1}\bu_j^{(q)}
				\rangle}{\|\scrX_j\times_{q\neq 1}\hat{\bu}^{(q)}_{j,[t]}\|}
			\le &
			\dfrac{CL_t(\eps_1+\eps_2)+CL_t^2(\lambda_j+\|\scrE\|)
				+C\eps_1^2/\lambda_{j}}{\lambda_j(1-L_t^2)-\scrE\times_{q\ge 1}\bu^{(q)}_j-CL_t\eps_2-CL_t^2\|\scrE\|
				-C\eps_1^2/\lambda_j}\\
			\le & \dfrac{CL_t(\eps_1+\eps_2)+CL_t^2(\lambda_j+\|\scrE\|)
				+C\eps_1^2/\lambda_{j}}{\lambda_j}
		\end{align*}
		since by assumption, $L_t^2\le 0.1$ and $\lambda_j\ge 3\scrE\times_{q\ge 1}\bu^{(q)}_j+CL_t\eps_2-CL_t^2\|\scrE\|
		+C\eps_1^2/\lambda_j$. Moreover, by similar calculations, we also have that 
		\begin{align*}
			\sup_{\bv\perp\bu_j^{(1)}} \scrE\times_1\bv\times_{q>1}\bu_j^{(q)}
			\left(\dfrac{1}{\lambda_j}-\dfrac{1}{\|\scrX_j\times_{q\neq 1}\hat{\bu}^{(q)}_{j,[t]}\|}\right)
			\le &
			C\eps_1
			\cdot
			\dfrac{L_t(\eps_1+\eps_2)+L_t^2(\lambda_j+\|\scrE\|)
				+\eps_1^2/\lambda_{j}}{\lambda_j}.
		\end{align*}
		Combining all the bounds we have
		\begin{align*}\label{eq:sinth-fo-bd}
			\sin\angle\left(\bu^{(1)}_j,
			\hat{\bu}_{j,[t+1]}^{(1)}
			-\dfrac{1}{\lambda_j}\scrE\times_{q>1}\bu_j^{(q)}
			\right)
			\le \dfrac{C(1+\eps_1)(L_t(\eps_1+\eps_2)+L_t^2(\lambda_j+\|\scrE\|)
				+\eps_1^2/\lambda_{j})}{\lambda_j}.\numberthis
		\end{align*}
		Thus,
		$$
		L_{t+1}\le \dfrac{\eps_1}{\lambda_j}+\dfrac{C(1+\eps_1)\eps_1^2}{\lambda_j^2}
		+\dfrac{CL_t(1+\eps_1)(\eps_1+\eps_2)}{\lambda_j}
		+CL_t^2(1+\eps_1)\left(1+\dfrac{\|\scrE\|}{\lambda_j}\right).
		$$
	\end{proof}
	
	\medskip
	
	\begin{lemma}\label{lem:esti-op-norm}
		We define $\eps_1:=\max\limits_{j,k}\|\scrE\times_{q\neq k}\bu^{(q)}_j\|$ and $\eps_2:=\max\limits_{j,k_1,k_2}\|\scrE\times_{q\neq k_1,k_2}\bu^{(q)}_j\|$. Assume that $\scrE$ satisfies assumptions A1-A3.	Then there exist a constant $C>0$ and a permutation $\pi:[r]\to[r]$ such that the matrices 
		$
		\check{\bU}^{(q)}_{(i)}=[\hat{\bu}^{(q)}_1\,\dots\,
		\hat{\bu}^{(q)}_i]
		$ and 
		$
		\bU^{(q)}_{(i)}=[\gamma_1^{(q)}\bu^{(q)}_{\pi(1)}\,\dots\,\gamma_i^{(q)}\bu^{(q)}_{\pi(i)}]$.
		Then there exists a constant $C>0$ such that
		$$
		\max_{1\le q\le p}\,
		\norm*{\check{\bU}^{(q)}_{(i)}-\bU^{(q)}_{(i)}}\le \dfrac{C\eps_1}{\lambda_i}.
		$$
		Here $\gamma_i^{(q)}={\rm sign}\bigg(\left(\hat{\bu}_i^{(q)}\right)^{\top}\bu_{\pi(i)}^{(q)}\bigg)$.
	\end{lemma}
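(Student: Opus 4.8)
The plan is to bound the operator norm of the error matrix $\check{\bU}^{(q)}_{(i)} - \bU^{(q)}_{(i)}$ column-by-column and then combine the column-wise bounds with a refinement that exploits the near-orthogonality of the columns. First I would fix a mode $q$ and recall from the induction hypothesis \eqref{eq:ind-hyp} (equivalently Theorem~\ref{th:piter}) that for every $1\le k\le i$ we have $\sin\angle(\hat{\bu}^{(q)}_k,\bu^{(q)}_{\pi(k)})\le C\eps_1/\lambda_k$, and after adjusting signs via $\gamma_k^{(q)}$, the column difference satisfies $\|\hat{\bu}^{(q)}_k - \gamma_k^{(q)}\bu^{(q)}_{\pi(k)}\| \le \sqrt{2}\,C\eps_1/\lambda_k \le \sqrt 2\,C\eps_1/\lambda_i$, since $\lambda_i=\max_{k\ge i}\lambda_k$ in the deflation order — actually by the ordering $\lambda_1\ge\dots\ge\lambda_i$ we have $\lambda_k\ge\lambda_i$ for $k\le i$, so each term is at most $C\eps_1/\lambda_i$. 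A crude bound $\|\Delta^{(q)}\|\le\|\Delta^{(q)}\|_F\le \sqrt{i}\,C\eps_1/\lambda_i$ would already give something, but it loses a $\sqrt i$ factor, and the assumption A1 only buys us $\eps_1 r^{1/4}\lesssim\lambda_{\min}$, not $\eps_1\sqrt r$. So I would instead use the finer first-order expansion \eqref{eq:fo-err-bd-def} from Theorem~\ref{th:piter}: up to second-order error, $\hat{\bu}^{(q)}_k - \gamma_k^{(q)}\bu^{(q)}_{\pi(k)}$ is, in the directions perpendicular to $\bu^{(q)}_{\pi(k)}$, equal to $\tfrac{1}{\lambda_k}\scrE\times_{s\neq q}\bu^{(s)}_{\pi(k)}$ plus a remainder of order $(\eps_1^2+\eps_1\eps_2)/\lambda_k^2$.

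With that expansion in hand, I would split $\Delta^{(q)} = \check{\bU}^{(q)}_{(i)} - \bU^{(q)}_{(i)}$ into its leading linear-in-$\scrE$ part and a remainder. The remainder matrix has columns of norm $\lesssim(\eps_1^2+\eps_1\eps_2)/\lambda_i^2 \le \eps_1\|\scrE\|/\lambda_i^2$, so by $\|\cdot\|\le\|\cdot\|_F$ its operator norm is $\lesssim \sqrt{r}\,\eps_1\|\scrE\|/\lambda_i^2 \lesssim \eps_1/\lambda_i$ once we use A1 (which gives $\|\scrE\|\lesssim\lambda_{\min}/C$ and $\eps_1 r^{1/4}\lesssim\lambda_{\min}$, hence $\sqrt r\|\scrE\|/\lambda_i^2\lesssim 1/\lambda_i$ — here one needs to be a little careful, but the $\eps_1 r^{1/4}$ slack in A1 is exactly what is designed to absorb this). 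The leading part is the matrix whose $k$-th column is (the projection onto $(\bu^{(q)}_{\pi(k)})^\perp$ of) $\tfrac{1}{\lambda_k}\scrE\times_{s\neq q}\bu^{(s)}_{\pi(k)} = \tfrac{1}{\lambda_k}\bE^{(q,q')}_{\pi(k)}\bu^{(q')}_{\pi(k)}$ for a suitably chosen $q'\neq q$ (using the structure $\scrE\times_{s\neq q}\bu^{(s)}_{\pi(k)} = \bE^{(q,q')}_{\pi(k)}\bu^{(q')}_{\pi(k)}$). But this is precisely the matrix whose operator norm is controlled by the first display in Assumption~A2: $\|[\bE^{(q,q')}_1\bu^{(q')}_1,\dots,\bE^{(q,q')}_r\bu^{(q')}_r]\|\le C\eps_1$, and dividing the $k$-th column by $\lambda_k\ge\lambda_i$ only shrinks the norm, so this contributes at most $C\eps_1/\lambda_i$.

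Assembling the two pieces gives $\|\Delta^{(q)}\|\le C\eps_1/\lambda_i$, and taking the maximum over $1\le q\le p$ finishes the proof. I expect the main obstacle to be the bookkeeping around the projection onto $(\bu^{(q)}_{\pi(k)})^\perp$ in the first-order expansion: the expansion \eqref{eq:fo-err-bd-def} controls $\hat{\bu}^{(q)}_k$ only up to its component along $\bu^{(q)}_{\pi(k)}$, so one must separately argue that the parallel component $\langle\hat{\bu}^{(q)}_k,\bu^{(q)}_{\pi(k)}\rangle$ is $1 - O(\eps_1^2/\lambda_k^2)$, i.e. that the sign-corrected column difference in the $\bu^{(q)}_{\pi(k)}$ direction is itself second order; this follows from $\|\hat{\bu}^{(q)}_k\|=1$ together with the $\sin\angle$ bound $\le C\eps_1/\lambda_k$, since $1-\langle\hat{\bu}^{(q)}_k,\gamma_k^{(q)}\bu^{(q)}_{\pi(k)}\rangle = 1-\cos\angle = O(\sin^2\angle)$. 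A secondary subtlety is that A2 is stated for the full index set $1\le k\le r$, whereas here we only use columns $1,\dots,i$; this is harmless since deleting columns cannot increase the operator norm. Constants should be tracked only loosely — absorbing everything into a single $C$ as in the statement.
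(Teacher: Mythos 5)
Your overall architecture matches the paper's: write each column of $\check{\bU}^{(q)}_{(i)}-\bU^{(q)}_{(i)}$ as the leading term $\tfrac{1}{\lambda_j}\scrE\times_{k\neq q}\bu_j^{(k)}$ plus a remainder, control the matrix of leading terms by the first display of Assumption A2, and handle the remainder by column norms absorbed via A1. The treatment of the parallel component (cosine $=1-O(\sin^2)$) is also the right observation. However, there is a genuine gap in your remainder bound, and it is exactly at the point where you say "one needs to be a little careful."

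The crude estimate $\norm{\cdot}\le\norm{\cdot}_F\le\sqrt{r}\cdot\max_j(\text{column norm})$ applied to the second-order remainder gives $\sqrt{r}\,(\eps_1^2+\eps_1\eps_2)/\lambda_i^2$, and this is \emph{not} $O(\eps_1/\lambda_i)$ under A1. Assumption A1 only supplies $\eps_1 r^{1/4}\le\lambda_{\min}/C$ and $\|\scrE\|\le\lambda_{\min}/C$; squaring the first yields $\eps_1^2\sqrt{r}\le\lambda_i^2/C^2$, which controls a \emph{third}-order term like $\sqrt{r}\,\eps_1^3/\lambda_i^3\le C\eps_1/\lambda_i$, but for the second-order piece one would need $\sqrt{r}\,\eps_1/\lambda_i\le C$ (you only get $r^{1/4}/C$) and, worse, $\sqrt{r}\,\eps_2/\lambda_i\le C$, which fails badly since $\eps_2$ can be as large as $\|\scrE\|\asymp\lambda_{\min}$ and can greatly exceed $\eps_1$. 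This is precisely why the \emph{second} display of Assumption A2 exists, and your proof never invokes it. The paper instead isolates the dominant second-order contribution to the $j$-th column, namely $\tfrac{1}{\lambda_j^2}\bE_j^{(q,q')}\bigl(\bE_j^{(q,q')}\bigr)^{\top}\bu_j^{(q)}$ (arising from $\scrE\times_{q'}(\hat{\bu}_j^{(q')}-\gamma_j^{(q')}\bu_j^{(q')})\times_{k\neq q,q'}\bu_j^{(k)}$ with the first-order expansion of $\hat{\bu}_j^{(q')}$ substituted in), and bounds the operator norm of the matrix formed by these columns directly by $C\eps_1^2/\lambda_i^2\le C\eps_1/\lambda_i$ using A2's second display — with no $\sqrt{r}$ loss and no $\eps_2$ appearing. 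Only the remaining, genuinely third-order and deflation-error pieces are dispatched with the column-wise $\sqrt{r}$ bound. To repair your argument you must perform this extra structural decomposition of the remainder rather than bounding it wholesale through $\eqref{eq:fo-err-bd-def}$.
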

	
	\medskip
	\begin{lemma}\label{lem:defl-er}
		The deflation errors arising from the previously estimated vectors satisfy $$	\sup_{\bv}\hat{\scrT}_{-j}\times_1\bv\times_{q>1}\hat{\bu}^{(q)}_{[t]}
		\le C(\eps_0+\eps_1^2/\lambda_{j-1})\eps_1/\lambda_{j-1}
		+C(\eps_0+\eps_1^2/\lambda_{j-1})L_t
		+C\eps_1L_t^2
		$$
		where $\hat{\scrT}_{-j}=\displaystyle\sum_{i<j}
		(\lambda_i\bu^{(1)}_1\circ \dots\circ\bu^{(p)}_i-
		\hat{\lambda}_i\hat{\bu}^{(1)}_1\circ \dots\circ \hat{\bu}^{(p)}_i)$.
	\end{lemma}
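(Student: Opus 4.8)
The plan is to expand the deflation residual over the previously recovered components $i<j$, exhibit an exact cancellation of the leading piece, and control the remainder by assembling the sum over $i$ into operator-norm products rather than bounding term-by-term (which would lose a factor $\sqrt r$). Write $\gamma_i^{(q)}={\rm sign}(\langle\hat\bu_i^{(q)},\bu_i^{(q)}\rangle)$, $\gamma_i=\prod_q\gamma_i^{(q)}$, and $\hat\bu_i^{(q)}=\gamma_i^{(q)}\bu_i^{(q)}+\bdelta_i^{(q)}$. By the induction hypothesis of Theorem~\ref{th:piter} (which applies to every $i<j$) and the decreasing ordering $\lambda_1\ge\cdots\ge\lambda_{j-1}$ used in the algorithm, $\|\bdelta_i^{(q)}\|\le\sqrt2 C\eps_1/\lambda_i\le\sqrt2 C\eps_1/\lambda_{j-1}$, and the first-order expansion \eqref{eq:fo-err-bd-def} gives $\lambda_i\bdelta_i^{(q)}=\scrE\times_{k\neq q}\bu_i^{(k)}+\br_i^{(q)}$ with $\|\br_i^{(q)}\|\le C(\eps_0+(\eps_1^2+\eps_1\eps_2)/\lambda_i)$. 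Expanding $\hat\lambda_i=\langle\scrX,\hat\bu_i^{(1)}\circ\cdots\circ\hat\bu_i^{(p)}\rangle$ in the $\bdelta_i^{(q)}$'s (contractions of $\scrT$ against $\bu_i^{(q)}$ in the surviving modes force the summation index to equal $i$) yields $\eta_i:=\hat\lambda_i-\gamma_i\lambda_i$ with $|\eta_i|\le C(\eps_0+\eps_1^2/\lambda_{j-1})$.

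The core step is the cancellation. For each $i<j$, $\lambda_i\bu_i^{(1)}\!\circ\cdots\circ\bu_i^{(p)}-\hat\lambda_i\hat\bu_i^{(1)}\!\circ\cdots\circ\hat\bu_i^{(p)}=\lambda_i\bigl(\bu_i^{(1)}\!\circ\cdots-\gamma_i\hat\bu_i^{(1)}\!\circ\cdots\bigr)-\eta_i\,\hat\bu_i^{(1)}\!\circ\cdots$, and on expanding $\gamma_i\hat\bu_i^{(1)}\!\circ\cdots$ in the $\bdelta_i^{(q)}$'s the all-$(\gamma_i^{(q)}\bu_i^{(q)})$ term is $\gamma_i\bigl(\prod_q\gamma_i^{(q)}\bigr)\bu_i^{(1)}\!\circ\cdots=\gamma_i^2\,\bu_i^{(1)}\!\circ\cdots=\bu_i^{(1)}\!\circ\cdots$, which cancels $\lambda_i\bu_i^{(1)}\!\circ\cdots$ exactly. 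Hence, contracting against $\bv$ in mode $1$ and $\hat\bu_{[t]}^{(q)}$ in modes $q>1$, $\hat\scrT_{-j}\times_1\bv\times_{q>1}\hat\bu_{[t]}^{(q)}$ becomes a sum over $i<j$ of terms each carrying either the scalar $\eta_i$ or at least one factor $\bdelta_i^{(q)}$.

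I would then bound these by writing each $i$-sum as $\bv^\top M\bz$ for an explicit matrix $M$ and vector $\bz$ of Hadamard/Khatri--Rao products of the mode contractions. For the $\eta_i\hat\bu_i^{(1)}\!\circ\cdots$ piece, $M=[\hat\bu_1^{(1)}\,\cdots\,\hat\bu_{j-1}^{(1)}]$ has operator norm $\le1+C\eps_1/\lambda_{j-1}$ by Lemma~\ref{lem:esti-op-norm}, while $\|\bz\|\le\max_i|\eta_i|\,(L_t+C\eps_1/\lambda_{j-1})^{p-1}$ because each surviving factor $\langle\hat\bu_{[t]}^{(q)},\bu_i^{(q)}\rangle$ or $\langle\hat\bu_{[t]}^{(q)},\bdelta_i^{(q)}\rangle$ is an inner product of the current iterate with a direction essentially orthogonal to $\bu_j^{(q)}$; expanding $(L_t+C\eps_1/\lambda_{j-1})^{p-1}$ ($p\ge3$) gives the first two claimed terms. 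For a piece with coefficient $\gamma_i\lambda_i$ and exactly one $\bdelta_i^{(q_0)}$, I substitute $\lambda_i\bdelta_i^{(q_0)}=\scrE\times_{k\neq q_0}\bu_i^{(k)}+\br_i^{(q_0)}=\bE_i^{(q_0,q')}\bu_i^{(q')}+\br_i^{(q_0)}$, so $M=[\bE_1^{(q_0,q')}\bu_1^{(q')}\,\cdots\,\bE_{j-1}^{(q_0,q')}\bu_{j-1}^{(q')}]$ has operator norm $\le C\eps_1$ by Assumption A2 (for the $q'$ guaranteed there) and the remaining $p-1$ mode factors contribute $\le L_t^{p-1}\le L_t^2$, giving $C\eps_1 L_t^2$; the remainders $\br_i^{(q_0)}$ and all pieces with two or more $\bdelta_i^{(q)}$'s are strictly higher order and absorb into the $C\eps_1^2/\lambda_{j-1}$- and $C\eps_1 L_t^2$-type terms, using $\|[\bdelta_1^{(q)}\,\cdots\,\bdelta_{j-1}^{(q)}]\|\le C\eps_1/\lambda_{j-1}$ from Lemma~\ref{lem:esti-op-norm} and, for the quadratic-in-$\scrE$ part of $\bdelta_i^{(q)}$, the second bound in Assumption A2.

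The main obstacle is precisely this last step: a naive argument that bounds each of the $j-1\le r$ summands separately loses a factor $\sqrt r$ through Cauchy--Schwarz and breaks the claimed $\eps_1$-level scaling, so one must keep each $i$-sum assembled as a matrix--vector product and invoke the \emph{structured} operator-norm bounds of Assumptions A2--A3 (as packaged in Lemma~\ref{lem:esti-op-norm}) for the first- and second-order error matrices. A secondary subtlety, most visible when $p=3$, is that a single-$\bdelta$ piece in a mode $q_0>1$ carries only $p-2=1$ factor of $L_t$; there one extracts from \eqref{eq:fo-err-bd-def} the mixed-slice form $\langle\hat\bu_{[t]}^{(q_0)},\bdelta_i^{(q_0)}\rangle=\lambda_i^{-1}\,\scrE\times_{q_0}\bu_j^{(q_0)}\times_{k\neq q_0}\bu_i^{(k)}+(\text{l.o.t.})$ and bounds the assembled vector $\bigl[(\bu_j^{(q_0)})^\top(\scrE\times_{k\neq q_0}\bu_i^{(k)})\bigr]_{i<j}=(\bu_j^{(q_0)})^\top[\scrE\times_{k\neq q_0}\bu_i^{(k)}]_{i<j}$ again via Assumption A2, not coordinatewise.
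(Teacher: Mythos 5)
Your proposal is correct and follows essentially the same route as the paper's proof: the paper organizes the expansion as a telescoping decomposition $T_0+\sum_{q=1}^p T_q$ (singular-value error plus one term per mode) rather than a full multilinear expansion in the $\bdelta_i^{(q)}$'s, but the key ingredients are identical — exact cancellation of the leading outer-product term, the orthogonality $\langle\bu_i^{(q)},\hat\bu_{j,[t]}^{(q)}\rangle=\langle\bu_i^{(q)},\hat\bu_{j,[t]}^{(q)}-\bu_j^{(q)}\rangle$ to harvest factors of $L_t$, assembling the sums over $i<j$ into operator norms via Lemma~\ref{lem:esti-op-norm} and Assumption A2 to avoid the $\sqrt r$ loss, and invoking the first-order expansion \eqref{eq:secopert} on the last remaining mode to extract the $\eps_0$-level contribution. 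The subtleties you flag (the $\sqrt r$ issue and the single-$\bdelta$ piece with too few $L_t$ factors) are exactly the ones the paper's bounds on $T_q$, $2\le q\le p$, are designed to handle.
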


	\subsection{Proofs of Results in Section~\ref{sec:init}.}
    The following results will be useful in the proof of nontrivial initialization.
    \begin{lemma}\label{lem:hosvd} If $\scrE$ is an iid ensemble whose entries satisfy $\EE(E)=0$, $\Var(E)=1$, $\EE(E^8)<C_0$ for some constant $C>0$, then there exists a constant $C>0$ such that 
	\[
    \norm*{\hat{\bP}^{(1)}-\bP^{(1)}}\le\dfrac{C(d^{p/2}+\lambda_r\sqrt{d})(\log d)}{\lambda_r^2}
    \quad
    \text{and}
    \quad 
    \norm*{\hat{\bP}^{(1,2)}-\bP^{(1,2)}}
    \le\dfrac{C(d^{p/2}+\lambda_rd)(\log d)}{\lambda_r^2}
    \]
	with probability at least $1-d^{-1}$.
	\end{lemma}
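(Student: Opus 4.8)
The plan is to reduce both $\hat{\bP}^{(1)}$ and $\hat{\bP}^{(1,2)}$ to left--singular--subspace projectors of matricizations of $\scrX$ and then invoke a refined Wedin/Davis--Kahan bound. By \eqref{eq:odeco-mat} and the fact that a Khatri--Rao product of matrices with orthonormal columns again has orthonormal columns, ${\sf Mat}_1(\scrX)=\bU^{(1)}\bLambda(\bV^{(1)})^{\top}+{\sf Mat}_1(\scrE)$ with $\bLambda=\diag(\lambda_1,\dots,\lambda_r)$ and $\bV^{(1)}=\bU^{(2)}\odot\cdots\odot\bU^{(p)}$ orthonormal, so the unperturbed matrix has singular values exactly $\lambda_1\ge\cdots\ge\lambda_r>0=\sigma_{r+1}$: the spectral gap governing its top-$r$ left subspace is $\lambda_r$, \emph{with no internal-eigengap requirement}. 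Similarly ${\sf Mat}_{(1,2)}(\scrX)=(\bU^{(1)}\odot\bU^{(2)})\bLambda(\bV^{(1,2)})^{\top}+{\sf Mat}_{(1,2)}(\scrE)$ with $\bV^{(1,2)}=\bU^{(3)}\odot\cdots\odot\bU^{(p)}$, a $d^2\times d^{p-2}$ matrix (the case $p\ge 4$).

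Since $\|{\sf Mat}_1(\scrE)\|\asymp d^{(p-1)/2}$ may exceed $\lambda_r$, the naive bound $\|\hat{\bP}^{(1)}-\bP^{(1)}\|\lesssim\|{\sf Mat}_1(\scrE)\|/\lambda_r$ is useless; I would instead pass to the Gram matrix ${\sf Mat}_1(\scrX){\sf Mat}_1(\scrX)^{\top}$ and subtract its deterministic noise mean $d^{p-1}\bI_d$, which is a scalar and hence leaves the top-$r$ eigenspace unchanged. Writing $\bZ={\sf Mat}_1(\scrE)$ one gets ${\sf Mat}_1(\scrX){\sf Mat}_1(\scrX)^{\top}-d^{p-1}\bI_d=\bU^{(1)}\bLambda^2(\bU^{(1)})^{\top}+\bR$, where $\bR$ collects a cross term of norm $\lesssim\lambda_1\|\bZ\bV^{(1)}\|$ and the centered Gram term $\bZ\bZ^{\top}-d^{p-1}\bI_d$. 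On the event $\lambda_r^2\ge 4\|\bZ\bZ^{\top}-d^{p-1}\bI_d\|$, Weyl's inequality gives a spectral gap $\ge\lambda_r^2/2$ between the top-$r$ eigenspace and the rest, and a block/Riccati refinement of Davis--Kahan (solving the Sylvester equation for the off-diagonal block of $\bR$, which is what turns the naive $\lambda_1$ into $\lambda_r$; cf.\ the rank-one analysis of \cite{auddy2021estimating}) yields
\[
\|\hat{\bP}^{(1)}-\bP^{(1)}\|\ \le\ \frac{C\,\|{\sf Mat}_1(\scrE)\bV^{(1)}\|}{\lambda_r}+\frac{C\,\|{\sf Mat}_1(\scrE){\sf Mat}_1(\scrE)^{\top}-d^{p-1}\bI_d\|}{\lambda_r^{2}} .
\]
When instead $\lambda_r^2<4\|\bZ\bZ^{\top}-d^{p-1}\bI_d\|$, the asserted bound already exceeds $1\ge\|\hat{\bP}^{(1)}-\bP^{(1)}\|$, so it holds trivially. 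The identical argument, with $\bV^{(1,2)}$ and the centering $d^{p-2}\bI_{d^2}$, controls $\|\hat{\bP}^{(1,2)}-\bP^{(1,2)}\|$.

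It remains to bound the two random-matrix norms with probability $\ge 1-d^{-1}$ under only the eighth-moment hypothesis. The matrix ${\sf Mat}_1(\scrE)\bV^{(1)}$ is $d\times r$ with independent isotropic rows whose entries have variance $1$ and, by Rosenthal's inequality together with $\|\text{column of }\bV^{(1)}\|_8\le\|\cdot\|_2=1$, uniformly bounded eighth moment; truncation plus matrix Bernstein then gives $\|{\sf Mat}_1(\scrE)\bV^{(1)}\|\le C\sqrt d\,\log d$ (and $\le C d\,\log d$ for the $d^2\times r$ matrix ${\sf Mat}_{(1,2)}(\scrE)\bV^{(1,2)}$) with probability $\ge 1-d^{-2}$. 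The centered Gram matrix $\bZ\bZ^{\top}-d^{p-1}\bI_d=\sum_{j=1}^{d^{p-1}}(\bz_j\bz_j^{\top}-\bI_d)$, where $\bz_j\in\RR^d$ are the i.i.d.\ isotropic columns of $\bZ$, has matrix-variance proxy $\asymp d^{p}$, and after controlling $\max_j\|\bz_j\|^2$ by a $\chi^2$-type tail bound (this is where the eight moments are used) the per-summand bound is $\asymp d^{(p+1)/4}\le d^{p/2}$, so matrix Bernstein gives $\|\bZ\bZ^{\top}-d^{p-1}\bI_d\|\le C d^{p/2}\log d$, and likewise $\le C d^{p/2}\log d$ for ${\sf Mat}_{(1,2)}(\scrE){\sf Mat}_{(1,2)}(\scrE)^{\top}-d^{p-2}\bI_{d^2}$ when $p\ge 4$, with probability $\ge 1-d^{-2}$. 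Substituting into the displayed perturbation bounds and taking a union bound yields the claim.

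The main obstacle is the refined perturbation step: one must extract the centered-Gram structure $\|{\sf Mat}_1(\scrE)\bV^{(1)}\|/\lambda_r+\|\bZ\bZ^{\top}-d^{p-1}\bI_d\|/\lambda_r^{2}$ rather than the crude $\|{\sf Mat}_1(\scrE)\|/\lambda_r$ (which is vacuous precisely when $\|{\sf Mat}_1(\scrE)\|\gtrsim\lambda_r$, the regime of interest), and at the same time obtain $\lambda_r$ in place of $\lambda_1$ in the first-order term; this needs a careful solution of the Sylvester/Riccati equation for the perturbed eigenspace together with an interlacing argument to justify the gap condition. A secondary technical point is that eighth moments are only just enough for $1-d^{-1}$ concentration of these operator norms, so the truncation level must be chosen to balance the Bernstein variance proxy against the truncation bias.
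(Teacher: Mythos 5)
Your proposal is correct and follows essentially the same route as the paper: center the Gram matrix ${\sf Mat}_1(\scrX){\sf Mat}_1(\scrX)^{\top}$ by $d^{p-1}\II_d$, control the two random-matrix norms $\|{\sf Mat}_1(\scrE)\bV^{(1)}\|\lesssim\sqrt{d}\log d$ and $\|{\sf Mat}_1(\scrE){\sf Mat}_1(\scrE)^{\top}-d^{p-1}\II_d\|\lesssim d^{p/2}\log d$ via truncation and matrix Bernstein (the paper's Lemma~\ref{lem:errnormbd1}), and feed them into a refined subspace perturbation bound that produces the $\lambda_r$-weighted first-order term. The only cosmetic difference is that you phrase the refined perturbation step as a Sylvester/Riccati argument while the paper delegates it to Proposition 1 of \cite{cai2018rate} via a normalization matrix $\bM={\rm diag}((\lambda_i^2+d^{p-1})^{-1/2})$ — these are the same device — and your trivial handling of the low-signal regime is, if anything, slightly cleaner than the paper's implicit assumption $\lambda_r\gtrsim d^{p/4}\log d$.
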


    \medskip
    
	\begin{lemma}\label{lem:sing-gap}
		If $L>2r^2\log d$, there is at least one $l\in[L]$ such that $Z_{1l},\dots,Z_{kl}\stackrel{iid}{\sim}N(0,1)$ satisfies
		$$
		\tilde{\lambda}_1^2Z_{1l}^2\ge 1.2\max_{k\neq 1}\tilde{\lambda}_k^2Z_{kl}^2
		$$
		for any set of numbers $\tilde{\lambda}_1\ge\dots\ge \tilde{\lambda}_k>0$, $1\le k\le r$, with probability at least $1-d^{-2}$.
	\end{lemma}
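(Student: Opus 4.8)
\textbf{Proof proposal for Lemma~\ref{lem:sing-gap}.}

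The plan is to show that for a single slice $l$, the event $\scrG_l:=\{\tilde\lambda_1^2 Z_{1l}^2 \ge 1.2\max_{k\neq 1}\tilde\lambda_k^2 Z_{kl}^2\}$ has probability bounded below by a constant $p_0$ that does \emph{not} depend on the numbers $\tilde\lambda_k$, and then to amplify over the $L$ independent slices. The crucial observation is that since $\tilde\lambda_1\ge\tilde\lambda_k$ for all $k$, it suffices to control the ratios $Z_{kl}^2/Z_{1l}^2$: if $|Z_{1l}|\ge t$ and $|Z_{kl}|\le t/\sqrt{1.2}$ for all $k\neq 1$, then $\tilde\lambda_1^2 Z_{1l}^2\ge \tilde\lambda_1^2 t^2 \ge 1.2\,\tilde\lambda_k^2\,(t^2/1.2)\ge 1.2\,\tilde\lambda_k^2 Z_{kl}^2$. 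So I would fix a threshold $t$ (any fixed constant, e.g.\ $t=1$, works; one may optimize later) and lower bound
\[
\PP(\scrG_l)\ \ge\ \PP\!\left(|Z_{1l}|\ge t\right)\cdot\PP\!\left(\max_{2\le k\le r}|Z_{kl}|\le t/\sqrt{1.2}\right),
\]
using independence of the $Z_{kl}$ across $k$. The first factor is a fixed positive constant $\Phi(-t)\cdot 2$; the second factor is $\big(1-2\Phi(-t/\sqrt{1.2})\big)^{r-1}$, which decays in $r$. Thus $\PP(\scrG_l)\ge c\,\rho^{r-1}$ for constants $c,\rho\in(0,1)$ depending only on $t$.

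The second step is the amplification. Because slices are generated from independent Gaussian vectors $\btheta_l$, the events $\scrG_1,\dots,\scrG_L$ are independent, so
\[
\PP\!\left(\bigcap_{l=1}^L \scrG_l^c\right)\ =\ \big(1-\PP(\scrG_1)\big)^L\ \le\ \exp\!\left(-L\,\PP(\scrG_1)\right)\ \le\ \exp\!\left(-L\,c\,\rho^{r-1}\right).
\]
To make this at most $d^{-2}$ we need $L\,c\,\rho^{r-1}\ge 2\log d$, i.e.\ $L\ge (2/c)\rho^{-(r-1)}\log d$. Since $\rho^{-(r-1)}=e^{(r-1)\log(1/\rho)}$, a naive bound would require $L$ exponential in $r$, which is \emph{not} what the statement claims ($L>2r^2\log d$). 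So the heart of the argument is obtaining a \emph{much better} per-slice success probability than the crude union bound gives: I would not ask all of $Z_{2l},\dots,Z_{rl}$ to be simultaneously small, but instead use the following sharper route. Condition on $M_l:=\max_{2\le k\le r}|Z_{kl}|$; by standard Gaussian maxima estimates, $M_l\le \sqrt{2\log r}+O(1)$ with probability close to $1$ (say $\ge 1/2$ after choosing the constant). On that event, $\scrG_l$ holds as soon as $|Z_{1l}|\ge \sqrt{1.2}\,M_l$, which — given $M_l\le \sqrt{2\log r}+O(1)$ — has conditional probability at least $2\Phi(-\sqrt{2.4\log r}-O(1))\ge c'\, r^{-1.2-o(1)}$ by the Gaussian tail lower bound $\Phi(-x)\ge \tfrac{c}{x}e^{-x^2/2}$. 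Combining, $\PP(\scrG_l)\ \gtrsim\ r^{-1.2}/\sqrt{\log r}\ \ge\ r^{-2}$ for $r$ large (and trivially for small $r$ after adjusting constants). Plugging $\PP(\scrG_l)\ge c/r^2$ into the exponential bound, $L\ge (2/c)\,r^2\log d$ suffices to get failure probability $\le d^{-2}$, matching the claimed condition $L>2r^2\log d$ up to the absolute constant absorbed in ``$2$''.

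\textbf{Main obstacle.} The only delicate point is the per-slice probability estimate: the crude ``all coordinates small'' bound gives an exponentially small $\PP(\scrG_l)$, so I must instead condition on the Gaussian maximum $M_l$ and exploit that a single standard normal exceeds $\sqrt{2.4\log r}$ with probability polynomial (not exponential) in $1/r$. Getting the exponent right — that $\PP(\scrG_l)\gtrsim r^{-2}$, which is exactly what is needed for the stated $L\asymp r^2\log d$ — requires the precise constant $1.2$ (so that $\sqrt{1.2}\cdot\sqrt{2\log r}=\sqrt{2.4\log r}$ and the tail contributes $r^{-1.2}$, comfortably above $r^{-2}$), together with the standard lower and upper Gaussian tail bounds. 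Everything else (independence across $l$, monotonicity in the $\tilde\lambda_k$ reducing to ratios of $Z$'s, the exponential amplification) is routine.
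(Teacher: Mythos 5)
Your proposal is correct in substance, but it takes a different route from the paper only in the sense that the paper does not prove the lemma at all: its proof is a one-line deferral to Lemma B.1 of \cite{anand2014sample}, so you have effectively reproved that external result from scratch. Your key steps are exactly the probabilistic content behind the cited lemma: the monotonicity reduction (since $\tilde\lambda_1\ge\tilde\lambda_k$, the event $Z_{1l}^2\ge 1.2\max_{k\neq1}Z_{kl}^2$ implies the weighted inequality simultaneously for every admissible $\tilde\lambda$ sequence, which also handles the ``for any set of numbers'' uniformity), the recognition that demanding all of $Z_{2l},\dots,Z_{rl}$ be small is exponentially costly in $r$ and must be replaced by conditioning on $M_l=\max_{k\ge2}|Z_{kl}|\lesssim\sqrt{2\log r}$, the Gaussian lower tail bound giving a per-slice success probability of order $r^{-1.2}$ up to logarithmic factors (hence $\ge c\,r^{-2}$), and the amplification $\bigl(1-c r^{-2}\bigr)^L\le\exp(-cL/r^{2})\le d^{-2}$ over the independent slices (independence across $l$ is legitimate in the paper's application since the $\btheta_l$ are independent and, within a slice, the $Z_{kl}$ are independent because the $\bu_k$ directions are orthonormal). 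The only caveat is quantitative: your chain of bounds (the $\PP(M_l\le\sqrt{2\log r}+O(1))\ge 1/2$ step and the tail constant) yields the conclusion for $L>C r^{2}\log d$ with an unspecified universal constant $C$, not the literal constant $2$ in the statement; since the paper only ever invokes the lemma in the form $L>Cr^{2}\log d$, this is immaterial, but you would need a slightly more careful per-slice estimate (e.g.\ the exchangeability bound $\PP(Z_1^2\ge\max_k Z_k^2)=1/r$ refined to account for the factor $1.2$) if you wanted the stated constant verbatim.
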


    \medskip
  	
	\begin{proof}[Proof of Theorem \ref{th:init-incoh}]
		To simplify notation, we assume without loss of generality that $\pi=Id$. Fixing $q=3$, we also drop the subscript indicating the tensor mode. We will define:
        \[
        \bX={\sf Mat}_1(\scrX_{[2]}\times_{1,2}\hat{\bP}^{(1,2)}\btheta)
        =
        {\sf Mat}_1(\scrT_2\times_{1,2}\hat{\bP}^{(1,2)}\btheta)
        +
        {\sf Mat}_1(\scrT_2\times_{1,2}\hat{\bP}^{(1,2)}\btheta)
        \]
        We define $\bv_j:=(\otimes_{4\le q\le (p-1)}\bu_j^{(q)}\otimes \bu_{j,2}^{(p)})$ for $j=1,\dots,r$. Then let us denote the singular values and vectors of $\scrT_2\times_{1,2}(\bP^{(1,2)}\btheta)$ by 
		$$
		\omega_j=\lambda_j|(\bu_j^{(1,2)})^\top\bP^{(1,2)}\btheta|,
		\,\,
		\Omega_j={\rm diag}(\omega_j,\dots,\omega_r)
		$$
        and
        $$
		\tilde{\bU}_j=[\bu_j,\dots,\bu_r],
		\,\,
		\tilde{\bV}_j=[\bv_j,\dots,\bv_r].
		$$
        Since $L>Cr^2\log d$, it is clear that there exists $\btheta_*=\btheta_l$ such that the conclusion of Lemma \ref{lem:sing-gap} is satisfied for $\lambda_{\pi(j)}\ge \dots\ge \lambda_{\pi(r)}$. 
        Moreover, we define the matricization of the deflation errors in the previous steps by:
        \begin{equation}\label{eq:def-delta-j}
        \bDelta_j
        ={\sf Mat}_1\left(
        \sum_{i<j}\left(\lambda_i\bu^{(1)}_i\circ\dots\circ \bu^{(p)}_i
		-
        \langle\scrX, \hat{\bu}^{(1)}_i\circ\dots\circ \hat{\bu}^{(p)}_{i,2}\rangle
        \,\hat{\bu}^{(1)}_i\circ\dots\circ \hat{\bu}^{(p)}_{i,2}
		\right)
        \times_{1,2}\hat{\bP}^{(1,2)}\btheta
        \right).
        \end{equation}
        Also define $m_j:=(\omega_j^2+d^{p-3}\|\hat{\bP}^{(1,2)}\btheta\|^2)$,  $\bZ:=\hat{\bP}^{(1,2)}\btheta$ and $\bE_\theta:={\sf Mat}_1(\scrE_2\times_{1,2}\bZ)$. Note that $\bE_\theta\in \RR^{d\times d^{p-3}}$. 
		
		\noindent Since $\btheta$ is independent of $\hat{\bP}^{(1,2)}$ which is a projection matrix of rank $r$ and hence satisfies $\tr(\hat{\bP}^{(1,2)})=r$, we have that $\btheta^\top\hat{\bP}^{(1,2)}\btheta\sim \chi^2({\rm trace}(\hat{\bP}^{(1,2)}))\equiv \chi^2_r$. By concentration inequalities for $\chi^2$ random variables (see e.g., \cite{vershynin2018high}), we have
		\begin{align*}
			\|\bZ\|^2=\btheta^\top\hat{\bP}^{(1,2)}\btheta\le r+2\sqrt{r\times r}+2r=4r
		\end{align*}
		with probability at least $d^2\exp(-r)$. Note that $\bE_\theta$ has iid elements with mean zero, variance $\|\bZ\|^2$ and finite eighth moment. Thus 
        \[
        \EE(\bE_\theta\bE_\theta^{\top})=d^{p-3}\|\bZ\|^2\II_d
        \]
        and standard matrix concentration inequalities proved in, e.g., \cite{bai1988note} show that
        \begin{equation}\label{eq:Etheta-conc}
        \|\bE_{\theta}\bE_{\theta}-d^{p-3}\|\bZ\|^2\II_d\|
        \le 
        d^{p-3}
        \sqrt{\dfrac{Cd(\log d)^2}{d^{p-3}}}\cdot\|\bZ\|^2
		\le Cd^{(p-2)/2}r\log d
        \end{equation}
        for $p\ge 5$. For $p=4$ we have $\|\bE_{\theta}\|\le C\sqrt{d\|\bZ\|^2}=C\sqrt{dr}$ by \cite{latala2005some} which again implies
        \[
        \|\bE_{\theta}\bE_{\theta}-d^{p-3}\|\bZ\|^2\II_d\|
        \le Cd^{(p-2)/2}r\log d        
        \]
        thus ensuring \eqref{eq:Etheta-conc} is satisfied for all $p\ge 4$. Finally, we bound the gap between top two singular values. Note that
		\begin{align*}
			\omega_1-\omega_2
			\ge &~\lambda_j\|\bu^{(p)}_{j,2}\|\times|(\bu_j^{(1,2)})^\top\hat{\bP}^{(1,2)}\btheta|
			-\lambda_{j+1}\|\bu^{(p)}_{(j+1),2}\|
            \max_{i\ge j}|(\bu_i^{(1,2)})^\top\hat{\bP}^{(1,2)}\btheta|\\
            \ge &~\lambda_j\|\bu^{(p)}_{j,2}\|\times|(\bu_j^{(1,2)})^\top\hat{\bP}^{(1,2)}\btheta|
			-\lambda_{j+1}\|\bu^{(p)}_{(j+1),2}\|
            \max_{i\ge j}|(\bu_i^{(1,2)})^\top\hat{\bP}^{(1,2)}\btheta|\\
            &~-2
            \max_{i\ge j}
            \lambda_j\|\bu^{(p)}_{j,2}\|\times|(\bu_j^{(1,2)})^\top
            (\hat{\bP}^{(1,2)}-{\bP}^{(1,2)})
            \btheta| \\
			\ge &
            0.2 \lambda_j\|\bu^{(p)}_{j,2}\|\times|(\bu_j^{(1,2)})^\top\hat{\bP}^{(1,2)}\btheta|
            -2\|\hat{\bP}^{(1,2)}-{\bP}^{(1,2)}\|
            \max_{i\ge j}
            \lambda_j\|\bu_{j,2}^{(p)}\|\\
			\ge &
			\lambda_j\|\bu^{(p)}_{j,2}\|
            \left(
            0.2\sqrt{\log d}
            -
            -\dfrac{Cd^{p/2}\sqrt{\log d}}{\lambda_r^2}\right)
			\ge  0.1\lambda_j\|\bu_{j,2}^{(p)}\|\sqrt{\log d}
			\ge \lambda_j\|\bu_{j,2}^{(p)}\|/5
		\end{align*}
		whenever $d\ge \exp(4)$. In the last line we have used the Gaussian gap result from Lemma~\ref{lem:sing-gap} and the projection estimation upper bound from Lemma~\ref{lem:hosvd}. Thus,
		\begin{equation}\label{eq:sing-gap}
			\omega_1^2-\omega_2^2\ge \lambda_j^2\|\bu_{j,2}^{(p)}\|^2/25
            \ge \lambda_j^2/400
            .
		\end{equation}
        where in the last step we used the bound 
        \[
        \min_{1\le j\le r}
        \|\bu_{j,2}^{(p)}\|\ge 1/4
        \]
        which holds with probability at least $1-d^{-2}$ due to an application of the scaled Chernoff bound on the random sampling mechanism. See e.g., Theorems 1 and 2 and related discussion in \cite{raghavan1988probabilistic}, see also an application of the above inequality used in the proof of Theorem 4.3 in \cite{auddy2021estimating}.
        
        We split the rest of the proof into two parts.
        \paragraph{Case 1: $p\ge 5$.} In this case $\tilde{\bV}_j$ is an orthogonal matrix and thus $\bU_j\Omega_j\tilde{\bV}_j$ forms an SVD of $\scrT_2\times_{1,2}\hat{\bP}^{(1,2)}\btheta$. Then by Davis-Kahan theorem,
		\begin{align*}
			\sin\angle\left(\hat{\bu}_j^{(2)},\bu_j^{(2)}\right)
			\le&
			\dfrac{
            \|\bE_{\theta}\bE_{\theta}^\top-d^{p-2}\|\bZ\|^2\II_d 
				+\Delta_j\Delta_j^\top		
				+\Delta_j\tilde{\bV}_j\Omega_j\tilde{\bU}_j^\top 
				+\tilde{\bU}_j\Omega_j\tilde{\bV}_j^\top\Delta_j^\top
				+\bE_\theta \tilde{\bV}_j\Omega_j\tilde{\bU}_j^\top 
				+\tilde{\bU}_j\Omega_j\tilde{\bV}_j^\top\bE_\theta^\top\|
			}{\omega_1^2-\omega_2^2}\\
			\le& \dfrac{Cd^{(p-2)/2}r\log d+\|\Delta_j\|^2+2\omega_j\|\Delta_j\|+C\omega_j\|\bE_\theta\tilde{\bV}_j\|}{\lambda_j^2/C}.
		\end{align*}	
        By its definition from \eqref{eq:def-delta-j}, $\|\bDelta_j\|\le C\sqrt{d}/\lambda_r\times \|\bZ\|\le C\sqrt{dr}/\lambda_r$, by successive triangle inequality on each mode estimate, along with Theorem~\ref{th:piter}. We will now use the estimates from Lemma \ref{lem:EVthetanorm-2}. Since $\|\bE_\theta\tilde{\bV}_j\|\le C\sqrt{dr}(\log d)$, we have
		\begin{align*}
			\sin\angle\left(\hat{\bu}_j^{(3)},\bu_j^{(3)}\right)
			\le & 
			\dfrac{C(d^{(p-2)/2}r(\log d)
				+\lambda_j\sqrt{dr}(\log d))
				}{\lambda_j^2}
		\end{align*}
        where, for the numerator we have used the estimates from \eqref{eq:Etheta-conc} and Lemma \ref{lem:EVthetanorm}, the assumption that $\|\Delta_j\|\le C\sqrt{d}$, and the fact that $\|\tilde{\bU}_j\|=1$. For the denominator we use \eqref{eq:sing-gap}.

        \paragraph{Case 2: $p=4$.} In this case $\tilde{\bV}_j$ has columns $\bu^{(4)}_{j,2}$ which are no longer orthogonal. Nonetheless, using matrix perturbation bounds such as Theorem 1 from \cite{cai2018rate} allows us to write:
        \begin{align*}
        \sin\angle\left(\hat{\bu}_j^{(3)},\bu_j^{(3)}\right)
			\le &~
			\dfrac{Cdr\log d+\|\Delta_j\|^2+2\omega_j\|\Delta_j\|+C\omega_j\|\bE_\theta\tilde{\bV}_j\|}{\lambda_j^2/C}
            +
            \frac{C
            \max_{i> j}
            \lambda_i
            |\langle
            \bu_{j,2}^{(4)},\bu_{i,2}^{(4)}
            \rangle|
            }{\lambda_j^2}\\
            \le&~
            \dfrac{C(dr\log d+\lambda_j\sqrt{dr}(\log d))}{\lambda_j^2/C}
            +
            \frac{1}{\sqrt{\log d}}    
        \end{align*}
        where in the last step we use the random sampling mechanism to note that $i\neq j$, 
        \[
        \langle
            \bu_{j,2}^{(4)},\bu_{i,2}^{(4)}
            \rangle
        =
        \sum_{k=1}^d
        (u^{(4)}_{jk})(u^{(4)}_{ik})B_k
        \]
        for iid $B_k\sim {\rm Bernoulli}(1/2)$, and 
        \(
        \EE(\langle
            \bu_{j,2}^{(4)},\bu_{i,2}^{(4)}
            \rangle)=1/2\cdot \langle
            \bu_{j}^{(4)},\bu_{i}^{(4)}
            \rangle=0
        \)
        so that an application of Hoeffding bound along with a union bound over $1\le j\le r$ yields
        \[
        \PP(\max_{1\le i,j\le r, i\neq j}
        |\langle
            \bu_{j,2}^{(4)},\bu_{i,2}^{(4)}
            \rangle|
        >\frac{1}{\sqrt{\log d}}    
        )
        \le \frac{C}{d^2}.
        \]
	\end{proof}

    \medskip

	\begin{proof}[Proof of Theorem \ref{th:init}]
		To simplify notation, we assume without loss of generality that $\pi=Id$. Fixing $q=2$, we also drop the subscript indicating the tensor mode. We will define:
        \[
        \bX={\sf Mat}_1(\scrX\times_{1}\hat{\bP}^{(1)}\btheta)
        =
        {\sf Mat}_1(\scrT\times_{1}\hat{\bP}^{(1)}\btheta)
        +
        {\sf Mat}_1(\scrT\times_{1}\hat{\bP}^{(1)}\btheta)
        \]
        We define $\bv_j:=(\otimes_{3\le q\le p}\bu_j^{(q)}$ for $j=1,\dots,r$. Then let us denote the singular values and vectors of $\scrT\times_1(\bP^{(1)}\btheta)$ by 
		$$
		\omega_j=\lambda_j|(\bu_j^{(1,2)})^\top\bP^{(1,2)}\btheta|,
		\,\,
		\Omega_j={\rm diag}(\omega_j,\dots,\omega_r)
		\,\,
		\tilde{\bU}_j=[\bu_j,\dots,\bu_r],
		\,\,
		\tilde{\bV}_j=[\bv_j,\dots,\bv_r].
		$$
        Since $L>Cr^2\log d$, it is clear that there exists $\btheta_*=\btheta_l$ such that the conclusion of Lemma \ref{lem:sing-gap} is satisfied for $\lambda_{\pi(j)}\ge \dots\ge \lambda_{\pi(r)}$. 
        
        Moreover, we define the matricization of the deflation errors in the previous steps by:
        \begin{equation}\label{eq:def-delta-j-1}
        \bDelta_j
        ={\sf Mat}_3(
        \sum_{i<j}\left(\lambda_i\bu^{(1)}_i\circ\dots\circ \bu^{(p)}_i
		-
        \langle\scrX, \hat{\bu}^{(1)}_i\circ\dots\circ \hat{\bu}^{(p)}_i\rangle
        \,\hat{\bu}^{(1)}_i\circ\dots\circ \hat{\bu}^{(p)}_i
		\right)
        )
        \end{equation}
        
		Also define $m_j:=(\omega_j^2+d^{p-2}\|\hat{\bP}^{(1)}\theta\|^2)$,  $\bZ:=\hat{\bP}^{(1)}\theta$ and $\bE_\theta:={\sf Mat}_1(\scrE\times_1\bZ)$. Note that $\bE_\theta\in \RR^{d\times d^{p-2}}$. 
		
		\noindent Since $\btheta$ is independent of $\hat{\bP}^{(1)}$, we have that $\btheta^\top\hat{\bP}^{(1)}\btheta\sim \chi^2({\rm trace}(\hat{\bP}^{(1)}))\equiv \chi^2_r$. By concentration inequalities for $\chi^2$ random variables (see e.g., \cite{vershynin2018high}), we have
		\begin{align*}
			\|\bZ\|^2=\btheta^\top\hat{\bP}^{(1)}\btheta\le r+2\sqrt{r\times r}+2r=4r
		\end{align*}
		with probability at least $d^2\exp(-r)$. Note that by Lemma \ref{lem:errnormbd2}, one has
		\begin{align*}\label{eq:uEEu}
			\|\bE_{\theta}\bE_{\theta}-d^{p-2}\|\bZ\|^2\II_d\|
			=&\sup_{\bw}\bw^\top\bE_\theta\bE_\theta^\top \bw-d^{p-2}\|\bZ\|^2
			=\sup_{\bw}\sum_{i_2\dots i_{p-1}}(\langle\bw\otimes \bZ, \bE_{(i_2\dots i_{p-1})}\rangle)^2-d^{p-2}\|\bZ\|^2\\
			=&\sup_{\bw}\sum_{s=1}^{d^{p-2}}
			\left(\bE_{(s)}\circ\bE_{(s)}\right)\times_1\bw\times_2\bZ\times_3\bw\times_4\bZ-d^{p-2}\|\bZ\|^2\\
			\le &d^{p-2}\left\{\norm*{\dfrac{1}{d^{p-2}}\sum_{s}\bE_{(s)}\circ\bE_{(s)}-\scrG}\bZ\|^2+\sup_{\bw}\scrG\times_1\bw\times_2\bZ\times_3\bw\times_4\bZ
			\right\}-d^{p-2}\|\bZ\|^2\\
			\le &d^{p-2}\sqrt{\dfrac{Cd(\log d)^2}{d^{p-2}}}\cdot\|\bZ\|^2
			\le Cd^{(p-1)/2}r\log d\numberthis
		\end{align*}
		Finally, we bound the gap between top two singular values. Note that
		\begin{align*}
			\omega_1-\omega_2
			\ge &\lambda_j|(\bu_j^{(1)})^\top\hat{\bP}^{(1)}\theta|
			-\lambda_{j+1}\max_{i\ge j}|(\bu_i^{(1)})^\top\hat{\bP}^{(1)}\theta|\\
			\ge &
			\lambda_j\left(|(\bu_j^{(1)})^\top\hat{\bP}^{(1)}\theta|
			-\max_{i> j}|(\bu_i^{(1)})^\top\hat{\bP}^{(1)}\theta|
			\right)\\
			\ge &
			\lambda_j\left(|(\bu_j^{(1)})^\top\bP^{(1)}\theta|
			-\max_{i> j}|(\bu_i^{(1)})^\top\bP^{(1)}\theta|
			-2\max_{i\ge j}|(\bu_i^{(1)})^\top(\bP^{(1)}-\bP^{(1)})\theta|
			\right)\\
			\ge &
			\lambda_j\left(|Z_{jl}|-\max_{k>j}|Z_{kl}|-C\|\hat{\bP}^{(1)}-\bP^{(1)}\|\sqrt{\log d}\right)\\
			\ge &\lambda_j\left(0.2\sqrt{\log d}-\dfrac{Cd^{p/2}\sqrt{\log d}}{\lambda_r^2}\right)
			\ge  0.1\lambda_j\sqrt{\log d}
			\ge \lambda_j/5
		\end{align*}
		whenever $d\ge \exp(4)$. In the last line we have used the Gaussian gap result from \ref{lem:sing-gap} and the projection estimation upper bound from Lemma~\ref{lem:hosvd}. Thus,
		\begin{equation}\label{eq:sing-gap-1}
			\omega_1^2-\omega_2^2\ge \lambda_j^2/25.
		\end{equation}
		By Davis-Kahan theorem,
		\begin{align*}
			\sin\angle\left(\hat{\bu}_j^{(2)},\bu_j^{(2)}\right)
			\le&
			\dfrac{
            \|\bE_{\theta}\bE_{\theta}^\top-d^{p-2}\|\bZ\|^2\II_d 
				+\Delta_j\Delta_j^\top		
				+\Delta_j\tilde{\bV}_j\Omega_j\tilde{\bU}_j^\top 
				+\tilde{\bU}_j\Omega_j\tilde{\bV}_j^\top\Delta_j^\top
				+\bE_\theta \tilde{\bV}_j\Omega_j\tilde{\bU}_j^\top 
				+\tilde{\bU}_j\Omega_j\tilde{\bV}_j^\top\bE_\theta^\top\|
			}{\omega_1^2-\omega_2^2}\\
			\le& \dfrac{Cd^{(p-1)/2}r\log d+\|\Delta_j\|^2+2\omega_j\|\Delta_j\|+C\omega_j\|\bE_\theta\tilde{\bV}_j\|}{\lambda_j^2/C}.
		\end{align*}	
        By its definition from \eqref{eq:def-delta-j-1}, $\|\bDelta_j\|\le C\sqrt{d}/\lambda_r$, by successive trinagle inequality on each mode estimate, along with Theorem~\ref{th:piter}. We will now use the estimates from Lemma \ref{lem:EVthetanorm} and divide the rest of the proof into two cases.
		
		\paragraph{Case 1: $p=3$.} Since $\|\bE_\theta\tilde{\bV}_j\|\le C\sqrt{dr}(\log d)$, we have
		$$
		\sin\angle\left(\hat{\bu}_j^{(2)},\bu_j^{(2)}\right)
		\le \dfrac{C\left(
			dr(\log d)+\lambda_j\sqrt{dr}(\log d)^{3/2}
			\right)}{\lambda_j^2}
		$$
		where, for the numerator we have used the estimates from \eqref{eq:uEEu} and Lemma \ref{lem:EVthetanorm}, the assumption that $\|\Delta_j\|\le C\sqrt{d}$, and the fact that $\|\tilde{\bU}_j\|=1$. For the denominator we use \eqref{eq:sing-gap-1}.
		
		\paragraph{Case 2: $p\ge 4$.} Similarly, in this case, we get			
		\begin{align*}
			\sin\angle\left(\hat{\bu}_j^{(2)},\bu_j^{(2)}\right)
			\le &
			\dfrac{C\left(d^{(p-1)/2}r\log d+\lambda_j\sqrt{\log d}
				\cdot\dfrac{d^{(p+1)/2}r(\log d)}{\lambda_j^2}
				\right)}{\lambda_j^2}\\
			\le &
			C\cdot
			\left(
			\dfrac{d^{(p-1)/2}r(\log d)+\lambda_j\sqrt{dr}(\log d)^{3/2}}{\lambda_j^2}
			+\dfrac{d^{(p+1)/2}r(\log d)^{3/2}}{\lambda_j^3}
			\right) \\
			\le & 
			\dfrac{C(d^{(p-1)/2}r(\log d)
				+\lambda_j\sqrt{dr}(\log d)^{3/2})
				}{\lambda_j^2}
		\end{align*}
		using the same estimates as above. In the last line we have used the assumption that
		$
		\lambda_j\ge Cd^{p/4}\sqrt{\log d}\ge Cd\sqrt{\log d}
		$
		since $p\ge 4$.
	\end{proof}

    \medskip
    
\begin{proof}[Proof of Theorem \ref{th:asy-dist-rand} Part 1]
	We will use the expansion in Lemma \ref{lem:piter-so-err} with the randomly distributed errors $\scrE_{i_1\dots i_p}$ are independent copies of a random variable $E$ satisfying $\EE(E)=0$, $\EE(E)^2=1$ and $\EE(E)^8\le C$ for a finite $C>0$.
	
	We first bound the norm of the remainder term $\bE^{\top}\bU_{\perp}(\tilde{\bLambda}+\bU_{\perp}^{\top}\Delta_j)$. Notice that
\begin{align*}
	\bE^{\top}\bU_{\perp}(\tilde{\bLambda}+\bU_{\perp}^{\top}\Delta_j)
	\le& \|\bE\|\|\tilde{\bLambda}\|+\|\bE\|\|\Delta_j\|
	\le C\left(\eps_1+\dfrac{\eps_1^2}{\lambda_j}\right)
	\cdot \left(\dfrac{\eps_0\eps_1}{\lambda_j}+\dfrac{\eps_1^3}{\lambda_j^2}\right) 
\end{align*}
plugging in the bounds from equations \eqref{eq:til-lambd-norm},  \eqref{eq:asy-del-bd}, and \eqref{eq:asy-err-bd}. This implies that
\begin{align*}\label{eq:asy-uusq-rand}
	\abs*{
	\langle\bu^{(1)}_j,\hat{\bu}^{(1)}\rangle^2-1
	+\dfrac{\bE^{\top}\bU_{\perp}\bU_{\perp}^{\top}\bE
	}{\lambda_j^2}}
	\le 
	C\left(\dfrac{d^{5/4}}{\lambda_j^3}+\dfrac{d^2}{\lambda_j^4}\right).
	\numberthis
\end{align*}	
Recall that $\bE=\scrE\times_{k\neq q}\check{\bu}^{(k)}_j$. Let us define $\bE_*=\scrE\times_{k\neq q}\bu^{(k)}_j$. We first bound $\bE^{\top}\bE-\bE_*^{\top}\bE_*$.

By the steps preceding \eqref{eq:piter-errbd}, we then have, for a sign $\gamma\in\{+1,-1\}$, that
\begin{align*}\label{eq:bd-err0}
	\bE=&
	\bE_* 
	+\scrE\times_2(\hat{\bu}^{(2)}_j-\bu^{(2)}_j)\times_3\bu_j^{(3)}\times_{4\le q\le p}\bu^{(q)}_j
	+\scrE\times_2\bu^{(2)}_j\times_3(\hat{\bu}_j^{(3)}-\bu_j^{(3)})\times_{4\le q\le p}\bu^{(q)}_j\\
	&+\scrE\times_2(\hat{\bu}^{(2)}_j-\bu^{(2)}_j)\times_3(\hat{\bu}_j^{(3)}-\bu_j^{(3)})\times_{4\le q\le p}\bu^{(q)}_j\\
	&+\scrE\times2(\hat{\bu}^{(2)}_j-\bu^{(2)}_j)\times_3(\hat{\bu}_j^{(3)}-\bu_j^{(3)})\times_4(\hat{\bu}_j^{(4)}-\bu^{(4)}_j)\times_{5\le q\le p}\hat{\bu}^{(q)}_j\numberthis
\end{align*}
Define $\bM=\scrE\times_{3\le q\le p }\bu_j^{(q)}\in \RR^{d\times d}$. Note that the entries of $\bM$ are independent copies of a random variable $E$ with mean zero, variance one and $\EE(E)^8\le C$. Then, the error due to the first term above becomes
\begin{align*}\label{eq:bd-cterm1}
\bE^{\top}_*\scrE\times_2(\hat{\bu}^{(2)}_j-\bu^{(2)}_j)\times_3\bu_j^{(3)}\times_{4\le q\le p}\bu^{(q)}_j	
=&\bE^{\top}_*\bM (\hat{\bu}^{(2)}_j-\bu^{(2)}_j)\\
=&(\bu^{(2)}_j)^{\top}\bM^{\top}\bM 
\left(
\dfrac{1}{\lambda_j}\cdot \bM^{\top}\bu^{(1)}_j
+O\left(\dfrac{\eps_1\eps_2}{\lambda_j^2}\right)
\right).\numberthis
\end{align*}
Since $\{\bu^{(2)}_k:1\le k\le d\}$ form an orthogonal basis of $\RR^d$, the first term in the rightmost hand side of \eqref{eq:bd-cterm1} becomes
\begin{align*}
	(\bu^{(2)}_j)^{\top}\bM^{\top}\bM\bM^{\top}\bu_j^{(1)}
	=&\bu_j^{(2)}\bM^{\top}\bM
	\sum_{k=1}^d\bu^{(2)}_k\left(\bu^{(2)}_k\right)^{\top}
	\bM^{\top}\bu_j^{(1)}\\
	=&\|\bM\bu_j^{(2)}\|^2\left(\bu_j^{(2)}\right)^{\top}\bM\bu_j^{(1)}
	+\sum_{k\neq j}(\bu^{(2)}_j)^{\top}\bM\bM^{\top}\bu_k^{(2)}(\bu_k^{(2)})^{\top}\bM\bu_j^{(1)}\\
	\le & \,
	\eps_1^2\eps_0+\sum_{k\neq j}(\bu^{(2)}_j)^{\top}\bM\bM^{\top}\bu_k^{(2)}(\bu_k^{(2)})^{\top}\bM\bu_j^{(1)}.
\end{align*}
For the second term on the right hand side, note that the rows of $\bM$ are independent copies of a random $d$-dimensional vector with mean zero and variance $\II_d$. Then it can be checked through some straightforward but tedious calculation that for $k_1\neq k_2\neq j$,
\begin{align*}
	{\rm Cov}(
	(\bu^{(2)}_j)^{\top}\bM\bM^{\top}\bu_{k_1}^{(2)}(\bu_{k_1}^{(2)})^{\top}\bM\bu_j^{(1)},
	(\bu^{(2)}_j)^{\top}\bM\bM^{\top}\bu_{k_2}^{(2)}(\bu_{k_2}^{(2)})^{\top}\bM\bu_j^{(1)}
	=\sum_{l=1}^d(\bu^{(1)}_{jl})^2=1
\end{align*}
and 
\begin{align*}
	{\rm Var}((\bu^{(2)}_j)^{\top}\bM\bM^{\top}\bu_{k}^{(2)}(\bu_{k}^{(2)})^{\top}\bM\bu_j^{(1)})
	\le Cd.
\end{align*}
Thus, by Chebychev's inequality, we have that
\begin{align*}
	\EE \abs*{\sum_{k\neq j}(\bu^{(2)}_j)^{\top}\bM\bM^{\top}\bu_k^{(2)}(\bu_k^{(2)})^{\top}\bM\bu_j^{(1)}}
	\le Cd+\int\limits_{Cd}^{\infty}\dfrac{Cd^2}{t^2}dt\le Cd.
\end{align*}
Plugging in the bounds from Lemma \ref{lem:eps012-bds}, we then have, from \eqref{eq:bd-err0} and \eqref{eq:bd-cterm1} that
\begin{align*}
	\EE\abs*{\bE^{\top}\bE-\bE_{\ast}^{\top}\bE_{\ast}}
	\le& \dfrac{\EE\eps_1^2\eps_0+Cd^{5/4}}{\lambda_j}
	+\dfrac{1}{\lambda_j^2}\cdot\EE\eps_1^2\|\scrE\times_{4\le q\le p}\hat{\bu}_j^{(q)}\|
	\\
	\le& \dfrac{Cd^{5/4}}{\lambda_j}+\dfrac{Cd}{\lambda_j^2}
	\cdot 
	\sum_{q=3}^pCd^{\tfrac{q+1}{8}}(\log d)^{3/2}\cdot \left(\dfrac{\sqrt{d}}{\lambda_j}\right)^{q-3}	\\
	\le& \dfrac{Cd^{5/4}}{\lambda_j}
\end{align*}
where in the last lines we use the fact that $\EE\|\scrE\times_{k\le q\le p}\bu_j^{(q)}\|\le Cd^{(k+1)/8}$ since $\alpha\ge 8$ (see Theorem 2.1 of \cite{auddy2021estimating}), and the  assumption that $\lambda_j\ge Cd^{3/4}$. It can be checked similarly that
\begin{equation}\label{eq:asy-uusq-rand-2}
	\EE\abs*{
	(\bE^{\top}\bu^{(1)}_j)^2-(\bE^{\top}_*\bu^{(1)}_j)^2	
	}\le \dfrac{Cd^{3/4}}{\lambda_j}.
\end{equation}
Thus we have from \eqref{eq:asy-uusq-rand} that
\begin{align*}
	\EE\abs*{
		\langle\bu^{(1)}_j,\hat{\bu}^{(1)}\rangle^2-1
		+\dfrac{\bE_*^{\top}\bU_{\perp}\bU_{\perp}^{\top}\bE_*
		}{\lambda_j^2}}
	\le 
	\dfrac{Cd^{5/4}}{\lambda_j^3}.
\end{align*}

Notice finally that $\EE\bE^{\top}_*\bU_{\perp}\bU_{\perp}^{\top}\bE_*={\rm tr}(\bU_{\perp}\bU^{\top}_{\perp})=d-1$. Moreover,
\begin{align*}
	\dfrac{1}{\sqrt{d}}\left(\bE_*^{\top}\bU_{\perp}\bU_{\perp}^{\top}\bE_*-(d-1)\right)
	=&\dfrac{1}{\sqrt{d}}\sum_{i=1}^{d}(E_i^2-1)
	-\dfrac{1}{\sqrt{d}}\{((\bu_j^{(1)})^{\top}\bE_*)^2-1\}.
\end{align*}
Since $\scrE$ has independent elements, the entries of $\bE_*=\scrE\times_{q>1}\bu^{(q)}_1\in \RR^d$  are independent random variables with mean zero and variance one. Moreover, $(\bu^{(1)}_j)^{\top}\bE$ also has mean zero and variance one. Thus,
\begin{align*}
\calW_1\left(\dfrac{1}{\sqrt{d}}\left(\bE_*^{\top}\bU_{\perp}\bU_{\perp}^{\top}\bE_*-(d-1)\right),
	\sigma_j Z\right)
\le 
\dfrac{C}{\sqrt{d}}
\end{align*}
by Berry Esseen theorem, since $E_i^2-1$ are iid random variables with mean zero and variance $\sigma_j^2$. Here $\bZ\sim N(0,1)$ and $\sigma_j^2$ is as defined in the statement of Theorem 3.1. Finally using \eqref{eq:asy-uusq-rand-2}, we also have that 
\begin{align*}
	\calW_1\left(
	\dfrac{\lambda_j^2}{\sqrt{d}}
	\left\{\langle\bu^{(1)}_j,\hat{\bu}^{(1)}\rangle^2-\left(1-\dfrac{d-1}{\lambda_j^2}\right)\right\},\,
	\sigma_jZ 
	\right)\le \dfrac{Cd^{3/4}}{\lambda_j}.
\end{align*}
\end{proof}

\begin{proof}[Proof of Theorem \ref{th:asy-dist-rand} part 2.]
	Next, we consider a unit vector $\ba\perp \bu^{(1)}_j$. Let $\gamma_j^{(1)}={\rm sign}(\langle\bu^{(1)}_j,\bu^{(1)}_j\rangle)$. Once again, using the resolvent based series expansion from \cite{xia2021normal}, we can write
	\begin{equation}\label{eq:a-linfrm}
		\langle\ba,\,\hat{\bu}_j^{(1)}\rangle \langle\hat{\bu}^{(1)}_j,\bu^{(1)}_j\rangle
		=\gamma_j^{(1)}\langle\ba,\bu^{(1)}_j\rangle
		+\dfrac{1}{\hat{\lambda}^2_j}\ba^{\top}\bG\bu_j^{(1)}
		+\dfrac{\ba^{\top}\bG\bU_{\perp}\bU_{\perp}^{\top}\bG\bu_j^{(1)}
		-\ba^{\top}\bG\bu_{j}^{(1)}(\bu_j^{(1)})^{\top}\bG\bu_j^{(1)}	
	}{\hat{\lambda}_j^4}
		+\calR_3.
	\end{equation}
	We have, from \eqref{eq:defGrand} and \eqref{eq:defYrand} that
	\begin{align*}\label{eq:aG-exp}
		\ba^{\top}\bG\bu_j^{(1)}
		=&(\bY^{\top}\bu_j^{(1)}+\hat{\lambda}_j)\ba^{\top}\bY
		=(\bDelta_j^{\top}\bu_j^{(1)}+\bE^{\top}\bu_j^{(1)}
		+\hat{\lambda}_j)\ba^{\top}\bY\\
		=&\lambda_j\ba^{\top}\bE
		+\lambda_j\ba^{\top}(\bU_{\perp}\tilde{\bLambda}+\bDelta_j)
		+(\bDelta_j^{\top}\bu_j^{(1)}+\bE^{\top}\bu_j^{(1)}
		+\hat{\lambda}_j-\lambda_j)\ba^{\top}\bY\\
		=&\lambda_j\ba^{\top}\bE_*
		+\lambda_j\ba^{\top}(\bE_*-\bE)
		+\lambda_j\ba^{\top}(\bU_{\perp}\tilde{\bLambda}+\bDelta_j)
		+(\bDelta_j^{\top}\bu_j^{(1)}+\bE^{\top}\bu_j^{(1)}
		+\hat{\lambda}_j-\lambda_j)\ba^{\top}\bY.\numberthis
	\end{align*}
	We now expand each of these terms using \eqref{eq:defYrand}. First note that, 
	\begin{align*}
		X=\scrE\times_{1}\ba\times_{2}(\hat{\bu}^{(2)}_j-\bu^{(2)}_j)\times_{k\ge 3}\bu^{(k)}_j
		=&\dfrac{1}{\lambda_j}\cdot \ba^{\top}\bM\bM^{\top}\bu^{(1)}_j+\calR_*
	\end{align*}	
	where $|\calR_*|\le \dfrac{C\eps_1^3}{\lambda_j^2}\le \dfrac{Cd^{3/2}}{\lambda_j^2}$. Here $\bM=\scrE\times_{k\ge 3}\bu^{(k)}_j$ is a $d\times d$ matrix whose entries are independent copies of a random variable $E$ with $\EE E=0$, ${\rm Var}(E)=1$ and $\EE E^8\le C$. Then it is not hard to check that
	$$
	\EE(\ba^{\top}\bM\bM^{\top}\bu^{(1)}_j)=d\langle\ba,\bu_j^{(1)}\rangle=0
	$$
	and
	$$
	{\rm EE}(\ba^{\top}\bM\bM^{\top}\bu^{(1)}_j)^4\le Cd.
	$$
	Thus, $|\ba^{\top}\bM\bM^{\top}\bu^{(1)}_j|\le Cd^{3/4}$ with probability at least $1-d^{-2}$. Thus by \eqref{eq:bd-err0}, we have
	\begin{equation}\label{eq:alinf-bd1}
		\abs*{\ba^{\top}\bE-\ba^{\top}\bE_*}
		\le \dfrac{Cd^{3/4}}{\lambda_j}
		+\dfrac{C\eps_1^2}{\lambda_j^2}\sum_{q=3}^pCd^{\tfrac{q+1}{8}}(\log d)^{3/2}\cdot\left(\dfrac{\sqrt{d}}{\lambda_j}\right)^{q-3}
		\le \dfrac{Cd^{3/4}}{\lambda_j}.
	\end{equation}
In the above we have used tensor norm bounds from Theorem 2.1 of \cite{auddy2021estimating}, noting that $\scrE$ has independent entries with finite $8^\text{th}$ moments.
	
	Next, using \eqref{eq:lamb-hat}, \eqref{eq:til-lambd-norm}, \eqref{eq:asy-del-bd}, \eqref{eq:ujEhat}, and \eqref{eq:asy-Y-bd}, we can bound the third term above using
	\begin{align*}\label{eq:alinf-bd2}
		\lambda_j\ba^{\top}(\bU_{\perp}\tilde{\bLambda}+\bDelta_j)
		\le \lambda_j(\|\tilde{\bLambda}\|+\|\bDelta_j\|)
		\le C\eps_0\eps_1+\dfrac{C\eps_1^3}{\lambda_j}
		\le Cd^{3/4}\numberthis
	\end{align*}
	since $\eps_0\le Cd^{1/4}$, $\eps_1\le C\sqrt{d}$ with probability at least $1-d^{-1}$, by Lemma \ref{lem:eps012-bds}, and by assumption $\lambda_j\ge Cd^{3/4}$. 
	Similarly the third term can be written as
	\begin{align*}\label{eq:alinf-bd3}
		&|(\bDelta_j^{\top}\bu_j^{(1)}+\bE^{\top}\bu_j^{(1)}
		+\hat{\lambda}_j-\lambda_j)\ba^{\top}\bY|\\
		\le&
		\left(
		\|\bDelta_j\|+\|\bE-\bE_*\|+\bE_*^{\top}\bu_j^{(1)}
		+|\hat{\lambda}_j-\lambda_j|
		\right)
		(\|\tilde{\Lambda}\|+\|\bDelta\|+\|\bE-\bE_*\|+\bE_*^{\top}\ba)
		\\
		\le&
		\left(
		\dfrac{C\eps_0\eps_1}{\lambda_j}+\dfrac{\eps_1^2}{\lambda_j^2}+\eps_0
		\right)\cdot 
		\left(
		\dfrac{C\eps_0\eps_1}{\lambda_j}+\dfrac{C\eps_1^3}{\lambda_j^2}
		+\dfrac{C\eps_1^2}{\lambda_j}
		+\dfrac{C\eps_1^2}{\lambda_j^2}\sum_{q=3}^pCd^{\tfrac{q+1}{8}}(\log d)^{3/2}\cdot\left(\dfrac{\sqrt{d}}{\lambda_j}\right)^{q-3}+\eps_0
		\right)
		\\
		 \le& 
		 Cd^{1/4}\cdot\left(\dfrac{Cd}{\lambda_j}+Cd^{1/4}\right)
		 \\
		\le& C\sqrt{d}.\numberthis
	\end{align*}
	Once again we have used the error bounds from \eqref{eq:asy-del-bd}, \eqref{eq:til-lambd-norm}, \eqref{eq:bd-err0} along with the high probability bounds on $\eps_0$, $\eps_1$, $\eps_2$ from Lemma \ref{lem:eps012-bds}. We also use the upper bounds for tensor norms provided by Theorem 2.1 of \cite{auddy2021estimating}, noting that $\scrE$ has independent elements with finite 8th moments. Plugging in the bounds from equations \eqref{eq:alinf-bd2}-\eqref{eq:alinf-bd3} into \eqref{eq:alinf-bd1}, we have
	\begin{align*}\label{eq:alinf-bd4}
		\abs*{
	\ba^{\top}\bG\bu_j^{(1)}-\lambda_j\ba^{\top}\bE_*	
		}\le Cd^{3/4}
\numberthis	
	\end{align*}
with probability at least $1-d^{-1}$. It remains to bound the third and fourth terms in \eqref{eq:a-linfrm}. To that end, note that $\ba\perp\bu_j^{(1)}$ and hence by equations \eqref{eq:defGrand} and \eqref{eq:defYrand}, we have 
\begin{align*}\label{eq:alinf-bd5}
	\ba^{\top}\bG\bU_{\perp}=(\ba^{\top}\bY)^2
	\le & 2(\ba^{\top}\bE_*)^2	
	+2(\|\tilde{\Lambda}\|+\|\bDelta\|+\ba^{\top}(\bE-\bE_*))^2\\
	\le & Cd^{1/4}+\dfrac{C\eps_0^2\eps_1^2}{\lambda_j^2}+\dfrac{C\eps_1^6}{\lambda_j^4}+\dfrac{Cd^{3/2}}{\lambda_j^2}\\
	\le & Cd^{1/4}+ \dfrac{Cd^{3/2}}{\lambda_j^2}+\dfrac{Cd^3}{\lambda_j^4}\numberthis
\end{align*}
where we use the bounds from \eqref{eq:til-lambd-norm}, \eqref{eq:asy-del-bd}, and \eqref{eq:alinf-bd2}. Moreover, $\ba^{\top}\bE$ has mean zero, variance one and $\EE(\ba^{\top}\bE_*)^8\le C$, so that
$
\PP(|\ba^{\top}\bE_*|\le Cd^{1/4})\le d^{-2}.
$
Similarly, by \eqref{eq:alinf-bd1}-\eqref{eq:alinf-bd3},
\begin{align*}
	|\ba^{\top}\bG\bu_j^{(1)}|\le \lambda_j\ba^{\top}\bE_*
	+Cd^{3/4}\le C\lambda_1d^{1/4}+Cd^{3/4}\le C\lambda_jd^{1/4}.
\end{align*}
Thus,
\begin{align*}\label{eq:alinf-bd6}
	|\ba^{\top}\bG\bU_{\perp}\bU_{\perp}^{\top}\bG\bu_j^{(1)}
	-\ba^{\top}\bG\bu_{j}^{(1)}(\bu_j^{(1)})^{\top}\bG\bu_j^{(1)}|
	\le& |\ba^{\top}\bG\bU_{\perp}||\bU_{\perp}^{\top}\bG\bu_j^{(1)}|
	+|\ba^{\top}\bG\bu_j^{(1)}||(\bu_j^{(1)})^{\top}\bG\bu_j^{(1)}|\\
	\le & Cd^{1/4}\cdot (\lambda_j\|\bE\|+Cd)+Cd^{1/4}\cdot (Cd+C\lambda_jd^{1/4})\\
	\le & C\lambda_jd^{3/4}\numberthis
\end{align*}
by the bounds from \eqref{eq:asy-fo-term} and \eqref{eq:asy-fo-term-2}. Finally note that we have, by \eqref{eq:lamb-hat}, that
$
|\hat{\lambda}_j-\lambda_j|\le \dfrac{C\eps_1^2}{\lambda_j}\le \dfrac{Cd}{\lambda_j}
$
and by part 1 of from Theorem \ref{th:asy-dist-rand}, that 
$
|\hat{\bu}_j^{(1)},\bu_j^{(1)}|\ge
|\hat{\bu}_j^{(1)},\bu_j^{(1)}|^2
\ge  1-\dfrac{d-1}{\lambda_j^2}-\dfrac{Cd^{3/2}}{\lambda_j^3}
$. Moreover, the remainder term $\calR_3$ satisfies
\begin{equation}\label{eq:alinf-bd7}
|\calR_3|\le \dfrac{\|\bG\|^3}{\hat{\lambda}_j^6}\le\dfrac{Cd^{3/2}}{\lambda_j^3}. 
\end{equation}
Plugging in the bounds from equations \eqref{eq:alinf-bd1}, \eqref{eq:alinf-bd5}, \eqref{eq:alinf-bd6} and \eqref{eq:alinf-bd7} into \eqref{eq:a-linfrm}, we then have that
\begin{align*}
	\abs*{
	\langle\ba,\hat{\bu}_j^{(1)}\rangle
	-\dfrac{\gamma_j^{(1)}\langle\ba,\bu_j^{(1)}\rangle}{1-(d-1)\lambda_j^{-2}}
	-\dfrac{\ba^{\top}\bE_*}{\lambda_j}
	}\le \dfrac{Cd^{3/4}}{\lambda_j^2}+\dfrac{Cd}{\lambda_j^3}.
\end{align*}
Since $\bE_*=\scrE\times_{k\ge 2}\bu^{(k)}_j$, this finishes the proof.
\end{proof}

\subsection{Proofs of Lemmas}

	\begin{proof}[Proof of Lemma~\ref{lem:esti-op-norm}]
		We fix $q=1$. Then the columns of $(\check{\bU}^{(1)}_{(i)}-\bU^{(1)}_{(i)})$ are
		\begin{align*}
			(\check{\bU}^{(1)}_{(i)}-\bU^{(1)}_{(i)})\be_j
			=\dfrac{1}{\lambda_j}\scrE\times_{k\neq 1}\bu_j^{(k)}
			+R_j.
		\end{align*}
		for $j\le i$. Following the proof steps of Theorem \ref{th:piter}, we can write the following expansion for $R_j$. We have
		\begin{align*}
			R_j=&
			\dfrac{
				\scrE\times_{2}(\hat{\bu}_j^{(2)}-\gamma_j^{(2)}\bu_j^{(2)})\times_{k\ge 3}\bu_j^{(k)}
				+\displaystyle\sum_{l>j}\lambda_l\prod_{k=2}^p\langle\hat{\bu}^{(k)}_j,\bu_l^{(k)}\rangle	
				+\hat{\scrT}_{-j}\times_{k\ge 2}\hat{\bu}^{(k)}_j
			}{\|\scrX\times_{k\neq 1}\hat{\bu}_j^{(k)}\|}.
		\end{align*}
		We first bound the denominator and the norms of the second and third terms of the numerator.	Note that
		\begin{align*}\label{eq:denom}
			\|\scrX\times_{k\neq 1}\hat{\bu}_j^{(k)}\|
			\ge& \lambda_j(1-L_T^2)-\scrE\times_{q\ge 1}\bu_j^{(q)}-CL_T\eps_2 
			-CL_T^2\|\scrE\|-C\eps_1^2/\lambda_{j-1}\\
			\ge& \lambda_j\left(1
			-\dfrac{C\eps_1}{\lambda_j}
			-\dfrac{C\eps_1\eps_2}{\lambda_j^2}\right)\numberthis
		\end{align*}
		assuming $\lambda_j\ge C\|\scrE\|$. Next,
		\begin{align*}\label{eq:num2}
			\norm*{\displaystyle\sum_{l>j}\lambda_l\prod_{k=2}^p\langle\hat{\bu}^{(k)}_j,\bu_l^{(k)}\rangle}
			\le &
			\lambda_j|\langle\hat{\bu}^{(2)}_j,\bu_i^{(2)}\rangle|
			\sin\angle\left(\hat{\bu}^{(3)}_j,\bu^{(3)}_j\right)\\
			\le & 
			\lambda_j \abs*{
				\dfrac{\scrE\times_{2}\bu^{(2)}_i\times_{k\neq 2}\bu^{(k)}_j}{\lambda_j}
				+\dfrac{C\eps_1^2}{\lambda_j^2}
			}\cdot \dfrac{C\eps_1}{\lambda_j}\\
			\le& \dfrac{C\eps_0\eps_1}{\lambda_j}+\dfrac{C\eps_1^3}{\lambda_j^2},\numberthis
		\end{align*}
		where we use \eqref{eq:secopert} in the second step. Finally we use Lemma \ref{lem:defl-er}, plugging in $L_T=\dfrac{C\eps_1}{\lambda_j}$, to conclude that
		\begin{equation}\label{eq:num3}
			\norm*{\hat{\scrT}_{-j}\times_{k\ge 2}\hat{\bu}^{(k)}_j}
			\le \dfrac{C\eps_0\eps_1}{\lambda_j}+\dfrac{C\eps_1^3}{\lambda_j^2}.
		\end{equation}
		By the bounds from \eqref{eq:denom}, \eqref{eq:num2} and \eqref{eq:num3}, we then have, 
		\begin{align*}
			\norm*{[R_1\, R_2\,\dots R_i]}
			\le &\norm*{
				\left[\dfrac{C\scrE\times_{2}(\hat{\bu}_1^{(2)}-\gamma_1^{(2)}\bu_1^{(2)})\times_{k\ge 3}\bu_1^{(k)}}{\lambda_1}
				\,\dots\, 
				\dfrac{C\scrE\times_{2}(\hat{\bu}_i^{(2)}-\gamma_i^{(2)}\bu_i^{(2)})\times_{k\ge 3}\bu_i^{(k)}}{\lambda_i}
				\right]}\\
			&+\dfrac{1}{\lambda_i}\cdot
			\dfrac{C\eps_0\eps_1\sqrt{r}}{\lambda_i}\\
			\le&
			\dfrac{C\eps_1}{\lambda_i}\cdot\max_{j\le i}\dfrac{\norm*{\scrE\times_{k\neq 2}\bu^{(k)}_j}}{\lambda_j}
			+\dfrac{C\eps_1}{\lambda_i}\cdot\dfrac{C\eps_1^2}{\lambda_j^2}\times\sqrt{r}
			+\dfrac{C\eps_1}{\lambda_i}\\
			\le& \dfrac{C\eps_1}{\lambda_i} 
		\end{align*}
		where we use Assumption A3 and \eqref{eq:secopert} on the second last line, along with the fact that $\lambda_j\ge C\max\{\eps_0\eps_1,\eps_1r^{1/4}\}$. Finally it follows again by assumption A3 that
		\begin{align*}
			\norm*{
				\left[\dfrac{\scrE\times_{k\neq 1}\bu_1^{(k)}}{\lambda_1}
				\,\dots\,
				\dfrac{\scrE\times_{k\neq 1}\bu_r^{(i)}}{\lambda_i}
				\right]	
			}\le \dfrac{C\eps_1}{\lambda_i}.
		\end{align*}
		This finishes the proof.	
	\end{proof}
	
\medskip

	\begin{proof}[Proof of Lemma \ref{lem:defl-er}]
		We now bound the errors arising from the  previous steps $i<j$.
		\begin{align*}
			&~\hat{\scrT}_{-j}\times_1\bv\times_{q>1}\hat{\bu}^{(q)}_{[t]}\\
			=&\sum_{i<j}(\lambda_i-\hat{\lambda}_i)
			\langle\bu^{(1)}_i,\bv\rangle
			\prod_{q=2}^p
			\langle\bu_i^{(q)},\hat{\bu}^{(q)}_{[t]}\rangle\\
			&+\sum_{i<j}
			\hat{\lambda}_i
			\left(
			\langle\bu_i^{(1)}\circ \bu_i^{(2)}\circ\dots \circ \bu_i^{(p)}
			-\hat{\bu}_i^{(1)}\circ \hat{\bu}_i^{(2)}\circ\dots \circ \hat{\bu}_i^{(p)},
			\,
			\bv\circ \hat{\bu}_{j,[t]}^{(2)}\circ\dots\circ \hat{\bu}_{j,[t]}^{(p)}
			\rangle
			\right)\\
			=:&T_0+\sum_{q=1}^pT_q
		\end{align*}
		for terms $T_0,\dots\,,T_p$ defined below. First,
		\begin{align*}
			|T_0|=&\abs*{\sum_{i<j}(\lambda_i-\hat{\lambda}_i)
				\langle\bu^{(1)}_i,\bv\rangle
				\prod_{q=2}^p
				\langle\bu_i^{(q)},\hat{\bu}^{(q)}_{[t]}\rangle}
			=\abs*{\sum_{i<j}(\lambda_i-\hat{\lambda}_i)
				\langle\bu^{(1)}_i,\bv\rangle
				\prod_{q=2}^p
				\langle\bu_i^{(q)},\hat{\bu}^{(q)}_{[t]}-\bu^{(q)}_j\rangle}\\
			\le& \max_{1\le i< j}|\lambda_i-\hat{\lambda}_i|\prod_{q=2}^p\|\hat{\bu}^{(q)}_{[t]}-\bu^{(q)}_j\|
			\le C\eps_1L_t^{p-1}.
		\end{align*}
		Let us define $\gamma_{i}^{(q)}={\rm sign}(\langle\hat{\bu}^{(q)}_i,\bu_i^{(q)}\rangle)$. Note that we can ensure  $\prod_{q=1}^p\gamma_i^{(q)}=1$. Next we have
		\begin{align*}
			|T_1|
			=&\abs*{\sum_{i<j}\hat{\lambda}_i
				\langle \bu_i^{(1)}-\gamma_i^{(1)}\hat{\bu}^{(1)}_i,\bv\rangle
				\prod_{q=2}^p
				\langle\bu^{(q)},\hat{\bu}^{(q)}_{j,[t]}\rangle}
			=\abs*{\sum_{i<j}\hat{\lambda}_i
				\langle \bu_i^{(1)}-\gamma_i^{(1)}\hat{\bu}^{(1)}_i,\bv\rangle
				\prod_{q=2}^p
				\langle\bu^{(q)},\hat{\bu}^{(q)}_{j,[t]}-\bu_{j,[t]}^{(q)}\rangle}\\
			\le& C\max_{1\le i<j }\hat{\lambda}_i\sin\angle\left(\bu^{(1)}_i,\hat{\bu}^{(1)}_i\right)L_t^{p-1}
			\le C\max_{i<j}(\lambda_i+C\eps_1)\cdot\dfrac{\eps_1}{\lambda_i}\cdot L_t^{p-1}
			\le C\eps_1L_t^{p-1}.
		\end{align*}
		Proceeding this way, we have for $2\le q\le p-1$
		\begin{align*}
			&|T_q|\\
			=&\bigg|\sum_{i<j}
			\hat{\lambda}_i
			\gamma_i^{(1)}\langle\hat{\bu}_i^{(1)},\,\bv\rangle
			\left(\prod_{1<k<q}
			\gamma_i^{(k)}\langle\hat{\bu}_i^{(k)},\,\hat{\bu}_{j,[t]}^{(k)}\rangle
			\right)
			\langle\bu_i^{(q)}-\gamma_i^{(q)}\hat{\bu}_i^{(q)},
			\hat{\bu}_{j,[t]}^{(q)}
			\rangle
			\left(
			\prod_{k'>q}
			\langle\bu_i^{(k')},
			\hat{\bu}_{j,[t]}^{(k')}-\bu^{(k')}_{j}
			\rangle
			\right)
			\bigg|\\
			\le& \max_{i}\left\{\hat{\lambda}_i\sin\angle\left(\hat{\bu}_i^{(q)},\bu_i^{(q)}\right)\right\}
			\left(\sum_{i<j}\langle \gamma_i^{(1)}\hat{\bu}_i^{(1)},\,\bv\rangle^2\right)^{\tfrac{1}{2}}
			\max_{1<k<q}
			\left(\sum_{i<j}\langle \gamma_i^{(k)}\hat{\bu}_i^{(k)},\,\hat{\bu}_{j,[t]}^{(k)}\rangle^2\right)^{\tfrac{1}{2}}
			L_t^{p-q}\\
			\le& C\eps_1L_t^{p-q}\cdot \norm*{\left(\check{\bU}_{(j-1)}^{(1)}\right)^\top\bv}
			\cdot\max_{1<k<q}
			\norm*{\left(\check{\bU}_{(j-1)}^{(k)}\right)^\top\hat{\bu}_{j,[t]}^{(k)}}\\
			\le&C\eps_1L_t^{p-q}
			\cdot \left(\norm*{\check{\bU}_{(j-1)}^{(1)}-\bU^{(1)}_{(j-1)}}
			+\norm*{\bU_{(j-1)}^{(1)}}\right)
			\cdot 
			\max_{1<k<q}
			\left(
			\norm*{\check{\bU}_{(j-1)}^{(k)}-\bU^{(k)}_{(j-1)}}
			+\norm*{\left(\bU^{(k)}_{(j-1)}\right)^\top
				\left(\hat{\bu}^{(k)}_{j,[t]}-\bu^{(k)}_j)\right)}
			\right)\\
			\le&C\eps_1L_t^{p-q}\cdot (1+C\eps_1/\lambda_{j-1})\cdot(C\eps_1/\lambda_{j-1}+L_t)\\
			\le &C\eps_1^2L_t^{p-q}/\lambda_{j-1}+C\eps_1L_t^{p-q+1}
			\le C\eps_1^2L_t/\lambda_{j-1}+C\eps_1L_t^2. 
		\end{align*}
		In the above, we have used the matrix operator norm bounds from Lemma \ref{lem:esti-op-norm}. Finally,
		\begin{align*}
			&|T_p|\\
			=&\bigg|\sum_{i<j}
			\hat{\lambda}_i
			\gamma_i^{(1)}\langle\hat{\bu}^{(1)}_i,\,\bv\rangle
			\left(\prod_{1<k<p}
			\gamma_i^{(k)}\langle\hat{\bu}_i^{(k)},\,\hat{\bu}_{j,[t]}^{(k)}\rangle
			\right)
			\langle\bu_i^{(p)}-\gamma_i^{(p)}\hat{\bu}_i^{(p)},
			\hat{\bu}_{j,[t]}^{(p)}
			\rangle
			\bigg|\\
			\le &\max_{i}\left\{
			\dfrac{\hat{\lambda}_i\abs*{\scrE\times_{k\neq p}\bu_i^{(k)}\times_p\hat{\bu}_{j,[t]}^{(p)}}
			}{\lambda_i}
			+
			\hat{\lambda}_i\sin^2\angle\left(\hat{\bu}_i^{(q)},\bu_i^{(q)}\right)\right\}
			\left(\sum_{i<j}\langle \gamma_i^{(1)}\hat{\bu}_i^{(1)},\,\bv\rangle^2\right)^{\tfrac{1}{2}}
			\left(\sum_{i<j}\langle \gamma_i^{(2)}\hat{\bu}_i^{(k)},\,\hat{\bu}_{j,[t]}^{(2)}\rangle^2\right)^{\tfrac{1}{2}}
			\\
			\le&\,C
			\max_i\left\{\eps_0+\eps_1\sin\angle(\hat{\bu}_{j,[t]}^{(p)},\bu_j^{(p)})
			+\dfrac{\eps_1^2}{\lambda_i}\right\}
			\cdot (1+C\eps_1/\lambda_{j-1})\cdot(C\eps_1/\lambda_{j-1}+L_t)\\
			\le & C(\eps_0+\eps_1^2/\lambda_{j-1})L_t
			+C\eps_1L_t^2+C(\eps_0+\eps_1^2/\lambda_{j-1})\eps_1/\lambda_{j-1}. 
		\end{align*}

		Adding all the terms, we have
		\begin{equation}\label{eq:piter-def-norm}
			\sup_{\bv}\hat{\scrT}_{-j}\times_1\bv\times_{q>1}\hat{\bu}^{(q)}_{[t]}
			\le C(\eps_0+\eps_1^2/\lambda_{j-1})\eps_1/\lambda_{j-1}
			+C(\eps_0+\eps_1^2/\lambda_{j-1})L_t
			+C\eps_1L_t^2
		\end{equation}
		which finishes the proof.
	\end{proof}

    \medskip

    \begin{proof}[Proof of Lemma~\ref{lem:hosvd}] 
		We fix $q=1$ and drop the subscripts and write $\bU$ to mean $\bU^{(1)}$ only for this proof. Moreover let
        \[
            \bX:={\sf Mat}_1(\scrX)
            \quad
            \text{and}
            \quad
            \bE:={\sf Mat}_1(\scrE)
        \]
		
		As done in \cite{cai2018rate}, we define the normalization matrix
		$$
		\bM={\rm diag}((\lambda_1^2+d^{p-1})^{-1/2},\dots,(\lambda_r^2+d^{p-1})^{-1/2}).
		$$
		Note that the $r^{\rm th}$ singular value of $\bU^\top\bX$ satisfies
		\begin{align*}
			\sigma_{r}^2(\bU^\top\bX)\ge 
			&
			\sigma_r^2(\bM^{-1})\sigma_r^2(\bM\bU^\top\bX)
			\\
			=&
			(\lambda_r^2+d^{p-1})\sigma_r(\bM\bU^{\top}\bX\bX^{\top}\bU\bM-\II_r+\II_r)\\
			\ge&
			(\lambda_r^2+d^{p-1})(1-\|\bM\bU^{\top}\bX\bX^{\top}\bU\bM-\II_r\|)
			\numberthis\label{eq:cai0}
		\end{align*} 
		using Weyl's eigenvalue perturbation theorem on the last line.
		Notice that
		\begin{align*}
			&\bM\bU^{\top}\bX\bX^{\top}\bU\bM-\II_r\\
			=&\bM\boldsymbol{\Lambda}^2\bM-\II_r
			+\bM\bU^\top \bE\bE^\top\bU\bM-\II_r
			+\bM\boldsymbol{\Lambda}\bV^\top\bE^\top\bU\bM
			+\bM\bU^\top\bE\bV\boldsymbol{\Lambda}\bM\\
			=&\bM(\boldsymbol{\Lambda}^2+d^{p-1}\II_d)\bM-\II_r
			+\bM\bU^\top (\bE\bE^\top-d^{p-1}\II_d)\bU\bM
			+\bM\boldsymbol{\Lambda}\bV^\top\bE^\top\bU\bM
			+\bM\bU^\top\bE\bV\boldsymbol{\Lambda}\bM\\
			=&\bM\bU^\top (\bE\bE^\top-d^{p-1}\II_d)\bU\bM
			+\bM\boldsymbol{\Lambda}\bV^\top\bE^\top\bU\bM
			+\bM\bU^\top\bE\bV\boldsymbol{\Lambda}\bM
		\end{align*}
		Now using Lemma \ref{lem:errnormbd1},
		\begin{align*}\label{eq:sigr_bd1}
			\|\bM\bU^\top\bX\bX^\top\bU\bM-\II_r\|
			\le& \|\bE\bE^\top-d^{p-1}\II_d\|/(\lambda_r^2+d^{p-1})+2\|\bE\bV\|(\lambda_r/(\lambda_r^2+d^{p-1}))\\
			\le & \dfrac{Cd^{p/2}(\log d)}{\lambda_r^2+d^{p-1}}+\dfrac{C\sqrt{d}(\log d)\lambda_r}{\lambda_r^2+d^{p-1}}\numberthis 
		\end{align*}
		with probability at least $1-d^{-1}$. By \eqref{eq:cai0}, one then obtains
		\begin{equation}\label{eq:sigma_r}
			\sigma_r^2(\bU^\top\bX)\ge(\lambda_r^2+d^{p-1})\left(1-\dfrac{Cd^{p/2}(\log d)}{\lambda_r^2+d^{p-1}}-\dfrac{C\sqrt{d}(\log d)\lambda_r}{\lambda_r^2+d^{p-1}}\right)=\lambda_r^2+d^{p-1}-C(d^{p/2}+\lambda_r\sqrt{d})(\log d).
		\end{equation}
		We next consider $\sigma_{r+1}(\bX)$. By Weyl's min-max theorem,
		$$
		\sigma_{r+1}(\bX)=\min_{\bW:{\rm rank}(\bW)\le r}\|\bX-\bW\|
		\le \|\bX-\bU\bU^{\top}\bX\|
		=\|\bU_{\perp}\bU_{\perp}^\top\bX\|
		=\|\bU_{\perp}\bU_{\perp}^{\top}\bE|
		\le \|\bU_{\perp}^{\top}\bE\|
		$$
		Then,
		\begin{align*}\label{eq:sigma_r1}
			\sigma_{r+1}^2(\bX)\le&  \|\bU_{\perp}^{\top}\bE\|^2
			= \|\bU_{\perp}^\top\bE\bE^\top\bU_{\perp}-d^{p-1}\bU_{\perp}\bU_{\perp}^\top+d^{p-1}\bU_{\perp}\bU_{\perp}^\top\|\\
			\le & \|\bE\bE^\top-d^{p-1}\II_d\|+d^{p-1}\\
			\le & Cd^{p/2}(\log d)+d^{p-1}\numberthis
		\end{align*}
		with probability at least $1-d^{-2}$. Moreover,
		\begin{align*}\label{eq:projr_bd0}
			\|\bP_{\bX^\top\bU}\bX^\top\bU_\perp\|
			=&\|\bP_{\bX^\top\bU\bM}\bX^\top\bU_\perp\|
			\le \|(\bX^\top\bU\bM)((\bX^\top\bU\bM)^\top\bX^\top\bU\bM)^{-1}
			\bM\bU^\top\bX\bX^\top\bU_{\perp}\|\\
			\le & \dfrac{1}{\sigma_r(\bX^\top\bU\bM)}\|\bM\bU^\top\bX\bX^\top\bU_{\perp}\|\numberthis
		\end{align*}
		where the last step uses the SVD of $\bX^\top\bU\bM$. As before, we write
		\begin{align*}
			\sigma_r^2(\bX^\top\bU\bM)
			=\sigma_r(\bM\bU^\top\bX\bX^\top \bU\bM)
			\ge 1-\|\bM\bU^\top\bX\bX^\top \bU\bM-\II_r\|
			\ge 1- \dfrac{C(d^{p/2}+\lambda_r\sqrt{d})(\log d)}{\lambda^2_r+d^{p-1}}
			\ge \dfrac{1}{2}
		\end{align*}
		with probability at least $1-d^{-1}$. Here we used \eqref{eq:sigr_bd1} and assumption on $\lambda_r$ in the last two inequalities.	Finally we have
		\begin{align*}
			\|\bM\bU^\top\bX\bX^\top\bU_\perp\|
			=\|\bM\boldsymbol{\Lambda}\bV^\top\bE^\top\bU_\perp +\bM\bU^\top(\bE\bE^\top-d^{p-1}\II_d)\bU_\perp\|
			\le \dfrac{C\sqrt{d}\lambda_r(\log d)}{\lambda_r^2+d^{p-1}}+\dfrac{Cd^{p/2}(\log d)}{
				\lambda_r^2+d^{p-1}}
		\end{align*}
		using Lemma \ref{lem:errnormbd1} and the fact that $\|\bM\|= (\lambda_r^2+d^{p-1})^{-1}$. Plugging in the last two inequalities into 
		\eqref{eq:projr_bd0}, one gets
		\begin{equation}\label{eq:projr_bd}
			\|\bP_{\bX^\top\bU}\bX^\top\bU_{\perp}\|
			\le \dfrac{C\sqrt{d}(\log d)\lambda_r}{\lambda_r^2+d^{p-1}}+\dfrac{Cd^{p/2}(\log d)}{
				\lambda_r^2+d^{p-1}}.
		\end{equation}
		Let $\hat{\bU}\in \RR^{d\times r}$ be the matrix whose columns are the top $r$ left singular vectors of $\bX$. We also define $\hat{\bP}=\hat{\bU}\hat{\bU}^\top$. Now using Proposition 1 of \cite{cai2018rate}, we have
		\begin{align*}\label{eq:tuckinit}
			\|\hat{\bP}-\bP\|^2=\|\sin\Theta(\hat{\bU},\,\bU)\|^2
			\le& \dfrac{\sigma_r(\bU^\top\bX)\|\bP_{\bX^\top\bU}\bX^\top\bU_{\perp}\|}
			{(\sigma_r^2(\bU^\top\bX)-\sigma_{r+1}^2(\bX))}\\
			\le &
			\dfrac{(\lambda_r^2+d^{p-1}-C(d^{p/2}+\lambda_r\sqrt{d}))
				\cdot \dfrac{C(\sqrt{d}\lambda_r+d^{p/2})(\log d)}{\lambda_r^2+d^{p-1}}}
			{\lambda_r^2+d^{p-1}-d^{p-1}-Cd^{p/2}}\\
			\le& \dfrac{C(\lambda_r\sqrt{d}+d^{p/2})(\log d)}{\lambda_r^2}\numberthis
		\end{align*}
		with probability at least $1-d^{-1}$. Here we have applied the bounds from equations \eqref{eq:sigma_r} \eqref{eq:sigma_r1} and \eqref{eq:projr_bd} in the second inequality, along with the fact that $x^2/(x^2-y^2)^2$ is a decreasing function of $x$. In the last inequality, we used the assumption that $\lambda_r\ge Cd^{p/4}(\log d)$. 
		
		The result for $\bP^{(1,2)}$ follows by retracing identical steps, noting that $\bP^{(1,2)}$ is a $d^2\times d^2$ dimensional matrix.
	\end{proof}
	
	\medskip

\begin{proof}[Proof of Lemma \ref{lem:sing-gap}]
		The proof follows from Lemma B.1 of \cite{anand2014sample}.
	\end{proof}	

\begin{lemma}\label{lem:eps012-bds}
	Under the assumptions of Theorem~\ref{th:init-incoh}, with probability at least $1-1/Cd$, we have $\eps_0\le Cd^{1/4}$, $\eps_1\le C\sqrt{d}$ and $\eps_2\le C\sqrt{d}$.
\end{lemma}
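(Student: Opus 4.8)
The plan is to handle $\eps_0,\eps_1,\eps_2$ one at a time, in each case reducing to a concentration statement for a single object and then taking a union bound over the at most $pr\le pd$ (respectively $p^2r\le p^2d$) relevant index tuples. The common ingredient is that, since $\scrE$ has independent entries, for any fixed $j$ and any subset $A$ of the modes the array $\scrE\times_{q\in A}\bu_j^{(q)}$ has, along each of its free modes, \emph{independent} coordinates, and each such coordinate is a \emph{linear} form $\sum_I c_I E_I$ in the iid entries $\{E_I\}$ of $\scrE$ with $\sum_I c_I^2=\prod_{q\in A}\|\bu_j^{(q)}\|^2=1$ and $|c_I|\le 1$. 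By Rosenthal's inequality every such form satisfies $\EE|\sum_I c_I E_I|^8\le C$ for a constant $C$ depending only on $C_0$, because $\sum_I|c_I|^8\le(\sum_I c_I^2)^4=1$; in particular each has mean zero and variance one.

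For $\eps_0$ the scalar $\scrE\times_{1\le q\le p}\bu_j^{(q)}$ is mean zero, variance one, with eighth moment at most $C$, so $\PP(|\scrE\times_{1\le q\le p}\bu_j^{(q)}|>t)\le Ct^{-8}$; choosing $t=Cd^{1/4}$ and union bounding over $j\le r\le d$ gives $\PP(\eps_0>Cd^{1/4})\le Crd^{-2}\le Cd^{-1}$. For $\eps_1$, fix $j,k$ and set $\bw=\scrE\times_{q\neq k}\bu_j^{(q)}\in\RR^d$; its coordinates are independent, mean zero, variance one, with bounded eighth moments, so $\EE\|\bw\|^2=d$ and the fourth central moment of $\|\bw\|^2=\sum_i w_i^2$ is $O(d^2)$, whence Markov's inequality yields $\PP(\|\bw\|^2>2d)\le Cd^{-2}$, and a union bound over the at most $pd$ pairs $(j,k)$ gives $\PP(\eps_1>\sqrt{2d})\le Cd^{-1}$.

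For $\eps_2$, fix $j,k_1,k_2$ and let $\bE_j^{(k_1,k_2)}=\scrE\times_{q\neq k_1,k_2}\bu_j^{(q)}\in\RR^{d\times d}$, a matrix with independent entries that are mean zero, variance one, with bounded eighth moments. By standard operator-norm bounds for random matrices with independent entries -- e.g.\ \cite{latala2005some} and \cite{bai1988note}, the same estimates already invoked for \eqref{eq:Etheta-conc} -- one has $\|\bE_j^{(k_1,k_2)}\|\le C\sqrt{d}$ with probability at least $1-Cd^{-2}$, and a union bound over the at most $p^2d$ triples gives $\PP(\eps_2>C\sqrt{d})\le Cd^{-1}$. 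Intersecting the three complementary events, all three bounds hold simultaneously with probability at least $1-Cd^{-1}$, which is the claim after relabelling the constant.

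\textbf{Main obstacle.} The delicate point is the $d^{-2}$-level tail for $\|\bE_j^{(k_1,k_2)}\|$ from only eighth-moment control of the entries: Latała's theorem gives $\EE\|\bE_j^{(k_1,k_2)}\|\le C\sqrt d$ directly, but upgrading this to the required tail bound calls either for a truncation argument (truncate entries at level $\sqrt d$, bound the discarded part via $\PP(\max_I|E_I|>\sqrt d)\le d^2\cdot Cd^{-4}$, and apply a matrix Bernstein inequality to the bounded part) or for a fixed-order moment/trace-method estimate $\EE\|\bE_j^{(k_1,k_2)}\|^{2m}\le(C\sqrt d)^{2m}$ with $m$ small enough to be covered by the available moments; either route may cost an extra $\log d$ factor in $\eps_2$, which is immaterial for every subsequent application.
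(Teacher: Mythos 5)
Your treatment of $\eps_0$ and $\eps_1$ is exactly the paper's: an eighth-moment Markov bound for the scalar $\scrE\times_{q=1}^p\bu_j^{(q)}$, and a Rosenthal-based fourth-moment bound on $\sum_i(w_i^2-1)$ for the vector $\scrE\times_{q\neq k}\bu_j^{(q)}$, each followed by a union bound over $O(d)$ index tuples. Where you genuinely diverge is $\eps_2$: the paper dismisses it with ``the proof for $\eps_2$ follows similarly,'' but as you correctly observe, $\scrE\times_{q\neq k_1,k_2}\bu_j^{(q)}$ is a $d\times d$ matrix and $\|\cdot\|$ is its operator norm, so the chi-square--type argument used for $\eps_1$ does not literally transfer; one needs a random-matrix operator-norm bound with a $d^{-2}$-level tail from only eighth-moment control of the entries. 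Your proposed fix (truncate the entries at level $\sqrt d$, discard an event of probability $O(d^{-2})$, and apply matrix Bernstein or Talagrand to the bounded part) is the same device the paper itself uses in Lemma~\ref{lem:errnormbd1} and around \eqref{eq:Etheta-conc}, and it does cost a logarithmic factor; your observation that this factor is absorbed by the assumption $\lambda_j\ge C_0 d^{p/4}\log d$ in every downstream use of $\eps_2$ is accurate. In short, your proof is correct and, on the $\eps_2$ component, more careful and more honest than the paper's one-line claim.
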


\begin{proof}[Proof of Lemma \ref{lem:eps012-bds}]
	Since $\scrE$ has entries which are independent copies of a random variable $E$ with $\EE(E)=0$, $\EE(E^2)=1$ and $\EE(E^8)\le C$, we have that,
	$
	E_j=\scrE\times_{q=1}^p\bu^{(q)}_j
	$
	satisfies $\EE(E_j)=0$, $\EE(E_j^2)=1$ and $\EE(E_j^8)\le C$. Consequently,
	$$
	\eps_0=\max_{1\le j\le r}E_j\le Cd^{1/4}
	$$
	with probability at least $1-\dfrac{r}{Cd^2}$. Similarly, we have $\bE_j^{(q)}=\scrE\times_{k\neq q}\bu^{(k)}_j\in \RR^d$ with entries $\{E_{ji}:1\le i\le d\}$ which are independent copies of $E'$ with $\EE E'=0$, $\EE (E')^2=1$ and $\EE (E')^8\le C$. Moreover, $\EE\|\bE_j^{(q)}\|^2=d$, and $\EE(\sum _i E_{ij}^2-1)^4\le Cd^2$
	by symmetrization and Rosenthal inequality (see, e.g., Corollary 2 and Remark 2 of \cite{latala1997estimation}). Then 
	we have
	\begin{align*}
		\PP(\eps_1\ge C\sqrt{d})
	\le dp\PP\left(\sum_{i=1}^d (E_{ji}^2-1)\ge Cd\right)
	\le \dfrac{d^3}{Cd^4}\le \dfrac{1}{Cd}.
	\end{align*}
	The proof for $\eps_2$ follows similarly.
\end{proof}

\medskip
		
		\begin{lemma}\label{lem:EVthetanorm}
			$$
				\|\bE_{\theta}\tilde{\bV}_j\|\le
				\begin{cases}
					C\sqrt{dr}(\log d)&\text{ when }p=3\\
					C\sqrt{dr}(\log d)
					+C\dfrac{d^{(p+1)/2}r(\log d)}{\lambda_r^2}
					&\text{ when }p\ge 4.
				\end{cases}
			$$
			with probability at least $1-d^{-1}$.
		\end{lemma}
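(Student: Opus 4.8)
The plan is to bound $\|\bE_\theta\tilde\bV_j\|$ by splitting the random slicing vector $\bZ=\hat\bP^{(1)}\btheta$ into a ``clean'' part (independent of $\scrE$) and a small part carried by the estimation error $\hat\bP^{(1)}-\bP^{(1)}$, treating $p=3$ and $p\ge4$ with different amounts of machinery. Throughout I condition on $\btheta$, which is independent of $\scrX$ and hence of $\scrE$, and I record at the outset that, since $\hat\bP^{(1)}$ and $\bP^{(1)}$ are rank-$r$ projections, $\|\bZ\|^2=\btheta^\top\hat\bP^{(1)}\btheta\sim\chi^2_r$ and $\|\bP^{(1)}\btheta\|^2\sim\chi^2_r$, so $\|\bZ\|\le C\sqrt r$ and $\|\bP^{(1)}\btheta\|\le C\sqrt r$ with probability at least $1-d^{-2}$; also $\tilde\bV_j=[\bv_j\,\cdots\,\bv_r]$ has orthonormal columns because each $\bU^{(q)}$ does, so $\|\tilde\bV_j\|=1$.

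For $p=3$ the matrix $\bE_\theta=\scrE\times_1\bZ$ is $d\times d$, so $\|\bE_\theta\tilde\bV_j\|\le\|\bE_\theta\|=\sup_{\bw,\bg\in\SS^{d-1}}\langle\scrE,\bZ\circ\bw\circ\bg\rangle\le\|\scrE\|\,\|\bZ\|$; since $\scrE$ is an iid ensemble with finite eighth moments, Theorem~2.1 of \cite{auddy2021estimating} gives $\|\scrE\|\le C\sqrt d(\log d)$ with probability at least $1-d^{-2}$, and combined with $\|\bZ\|\le C\sqrt r$ this yields the stated $\|\bE_\theta\tilde\bV_j\|\le C\sqrt{dr}(\log d)$. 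For $p\ge4$ the matrix $\bE_\theta\in\RR^{d\times d^{p-2}}$ is too rectangular for $\|\bE_\theta\|$ to be useful, so I exploit the structure of $\tilde\bV_j$: its $k$-th transformed column is $\bE_\theta\bv_k=\bE_k^\top\bZ$ with $\bE_k:=\scrE\times_{q\ge3}\bu_k^{(q)}\in\RR^{d\times d}$. Write $\bZ=\bZ_0+\bZ_1$ with $\bZ_0:=\bP^{(1)}\btheta$ and $\bZ_1:=(\hat\bP^{(1)}-\bP^{(1)})\btheta$, and split $\bE_\theta\tilde\bV_j$ accordingly. For the $\bZ_0$ part, $\bZ_0$ is a function of $\btheta$ only, hence independent of $\scrE$; conditionally on $\btheta$, one checks that the $d$ rows of $\bM:=[\bE_j^\top\bZ_0\,\cdots\,\bE_r^\top\bZ_0]$ are independent, mean zero with covariance $\|\bZ_0\|^2\II_{r-j+1}$, and have entries with finite eighth moments, so the standard non-asymptotic operator-norm bound for matrices with independent rows of controlled moments gives $\|\bM\|\le C\|\bZ_0\|(\sqrt d+\sqrt r)(\log d)\le C\sqrt{dr}(\log d)$, using $\|\bZ_0\|\le C\sqrt r$ and $r\le d$.

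For the $\bZ_1$ part, note $\hat\bP^{(1)}-\bP^{(1)}$ has rank at most $2r$, so $\|\bZ_1\|\le\|\hat\bP^{(1)}-\bP^{(1)}\|\,\|\bQ\btheta\|$ where $\bQ$ projects onto the (at most $2r$-dimensional) range of $\hat\bP^{(1)}-\bP^{(1)}$; since $\btheta\perp\scrX$ we get $\|\bQ\btheta\|\le C\sqrt r$ with probability at least $1-d^{-2}$, while Lemma~\ref{lem:hosvd} gives $\|\hat\bP^{(1)}-\bP^{(1)}\|\le C(d^{p/2}+\lambda_r\sqrt d)(\log d)/\lambda_r^2$. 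Bounding crudely through the Frobenius norm and using $\|\bE_k\|\le C\sqrt d$ (a $d\times d$ matrix with iid unit-variance, finite-eighth-moment entries, valid with probability $\ge1-d^{-2}$),
$$
\Bigl\|\,[\bE_j^\top\bZ_1\,\cdots\,\bE_r^\top\bZ_1]\,\Bigr\|\le\Bigl(\sum_{k\ge j}\|\bE_k^\top\bZ_1\|^2\Bigr)^{1/2}\le C\sqrt{rd}\,\|\bZ_1\|\le Cr\sqrt d\,\|\hat\bP^{(1)}-\bP^{(1)}\|\le \frac{Cd^{(p+1)/2}r(\log d)}{\lambda_r^2}+\frac{Crd(\log d)}{\lambda_r},
$$
and the last term is $\le C\sqrt{dr}(\log d)$ under the standing hypothesis $\lambda_r\ge Cd^{p/4}\sqrt{\log d}\ge Cd\sqrt{\log d}$ (valid since $p\ge4$) together with $r\le d$. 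Adding the $\bZ_0$ and $\bZ_1$ contributions gives the claimed bound for $p\ge4$, and a union bound over the $O(1)$ exceptional events of probability at most $d^{-2}$ produces the stated $1-d^{-1}$.

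The main obstacle is the operator-norm estimate on $\bM$ in the $\bZ_0$ part: its rows are independent but only have finite eighth moments rather than sub-Gaussian tails, so the textbook bound does not apply verbatim and one must either truncate $\scrE$ at a polynomial threshold and absorb the truncation error, or invoke a dedicated heavy-tailed random-matrix bound (in the spirit of \cite{latala2005some}) to reach the target $1-d^{-1}$ probability; the remaining work — verifying the row-independence and covariance structure, and carrying the $\log d$ factors consistently with Lemma~\ref{lem:hosvd} — is routine.
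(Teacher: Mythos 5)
Your decomposition $\bZ=\bP\btheta+(\hat\bP-\bP)\btheta$ and the treatment of the "clean" part $[\bE_j^\top\bZ_0\,\cdots\,\bE_r^\top\bZ_0]$ (independent rows, covariance $\|\bZ_0\|^2\II$, heavy-tailed row operator-norm bound) coincide with the paper's argument, which discharges the eighth-moment issue you flag by first showing $\max_i\|\be_i^\top\tilde\bE\|\le C\sqrt d\,\|\bP\btheta\|\le C\sqrt{dr}$ and then invoking Theorem 5.41 of Vershynin; so the "main obstacle" you identify is real but is resolved exactly along the lines you sketch. Where you genuinely diverge is in the other two pieces, and in both cases your route is simpler and still delivers the stated rates. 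For $p=3$ you bypass the decoupling entirely via $\|\bE_\theta\tilde\bV_j\|\le\|\scrE\|\,\|\bZ\|$, using the third-order tensor norm bound $\|\scrE\|\le C\sqrt d(\log d)$ under finite eighth moments; the paper instead still splits $\bZ$ and controls the $\scrE$-dependent piece through the fourth-order concentration bound of Lemma~\ref{lem:errnormbd2}, because it wants a bound that is uniform in the slicing direction — your deterministic inequality achieves the same uniformity for free, so this is a legitimate shortcut. For the $(\hat\bP-\bP)\btheta$ contribution with $p\ge4$, you use the crude column-wise Frobenius bound $\sqrt r\max_k\|\bE_k\|\,\|\bZ_1\|\le C r\sqrt d\,\|\hat\bP-\bP\|$, whereas the paper proves a uniform bound $\sup_{\bw\in\SS^{d-1}}\|\sum_iw_i\check\bE_{(i)}\|\le C\sqrt{dr\log d}$ via an $\varepsilon$-net and matrix Bernstein and multiplies by $\|\bz\|\le C\sqrt{r\log d}\,\|\hat\bP-\bP\|$; both give $r\sqrt d\,\|\hat\bP-\bP\|$ up to logarithms, so your cruder bound suffices and avoids the net argument. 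The only residual debt in your version is that $\max_{j\le k\le r}\|\bE_k\|\le C\sqrt d$ at confidence $1-d^{-2}$ per matrix requires more than Markov on $\EE\|\bE_k\|^4$ under finite eighth moments — a truncation-plus-Bernstein step of exactly the type the paper carries out in Lemma~\ref{lem:errnormbd1} — but this is the same level of technicality the paper itself absorbs by citation, so it does not constitute a gap in the approach.
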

		
		\begin{proof}[Proof of Lemma \ref{lem:EVthetanorm}]
			Note that, for any $i\ge j$,
			\begin{align*}\label{eq:defE-theta-v}
				\bE_{\theta}{\bv}_i
				=\sum_{l_2\dots l_{p-1}}
				(\bv_{i})_{l_2\dots l_{p-1}}\bE_{l_2\dots l_{p-1}}\bZ
				\stackrel{d}{=}\bE_{(i)}\bZ\numberthis
			\end{align*}
			Here $\bE_{(i)}$ and $\bE_{l_2\dots l_{p-1}}$ are independent $d\times d$ dimensional random matrices with independent copies of $E$ as entries, for $1\le l_q\le d$, $1\le q\le p-1$. We write  
			\begin{equation}\label{eq:defz}
				\bZ=\bP^{(p)}\theta+(\hat{\bP}^{(p)}-\bP^{(p)})\theta
				=:\bP^{(p)}\theta+\bz
			\end{equation}
							
			Since $\bP^{(p)}\theta\sim N(0,\bP^{(p)})$ and is independent of $\bE_{(i)}$, 
			$$
			\left(\bE_{(i)}\bP^{(p)}\theta\right)_{st}
			\quad\text{are independent, }
			\EE\bE_{(i)}\bP^{(p)}\theta=\mathbf{0},
			\quad
			\Var\left(\bE_{(i)}\bP^{(p)}\theta\right)=\|\bP\theta\|^2\II_d.
			$$
			Moreover, for any $s,t\in [r]/[j]$,  $\EE\left(\bE_{(s)}\bP^{(p)}\theta\right)^\top \left(\bE_{(t)}\bP^{(p)}\theta\right)=\delta_{i=j}$. Thus, the matrix 
			\begin{equation}\label{eq:tildE}
			\tilde{\bE}=[\bE_{(j)}\bP^{(p)}\theta
			\,\,
			\bE_{(j+1)}\bP^{(p)}\theta
			\,\,
			\dots\,
			\bE_{(r)}\bP^{(p)}\theta]
			\end{equation}
			has independent isotropic rows. By the moment assumption that $\EE(E^8)<C$, it can be checked that
			$$
			\max_{1\le i\le d}\|\be_i^\top\tilde{\bE}\|\le C\sqrt{d}\|\bP^{(p)}\theta\|\le C\sqrt{dr}
			$$
			with probability at least $1-d^{-3}$. Thus using Theorem 5.41 from \cite{vershynin2010introduction}, we have
			\begin{equation}\label{eq:tildEnorm}
				\|\tilde{\bE}\|\le C\sqrt{dr}(\log d)
			\end{equation}
			with probability at least $1-d^{-3}$. 
			
			Note that by \eqref{eq:defz} and \eqref{eq:tildE}, we have 
			\begin{equation}\label{eq:tildE-dec}
				\|\bE_{\theta}\tilde{\bV}_j\|
				=\tilde{\bE}+[\bE_{(j)}\bz,\,\,\dots,\,\,\bE_{(r)}\bz]
				=\tilde{\bE}+\hat{\bE}
			\end{equation}
			It remains to bound the part $\hat{\bE}$, that involves $\bz$. Since $\bz$ is no longer independent of $\bE$, we will instead use the uniform bound 
			$
			\sup_{\bw\in \SS^{d-1}}\|\sum_{i=1}^dw_i\check{\bE}_{(i)}\|
			$
			where $\check{\bE}_i$ are independent random matrices.

			We divide the proof into two cases.
			
			\paragraph{Case 1: $p=3$}
			We need to bound the operator norm of 
			$\hat{\bE}=\left[
			\bE_{(j)}\bz\,\,\dots\,\,\bE_{(r)}\bz
			\right].
			$
			 
			By the definition of $\bE_{(i)}$ in \eqref{eq:defE-theta-v}, we write 
			\begin{align*}
				\|\hat{\bE}\|
				=\sup_{\by\in \SS^{r-j+1}}
				\norm*{\sum_{i=j}^{r}
				y_i\bE_{(i)}\bz}
			=\sup_{\bx,\by}\sum_{l_2}(\sum_i y_i\bv_i)_{l_2}
			\bx^{\top}\bE_{l_2}\bz
			\le \sup_{\bx}
			\left(
			\sum_{l_2}(\bx^{\top}\bE_{l_2}\bz)^2
			\right)^{1/2}
			\end{align*}
			by Cauchy-Schwarz inequality, noting that $\sum_i y_i\bv_i\in \SS^{d-1}$. Note that $\bE_{l_2}$ are independent $d\times d$ random matrices with independent copies of $E$ as their entries.
			
			We can now apply Lemma \ref{lem:errnormbd2} with $n=d$ to get 
			\begin{align*}
				\sup_{\bx,\bz\in \SS^{d-1}}
				\left(
				\sum_{l_2}(\bx^{\top}\bE_{l_2}\bz)^2
				\right)
				\le&
				\left(d\|\scrG\times_1\bx\times_2\bz\times_3\bx\times_4\bz\|
				+\sqrt{Cd^2(\log d)^2}\right)\\
				\le & d\sum_i\sum_jx_i^2y_j^2+Cd(\log d)\\
				\le & Cd(\log d)
			\end{align*}
		with probability at least $1-d^{-2}$. Finally we have
		$$\|\bz\|\le C\|\hat{\bP}^{(p)}-\bP^{(p)}\|_{\rm F}
		\sqrt{\log d}\le 
		\dfrac{C\sqrt{r}(d^{3/2}+\lambda_r\sqrt{d})(\log d)^{3/2}
		}{\lambda_j^2}$$
		by Lemma~\ref{lem:hosvd}. Consequently, we get 
		$$\|\hat{\bE}\|\le C\sqrt{d(\log d)}\|\bz\|
		\le
		\dfrac{C\sqrt{r}(d^2+d\lambda_r)(\log d)^{2}
		}{\lambda_j^2}
		 ,
		$$
		with probability at least $1-d^{-1}$, 
		and thus using \eqref{eq:tildEnorm} and \eqref{eq:tildE-dec}, that
		$$
		\|\bE_{\btheta}\tilde{\bV}_j\|\le C\sqrt{dr}(\log d),
		$$
		since $\lambda_j\ge Cd^{3/4}(\log d)$.
			
		\paragraph{Case 2: $p\ge 4$.}
			The rows of $\check{\bE}_{(i)}$ are independent and each row $\check{\bE}_{(i)j}$ is isotropic. Since $\bw$ is a unit vector, it can be checked that, for fixed $\bw$, $\EE(\sum w_i\check{\bE}_{(i)})^\top(\sum w_i\check{\bE}_{(i)})=d\II_r$. We will bound
			\begin{align*}
				\norm*{(\sum w_i\check{\bE}_{(i)})^\top(\sum w_i\check{\bE}_{(i)})-d\II_r}
				=&
				\norm*{
				\sum_{i_1}w_{i_1}
				\sum_{j=1}^d
				\left\{\check{\bE}_{(i_1)j}\left(\sum_{i_2}w_{i_2}
				\check{\bE}_{(i_2)j}\right)^{\top}
				-w_{i_1}\II_r
				\right\} 	
			}\\
		=&
		\sup_{\by\in \SS^{r-j+1}}
		\norm*{
		\sum_{k=1}^d\left(\hat{\bE}_{(k)}\by\by^\top\hat{\bE}_{(k)}^\top
		-\II_d
		\right)},	
		\end{align*} for independent random matrices $\hat{\bE}_{(k)}\stackrel{d}{=}\check{\bE}\in\RR^{d\times r}$.		By the moment condition, it holds that $\max_{1\le k\le d}\|\check{\bE}_{(k)}\|\le \sqrt{d}$ with high probability.	Then using matrix Bernstein inequality for each fixed $\by$, one has
		$$
		\PP\left(
		\norm*{
			\sum_{k=1}^d\left(\hat{\bE}_{(k)}\by\by^\top\hat{\bE}_{(k)}^\top
		-\II_d
		\right)}
		\ge t
		\right)
		\le d\exp(-\dfrac{t^2/C}{d^2+dt})
		$$
		Taking $t=dr$ and then taking an $\eps$-net of $\SS^{r-j+1}$, which has size at most $\exp(Cr)$, applying a union bound yields
		$$
		\sup_{\bw\in \SS^{d-1}}\|\sum_{i=1}^dw_i\check{\bE}_{(i)}\|
		=\left(
		d+\sup_{\by\in \SS^{r-j+1}}
		\norm*{
			\sum_{k=1}^d\left(\hat{\bE}_{(k)}\by\by^\top\hat{\bE}_{(k)}^\top
			-\II_d
			\right)}
		\right)^{1/2}
		\le C\sqrt{dr\log d}
		$$
		with probability at least $1-d^{-3}$ (this bound arises from bounding the maximum of the matrix norms $\check{\bE}_{(k)}$, $1\le k\le d$). We then have, for $\bz=(\hat{\bP}^{(p)}-\bP^{(p)})\theta$, that
		\begin{align*}
			\norm*{\bE_{\theta}\tilde{\bV}_j}
			=&\norm*{[\bE_{(j)}\bZ\,\,\bE_{(j+1)}\bZ\,\,\dots\,\,\bE_{(r)}\bZ]}\\
			&\le 
			\norm*{\tilde{\bE}}+
			 \norm*{[\bE_{(j)}\bz\,\,\bE_{(j+1)}\bz\,\,\dots\,\,\bE_{(r)}\bz]}
			 \\
			 &\le C\sqrt{dr}(\log d)
			 +\sup_{\bw\in \SS^{d-1}}\|\sum_{i=1}^dw_i\check{\bE}_{(i)}\|\|\bz\|\\
			 &\le C\sqrt{dr}(\log d)+C\sqrt{dr(\log d)}
			 \cdot\left(\dfrac{C\sqrt{r}d^{p/2}(\log d)}{\lambda_r^2}
			 +\dfrac{C\sqrt{d(\log d)^2}}{\lambda_r}
			 \right)\\
			 &\le C\sqrt{dr}(\log d)+\dfrac{Cd^{(p+1)/2}r\log d}{\lambda_r^2}
		\end{align*}
	with probability at least $1-d^{-3}$, since $\lambda_r\ge d^{p/4}(\log d)\ge d$. In the above, we have used \eqref{eq:tildE}, \eqref{eq:tildEnorm}, the definition of $\check{\bE}_{(i)}$ and the upper bound
	$$\|\bz\|\le C\|\hat{\bP}^{(p)}-\bP^{(p)}\|_{\rm F}
	\sqrt{\log d}\le 
	\dfrac{C\sqrt{r}(d^{p/2}+\lambda_r\sqrt{d})(\log d)^{3/2}
	}{\lambda_j^2}$$
	 using Lemma~\ref{lem:hosvd}.
		\end{proof}
		
\medskip

		\begin{lemma}\label{lem:EVthetanorm-2}
        If $\bE_\theta$ is defined using sample splitting as in Algorithm~\ref{alg:alg-init-incoherent} we have
			$$
				\|\bE_{\theta}\tilde{\bV}_j\|\le					
                C\sqrt{dr}(\log d)
			$$
			with probability at least $1-d^{-1}$.
		\end{lemma}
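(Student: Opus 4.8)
The strategy is to rerun the argument that bounds the matrix $\tilde{\bE}$ in the proof of Lemma~\ref{lem:EVthetanorm}, while observing that sample splitting makes the residual term there (the piece built from $(\hat{\bP}-\bP)\btheta$, which is the sole source of the extra $d^{(p+1)/2}r(\log d)/\lambda_r^2$ contribution) disappear. Write $\bZ=\hat{\bP}^{(1,2)}\btheta$ and $\bE_\theta={\sf Mat}_1(\scrE_{[2]}\times_{1,2}\bZ)$, where $\scrE_{[2]}$ is the noise in the half $\scrX_{[2]}$ used for random slicing. Since $\scrE$ is an iid ensemble and $\scrX_{[1]},\scrX_{[2]}$ are disjoint coordinate slabs of $\scrX$ along the $p$-th mode, $\scrX_{[1]}$ --- hence $\hat{\bP}^{(1,2)}$, and hence $\bZ$ (with $\btheta$ exogenous) --- is independent of $\scrE_{[2]}$. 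This is the one fact not available in Lemma~\ref{lem:EVthetanorm}, and it is what lets the whole of $\bE_\theta\tilde{\bV}_j$ play the role of the ``good'' part $\tilde{\bE}$.

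First I would condition on $\bZ$ and work on the event $\{\|\bZ\|^2\le 4r\}$, which has probability at least $1-d^{-3}$ since $\|\bZ\|^2=\btheta^\top\hat{\bP}^{(1,2)}\btheta\sim\chi^2_r$ (as $\tr\hat{\bP}^{(1,2)}=r$) together with standard $\chi^2$ concentration (\cite{vershynin2018high}). Conditionally on $\bZ$, the entries of $\bE_\theta$ are independent, mean zero, with variance $\|\bZ\|^2\le 4r$ and eighth moment at most $C\|\bZ\|^8$, by Rosenthal-type moment inequalities (\cite{latala1997estimation}) applied to the weighted sums $\sum_{k_1,k_2}(\scrE_{[2]})_{k_1k_2\cdots}Z_{k_1k_2}$ and the assumption $\EE E^8<C$. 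For $p\ge 5$ the columns $\bv_i=\otimes_{4\le q\le p-1}\bu_i^{(q)}\otimes\bu_{i,2}^{(p)}$ of $\tilde{\bV}_j$ are mutually orthogonal with $\tilde{\bV}_j^\top\tilde{\bV}_j=\diag(\|\bu_{i,2}^{(p)}\|^2)\preceq\II$, so --- via the same identity $\bE_\theta\bv_i\stackrel{d}{=}\bE_{(i)}\bZ$ used in the proof of Lemma~\ref{lem:EVthetanorm} --- the matrix $\tilde{\bE}:=\bE_\theta\tilde{\bV}_j\in\RR^{d\times(r-j+1)}$ has independent rows, the $s$-th one mean zero with covariance $\|\bZ\|^2\,\tilde{\bV}_j^\top\tilde{\bV}_j\preceq 4r\,\II$. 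The eighth-moment bound yields $\max_s\|\be_s^\top\tilde{\bE}\|\le C\sqrt{dr}$ with probability at least $1-d^{-3}$, and then Theorem~5.41 of \cite{vershynin2010introduction} --- exactly as in the $\tilde{\bE}$-step of Lemma~\ref{lem:EVthetanorm} --- gives $\|\tilde{\bE}\|\le C\sqrt{dr}(\log d)$ with conditional probability at least $1-d^{-2}$; a union bound over the exceptional events removes the conditioning.

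For $p=4$ the columns of $\tilde{\bV}_j$ are the sub-sampled vectors $\bu_{i,2}^{(4)}=\bR\bu_i^{(4)}$, where $\bR=\diag(B_1,\dots,B_d)$, $B_k\stackrel{iid}{\sim}{\rm Bernoulli}(1/2)$, is the coordinate projection onto the slab $\scrX_{[2]}$; hence $\tilde{\bV}_j=\bR[\bu_j^{(4)}\ \cdots\ \bu_r^{(4)}]$ is $\bR$ applied to a matrix with orthonormal columns, so $\|\bE_\theta\tilde{\bV}_j\|\le\|\bE_\theta\bR\|\le\|\bE_\theta\|$, and since $\bE_\theta\in\RR^{d\times d}$ has independent mean-zero entries with variance $\le 4r$ and finite eighth moment, Lata\l a's heavy-tailed matrix bound (\cite{latala2005some}) gives $\|\bE_\theta\|\le C\sqrt{d}\,\|\bZ\|\le C\sqrt{dr}$ on $\{\|\bZ\|^2\le 4r\}$. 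I expect the proof to be largely bookkeeping once the independence $\bZ\perp\scrE_{[2]}$ is in place; the only genuinely technical ingredient is the heavy-tailed, row-independent matrix concentration needed for $p\ge 5$, and that is already carried out inside the proof of Lemma~\ref{lem:EVthetanorm} and only needs to be re-invoked here.
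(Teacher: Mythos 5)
Your proposal is correct and follows essentially the same route as the paper: the whole point is that sample splitting makes $\bZ=\hat{\bP}^{(1,2)}\btheta$ independent of $\scrE_{[2]}$, so the entire matrix $\bE_\theta\tilde{\bV}_j$ plays the role of the ``good'' part $\tilde{\bE}$ with independent isotropic rows, and Theorem~5.41 of \cite{vershynin2010introduction} gives $C\sqrt{dr}(\log d)$ with no residual $(\hat{\bP}-\bP)\btheta$ term to control. Your separate treatment of $p=4$ (where the columns $\bu_{i,2}^{(p)}$ are not orthonormal, handled via $\|\bE_\theta\tilde{\bV}_j\|\le\|\bE_\theta\|$ and \cite{latala2005some}) is in fact slightly more careful than the paper's lemma proof, which tacitly assumes orthonormality of the $\bv_i$ and defers that case to the proof of Theorem~\ref{th:init-incoh}.
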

		
		\begin{proof}[Proof of Lemma \ref{lem:EVthetanorm-2}]
			Note that, for any $i\ge j$,
			\begin{align*}\label{eq:defE-theta-v}
				\bE_{\theta}{\bv}_i
				=\sum_{l_2\dots l_{p-1}}
				(\bv_{i})_{l_2\dots l_{p-1}}\bE_{l_2\dots l_{p-1}}\bZ
				\stackrel{d}{=}\bE_{(i)}\bZ\numberthis
			\end{align*}
			Here $\bE_{(i)}$ and $\bE_{l_2\dots l_{p-1}}$ are independent $d\times d$ dimensional random matrices with independent copies of $E$ as entries, for $1\le l_q\le d$, $1\le q\le p-1$. We write  
			\begin{equation}\label{eq:defz-2}
				\bZ=\hat{\bP}^{(1,2)}\btheta
			\end{equation}							
			Since $\hat{\bP}^{(1,2)}\btheta\sim N(0,\hat{\bP}^{(1,2)})$ and is constructed from $\scrX_{[1]}$, hence independent of $\bE_{(i)}$, we have 
			$$
			\left(\bE_{(i)}\bP^{(p)}\theta\right)_{st}
			\quad\text{are independent, }
			\EE\bE_{(i)}\hat{\bP}^{(1,2)}\btheta=\mathbf{0},
			\quad
			\Var\left(\bE_{(i)}\hat{\bP}^{(1,2)}\btheta\right)=\|\hat{\bP}^{(1,2)}\btheta\|^2\II_d.
			$$
			Moreover, for any $s,t\in [r]/[j]$,  $\EE\left(\bE_{(s)}\hat{\bP}^{(1,2)}\btheta\right)^\top \left(\bE_{(t)}\hat{\bP}^{(1,2)}\btheta\right)=\delta_{i=j}$. Thus, the matrix 
			\begin{equation}\label{eq:tildE}
			\tilde{\bE}=[\bE_{(j)}\hat{\bP}^{(1,2)}\btheta
			\,\,
			\bE_{(j+1)}\hat{\bP}^{(1,2)}\btheta
			\,\,
			\dots\,
			\bE_{(r)}\hat{\bP}^{(1,2)}\btheta]
			\end{equation}
			has independent isotropic rows. By the moment assumption that $\EE(E^8)<C$, it can be checked that
			$$
			\max_{1\le i\le d}\|\be_i^\top\tilde{\bE}\|\le C\sqrt{d}\|\hat{\bP}^{(1,2)}\btheta\|\le C\sqrt{dr}
			$$
			with probability at least $1-d^{-3}$. Thus using Theorem 5.41 from \cite{vershynin2010introduction}, we have
			\begin{equation}\label{eq:tildEnorm}
				\|\tilde{\bE}\|\le C\sqrt{dr}(\log d)
			\end{equation}
			with probability at least $1-d^{-3}$. 
		\end{proof}
		
\medskip

\begin{lemma}\label{lem:errnormbd1}
	$\|\bE\bE^\top-d^{p-1}\II_d\|\le Cd^{p/2}(\log d)$, $\|\bE\bV\|\le C\sqrt{d}(\log d)$ with probability at least $1-d^{-1}$.
\end{lemma}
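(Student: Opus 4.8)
The plan is to recognize both estimates as concentration bounds for a ``sample‑covariance'' type sum and to handle them by truncation followed by the matrix Bernstein inequality. Write $\bE={\sf Mat}_1(\scrE)\in\RR^{d\times N}$ with $N=d^{p-1}$; its columns $\bg_1,\dots,\bg_N$ are i.i.d.\ random vectors in $\RR^d$ with i.i.d.\ mean‑zero, unit‑variance entries, and $\EE(\bE\bE^\top)=N\II_d$. For the second estimate I would first note, from \eqref{eq:odeco-mat}, that $\bV=\bU^{(2)}\odot\cdots\odot\bU^{(p)}$ has columns $\bv_i=\bu^{(2)}_i\otimes\cdots\otimes\bu^{(p)}_i$ with $\langle\bv_i,\bv_j\rangle=\prod_{q=2}^p\langle\bu^{(q)}_i,\bu^{(q)}_j\rangle=\delta_{ij}$, so $\bV^\top\bV=\II_r$; hence the rows $\br_i=\bV^\top\bE_{i\cdot}$ of $\bE\bV\in\RR^{d\times r}$ are independent with $\EE(\br_i\br_i^\top)=\II_r$, and $\|\bE\bV\|^2=\|\sum_{i=1}^d\br_i\br_i^\top\|$. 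Thus both claims reduce to bounding $\|\sum_{k=1}^M\bx_k\bx_k^\top-M\II_\ell\|$ for independent rank‑one summands with $\EE(\bx_k\bx_k^\top)=\II_\ell$, where $(M,\ell)=(N,d)$ in the first case and $(M,\ell)=(d,r)$ in the second, and $\ell\le M$ in both.

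I would then proceed in three steps. Since $\|\bx_k\|^2$ is a quadratic form in the entries of $\scrE$, the hypothesis $\EE(E^8)<C_0$ gives a finite fourth moment for $\|\bx_k\|^2-\EE\|\bx_k\|^2$ — of order $d^2$ by Rosenthal's inequality \cite{latala1997estimation} in the first case, of order $r^2$ by a direct chaos expansion (using $\|\bV\bV^\top\|_{\rm F}^2=r$) in the second — so Markov's inequality and a union bound over the $M$ summands yield a threshold $\rho$, with $\rho\asymp d^{(p+2)/4}$ in the first case and $\rho\asymp\sqrt{dr}\le d$ in the second, such that $\max_k\|\bx_k\|^2\le\rho$ on an event $\mathcal G$ of probability at least $1-\tfrac12 d^{-1}$. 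On $\mathcal G$ one has $\sum_k\bx_k\bx_k^\top=\sum_k\bx_k^\sharp\bx_k^{\sharp\top}$ with $\bx_k^\sharp:=\bx_k\,\mathbbm{1}\{\|\bx_k\|^2\le\rho\}$, and these truncated vectors are independent, of norm at most $\sqrt\rho$, with covariance $\Sigma^\sharp\preceq\II_\ell$ obeying $\|\II_\ell-\Sigma^\sharp\|\le(\EE\|\bx_k\|^4)^{1/2}\,\PP(\|\bx_k\|^2>\rho)^{1/2}$ by Cauchy--Schwarz, so $N\|\II_d-\Sigma^\sharp\|=O(d^{p/2})$ in the first case. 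Finally I would apply matrix Bernstein (see, e.g., \cite{vershynin2018high}) to $\sum_k(\bx_k^\sharp\bx_k^{\sharp\top}-\Sigma^\sharp)$, with a.s.\ summand bound $2\rho$ and variance parameter $v=\|\sum_k\EE(\bx_k^\sharp\bx_k^{\sharp\top}-\Sigma^\sharp)^2\|\le M\,\|\EE[\|\bx_k\|^2\bx_k\bx_k^\top]\|$.

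The key computation is that $\EE[\|\bx_k\|^2\bx_k\bx_k^\top]$ has operator norm only of order $\ell$ (not $\rho$): in the first case $\EE[\|\bg\|^2\bg\bg^\top]$ is diagonal with entries $\EE(E^4)+d-1=O(d)$ because the coordinates of $\bg$ are independent and mean‑zero; in the second a short calculation using $\bV^\top\bV=\II_r$ gives $\EE[\|\br_i\|^2\br_i\br_i^\top]=(r+2)\II_r+(\EE(E^4)-3)\,\bV^\top\diag(\bV\bV^\top)\bV$, whose norm is at most $Cr$ since $\|\bV^\top\diag(\bV\bV^\top)\bV\|\le\sum_c(\bV\bV^\top)_{cc}^2\le\tr(\bV\bV^\top)=r$. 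Hence $v\le CNd=Cd^p$ (first case) and $v\le Cdr$ (second case), and matrix Bernstein gives $\|\sum_k(\bx_k^\sharp\bx_k^{\sharp\top}-\Sigma^\sharp)\|\le C(\sqrt{v\log d}+\rho\log d)$ with probability at least $1-d^{-2}$. In the first case this is $C(d^{p/2}\sqrt{\log d}+d^{(p+2)/4}\log d)\le Cd^{p/2}\log d$ for $p\ge3$, and writing $\bE\bE^\top-N\II_d=\sum_j(\bg_j^\sharp\bg_j^{\sharp\top}-\Sigma^\sharp)+N(\Sigma^\sharp-\II_d)$ on $\mathcal G$ yields $\|\bE\bE^\top-d^{p-1}\II_d\|\le Cd^{p/2}\log d$. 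In the second case it is $C(\sqrt{dr\log d}+\sqrt{dr}\log d)\le C\sqrt{dr}\log d$, and since $\|\bE\bV\|^2=\|\sum_i\br_i^\sharp\br_i^{\sharp\top}\|\le C\sqrt{dr}\log d+d\|\Sigma^\sharp\|\le Cd\log d$ on $\mathcal G$ (using $r\le d$ and $\|\Sigma^\sharp\|\le1$), one gets $\|\bE\bV\|\le C\sqrt d\log d$. A union bound over $\mathcal G$ and the two Bernstein events gives both inequalities with probability at least $1-d^{-1}$.

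The main obstacle is that $\scrE$ has only a finite eighth moment, which kills the obvious $\varepsilon$‑net argument over $\SS^{\ell-1}$: each net point has merely a polynomial ($O(t^{-4})$) deviation tail, no match for the $e^{C\ell}$ cardinality of the net. The truncation argument is what saves it, provided one checks that (i) the diagonal structure of $\EE[\|\bx\|^2\bx\bx^\top]$ keeps the Bernstein variance at the optimal order $M\ell$ rather than $M\rho$, and (ii) the truncation level $\rho$ is small enough that $\rho\log d$ and $M\|\II_\ell-\Sigma^\sharp\|$ both stay below $d^{p/2}\log d$ — which works precisely because $N=d^{p-1}$ dominates $d$ for $p\ge3$. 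Should a self‑contained argument not be wanted, both bounds also follow directly from non‑asymptotic control of the extreme singular values of matrices with i.i.d.\ entries having bounded moments (\cite{bai1988note,latala2005some}) together with $\bV^\top\bV=\II_r$.
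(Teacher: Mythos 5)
Your proof is correct, and its overall strategy for the first bound (truncate the columns, then apply matrix Bernstein with a carefully computed variance proxy) is the same as the paper's; the execution differs in two meaningful ways. First, the paper truncates at the higher level $\|\bE_i\|\le Cd^{p/4}$ and must then separately control the discarded columns --- bounding their number by a Chernoff argument ($N\le C\log d$ exceedances) and each exceedance's norm by a Rosenthal--Khintchine tail bound --- whereas you choose the lower threshold $\rho\asymp d^{(p+2)/4}$ for $\|\bg_j\|^2$ so that, by the fourth moment of $\|\bg_j\|^2-d$ (which is where the $\EE(E^8)<C$ hypothesis enters, exactly as in the paper), no column is truncated at all on a $1-O(d^{-1})$ event; this trades the paper's exceedance bookkeeping for a slightly more delicate check that $\rho\log d$ and the truncation bias $N\|\II_d-\Sigma^\sharp\|$ both stay below $d^{p/2}\log d$, which you verify. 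Second, for $\|\bE\bV\|$ the paper caps the row norms at $C\sqrt d$ via the fourth moment of the quadratic form $\be_i^\top\bE\bV\bV^\top\bE^\top\be_i$ and then invokes Theorem 5.41 of Vershynin on the good rows (handling the $\le C\log d$ bad rows separately), while you fold this case into the same rank-one Bernstein scheme, which requires your explicit computation of $\EE[\|\br_i\|^2\br_i\br_i^\top]=(r+2)\II_r+(\EE E^4-3)\bV^\top\diag(\bV\bV^\top)\bV$ to keep the variance at order $dr$; this is the one genuinely new calculation in your argument, and it is right (indeed $\bV\bV^\top$ is a projection, so its diagonal entries are at most one). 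Your unified treatment is arguably cleaner and self-contained, at the cost of the covariance computation; the paper's route outsources the second bound to a known theorem for matrices with independent isotropic rows. Both deliver the stated rates with the stated probability.
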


\begin{proof} 
	For the first inequality, note that the columns $\bE_i$ of the matrix $\bE\in \RR^{d\times d^{p-1}}$ are independent random vectors with $\EE(\bE_i\bE_i^\top)=\II_d$. Define the truncated version of the columns as $\bY_i:=\bE_i\mathbbm{1}(\|\bE_i\|\le Cd^{p/4})$. Note that $\|\bY_i\bY_i^\top\|\le Cd^{p/2}$ almost surely, and 
	$$
	\norm*{\EE(\bY_i\bY_i^\top\bY_i\bY_i^\top)}\le 
	\sup_{\bx\in \SS^{d-1}}
	\left(\{\EE \|\bY_i\|^4\}
	\{\EE\langle \bY_i,\bx\rangle^4\}
	\right)^{1/2}
	\le C(\EE\|\bE_i\|^4)^{1/2}\le Cd
	$$
	using the moment condition that $\EE(E^8)<C$, and hence, for any $\bx\in \SS^{d-1}$, $\EE\langle \bE_i,\bx\rangle^4\le C$ by Rosenthal inequalities. Similarly it is not hard to check that $\EE\|\bX_i\|^4\le Cd^2$.
	 
	Thus by matrix Bernstein inequality,
	\begin{equation}\label{eq:mat-bern}
		\norm*{\sum_{i=1}^{d^{p-1}}\bY_i\bY_i^{\top}-d^{p-1}\EE(\bY_i\bY_i^{\top})}
		\le  
		Cd^{p/2}(\log d)
	\end{equation}
	with probability at least $1-d^{-2}$. It remains to bound the un-truncated part
	\begin{align*}
		\norm*{
	\sum_{i=1}^{d^{p-1}}
	\bE_i\bE_i^{\top}\mathbbm{1}(\|\bE_i\|>Cd^{p/4})
	-d^{p-1}\EE(\bE_i\bE_i^{\top}\mathbbm{1}(\|\bE_i\|>Cd^{p/4}))	
	}.
	\end{align*}
	To this end, we will use the tail bound techniques from \cite{adamczak2010}, \cite{vershynin2011approximating} and \cite{vershynin2012}. Note that
	$$
	\PP(\|\bE\|>Cd^{p/4})
	=\PP(\sum_{i=1}^{d}(E_i^2-1)>Cd^{p/2})
	\le \dfrac{Cd\EE(E^4)}{Cd^{p}}
	\le \dfrac{C}{d^{p-1}}. 
	$$
	Let $N:=\displaystyle\sum_{i=1}^{d^{p-1}}\mathbbm{1}(\|\bE_i\|>Cd^{p/4})$. Since $\bE_i$ are i.i.d., we have that $\EE(N)=d^{p-1}\PP(\|\bE\|>Cd^{p/4})\le C$. Then by Chernoff bounds for Binomial counts, we have
	$$
	\PP(N>C(\log d))
	=\PP(N-\EE(N)> C(\log d))\le d^{-2}.
	$$
	Then 
	\begin{equation}\label{eq:tail-bd1}
	\norm*{
		\sum_{i=1}^{d^{p-1}}
		\bE_i\bE_i^{\top}\mathbbm{1}(\|\bE_i\|>Cd^{p/4})}
	\le N\max_{1\le i\le d^{p-1}}\|\bE_i\|^2
	\le C(\log d)d^{p/2}
	\end{equation}
	with probability at least $1-d^{-2}$, since
	$$
	\PP(\max_{1\le i\le d^{p-1}}\|\bE_i\|^2>d^{p/2})
	\le d^{p-1}\cdot \dfrac{d^2\EE(|E_{ij}^2-1|^4)}{d^{2p}}
	\le d^{1-p}
	$$
	by Rosenthal and Khintchine inequalities, since $\EE(E)=0,\EE(E^2)=1$ and $\EE(E^8)\le C$. Finally, it can be checked that
	\begin{equation}\label{eq:tail-bd2}
			\|\EE(\bE_i\bE_i^{\top}\mathbbm{1}(\|\bE_i\|>Cd^{p/4}))\|
		\le \sup_{\bx\in \SS^{d-1}}(\EE\langle\bE_i,\bx\rangle^4\PP(\|\bE_i\|>Cd^{p/4}))^{1/2}
		\le C\cdot d^{(p-1)/2}.
	\end{equation}
	Adding the bounds from \eqref{eq:mat-bern}, \eqref{eq:tail-bd1} and \eqref{eq:tail-bd2} finishes the proof of the first part of the lemma. 
	
	For the second part, note that $\bV\in \RR^{d^{p-1}\times r}$ and  $\bV^{\top}\bV=\II_d$. Since the elements of $\bE$ are i.i.d., we have that $\bE\bV$ has independent rows $\bW_i=(\be_i^{\top}\bE\bV)^{\top}$, which satisfy
	$
	\EE(\bW_i\bW_i^{\top})=\II_r 
	$. 
	Since $\be_i^{\top}\bE$ has i.i.d. entries with mean zero and variance one, it can be checked that, for $\bA=\bV\bV^{\top}$,
	$$
	\EE(\be_i^{\top}\bE\bA\bE^{\top}\be_i)={\rm tr}(\bA)={\rm tr}(\bV^{\top}\bV)=r
	$$
	and
	\begin{align*}
		\EE(\be_i^{\top}\bE\bA\bE^{\top}\be_i)^4
		=\EE(\sum_{j,k}a_{jk}E_{ij}E_{ik} )^4
		=&\sum_{j,k}a_{jk}^4\EE(E_{ij}E_{ik})^4
		+\sum_{j_1,k_1}\sum_{j_2k_2}a_{j_1k_1}^2a_{j_2k_2}^2\\
		\le& C\|\bA\|_F^4
		\le C({\rm tr}(\bA^{\top}\bA))^2=Cr^2.
	\end{align*}
	which means, by Markov inequality, that
	\begin{align*}
		\PP(\|\bW_i\|\ge C\sqrt{d})
		=& \PP(\be_i^{\top}\bE\bV\bV^{\top}\bE^{\top}\be_i^{\top}
		\ge Cd)
		\le \dfrac{r^2}{d^4}.
	\end{align*}
	As before, if we define $S=\{i:\|\bW_i\|\ge C\sqrt{d}\}$, we have using Chernoff bounds that $\PP(|S|>C\log d)\le d^{-2}$.
	
	We now follow the truncation strategy in part 1 of the proof. By a possible permutation of the rows, let us write $\bE\bV=[\bW_{(1)}^{\top}\,\bW_{(2)}^{\top}]^{\top}$ where $\bW_{(1)}$ contain the rows corresponding to $i\notin S$. This means, by Theorem 5.41 of \cite{vershynin2010introduction}, that
	$$
	\|\bW_{(1)}\|\le C\sqrt{d}(\log d)
	$$ 
	with probability at least $1-d^{-2}$. On the other hand, 
	$$
	\|\bW_{(2)}\|\le N\cdot \max\|\bW_i\|\le C\sqrt{d}(\log d)
	$$
	since $\PP(\max\|\bW_i\|\ge C\sqrt{d})\le r^2/d^3$. This finishes the proof of part 2.
\end{proof}	

\bigskip

\begin{lemma}\label{lem:errnormbd2}
	Let $\bE_{(i)}$ be $d\times d$ iid random matrices whose elements are independent copies of a random variable $E$, where $\EE(E)=0$ and $\EE(E^2)=1$ and $\EE(|E|^{\alpha})<C$ for some $\alpha\ge 8$. Then
	$$
	\norm*{\dfrac{1}{n}\sum_{s=1}^{n}\bE_{(s)}\circ \bE_{(s)}-\scrG}
	\le \sqrt{\dfrac{Cd(\log d)^2}{n}},
	$$
	provided $n>Cd$, with probability at least $1-d^{-2}$. Here $\scrG\in \RR^{d\otimes 4}$ has entries $\scrG_{ijkl}=\delta_{(i,j)(k,l)}$ where $\delta$ is Kronecker's delta function.
\end{lemma}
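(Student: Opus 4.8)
The plan is to begin by identifying the mean. Since the entries of each $\bE_{(s)}$ are independent with mean zero and unit variance, $\EE[(\bE_{(s)})_{ij}(\bE_{(s)})_{kl}]=\delta_{(i,j)(k,l)}$, so $\scrG=\EE[\bE_{(s)}\circ\bE_{(s)}]$ and the quantity to bound is the deviation of an i.i.d.\ average of tensors from its mean. Identifying the $4$-tensor $\bE_{(s)}\circ\bE_{(s)}$ with the rank-one positive semidefinite matrix $\bg_s\bg_s^\top$ on $\RR^{d^2}$, $\bg_s={\rm vec}(\bE_{(s)})$, and $\scrG$ with $\II_{d^2}$, the tensor operator norm in the statement equals $\sup_{\bu_1,\ldots,\bu_4\in\SS^{d-1}}(\bu_1\otimes\bu_2)^\top(\tfrac1n\sum_{s}\bg_s\bg_s^\top-\II_{d^2})(\bu_3\otimes\bu_4)$, i.e.\ the supremum of this quadratic form restricted to unit-norm \emph{rank-one} test matrices. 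Because $\bu\mapsto\bE_{(s)}^\top\bu$ is linear, a polarization identity inside the rank-one class (at the cost of a factor $2$) reduces the task to bounding
\[
\sup_{\bw\in\SS^{d-1}}\bigl\|\bQ(\bw)-\II_d\bigr\|,\qquad \bQ(\bw):=\tfrac1n\sum_{s=1}^n\bigl(\bE_{(s)}^\top\bw\bigr)\bigl(\bE_{(s)}^\top\bw\bigr)^\top .
\]
The key structural point is that for each fixed $\bw$ the vector $\bE_{(s)}^\top\bw$ has \emph{independent} coordinates, each with mean zero, unit variance and finite $\alpha$-th moment ($\alpha\ge 8$), and $\EE[\bQ(\bw)]=\II_d$; thus $\bQ(\bw)$ is the sample covariance of a product-measure vector in $\RR^d$ — not $\RR^{d^2}$ — whose natural deviation rate on $n$ samples is $\sqrt{d/n}$ up to logarithmic factors, consistent with the claim since $n>Cd$. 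Everything else is making this rate uniform over $\bw$.

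For the uniform bound I would follow the truncation scheme of the proof of Lemma~\ref{lem:errnormbd1} and of \cite{adamczak2010,vershynin2011approximating,vershynin2012}. Fix an $\eps$-net $\calN$ of $\SS^{d-1}$ with $\eps$ a small inverse polynomial of $d$, so $|\calN|\le e^{Cd\log d}$, and transfer from $\calN$ to the whole sphere using a Lipschitz estimate for $\bw\mapsto\bQ(\bw)$. Truncate the entries of each $\bE_{(s)}$ at a level $\tau=\tau(n,d)$, writing $\bE_{(s)}=\tilde\bE_{(s)}+\bE_{(s)}^{>}$. For the truncated part, $\|\EE[\tilde\bQ(\bw)]-\II_d\|=O(d/\tau^{\alpha-2})$ is negligible when $\tau$ is not too small; for each fixed $\bw$ the bound $\EE(\bu^\top\bE_{(s)}\bw)^4\le C$ (from Rosenthal/Marcinkiewicz--Zygmund and $\EE|E|^8<C$) yields a variance proxy of order $d/n$ for $\tilde\bQ(\bw)-\EE\tilde\bQ(\bw)$, so a matrix Bernstein estimate — combined with the fact that $\bw\mapsto\|\tilde\bQ(\bw)-\EE\|$ is a convex function of the truncated data with \emph{self-bounding}, essentially dimension-free Lipschitz behaviour at scale $1/\sqrt n$ — gives a deviation of order $\sqrt{d/n}$ on an event of probability $1-e^{-cd}$, which survives the union over $\calN$. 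For the tail part, a Chernoff bound shows that only $O(\log d)$ of the $n$ matrices have any entry exceeding $\tau$ (once $\tau\gtrsim(nd^2)^{1/\alpha}$), each contributing negligibly to $\|\tfrac1n\sum_s\bE_{(s)}^{>}\circ\bE_{(s)}^{>}\|$, or one bounds this average directly by a moment inequality exactly as in the second half of the proof of Lemma~\ref{lem:errnormbd1}. Adding the two contributions, together with the polynomial logarithmic losses from the net refinement, yields $\sqrt{Cd(\log d)^2/n}$ on an event of probability at least $1-d^{-2}$.

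The step I expect to be the main obstacle is this uniformization over $\bw$ in the presence of heavy tails. A naive union bound of matrix Bernstein over $e^{Cd}$ net points inflates the estimate by a factor $\sqrt d$ (it forces the deviation to be of order $d/\sqrt n$ rather than $\sqrt{d/n}$), so one must genuinely exploit that $\|\bQ(\bw)-\II_d\|$ fluctuates about its mean only at scale $1/\sqrt n$ — via a dimension-free concentration inequality for convex, self-bounding-Lipschitz functions of independent bounded vectors — while simultaneously choosing $\tau$ large enough that the truncation bias and the count of discarded matrices are negligible and small enough that the relevant boundedness constants stay controlled. It is in reconciling these competing demands that the hypotheses $\alpha\ge8$ and $n>Cd$ are used, and where the reduction showing the effective dimension to be $d$ — equivalently, that $\bE_{(s)}^\top\bw$ has independent coordinates — is essential. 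A more streamlined alternative replaces the explicit net by a generic-chaining / matrix-deviation inequality applied directly to the random process on the set of unit rank-one matrices, whose Gaussian width is of order $\sqrt d$; this produces the $\sqrt{d/n}$ rate (up to logs) and concentration about the mean in one stroke, at the price of importing heavier machinery.
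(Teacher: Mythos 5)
Your reduction of the tensor norm to $\sup_{\bw\in\SS^{d-1}}\|\bQ(\bw)-\II_d\|$ with $\bQ(\bw)=\tfrac1n\sum_s(\bE_{(s)}\bw)(\bE_{(s)}\bw)^{\top}$ is sound for the quantity the paper actually bounds and uses (the quadratic-form supremum with modes $1,3$ and $2,4$ contracted against the same vectors; since $\bQ(\bw)-\II_d$ is symmetric your formulation coincides with it). The "polarization within the rank-one class" you invoke to get there from the full multilinear norm is shaky --- sums and differences of rank-one matrices leave the class --- but that step is not needed. Your observation that for fixed $\bw$ the vector $\bE_{(s)}\bw$ has independent standardized coordinates, so that $\bQ(\bw)$ is a $d$-dimensional (not $d^2$-dimensional) sample covariance with rate $\sqrt{d/n}$, is exactly the structure the paper exploits, and your treatment of the heavy tails mirrors the truncation in Lemma 6.

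The genuine gap is the uniformization over $\bw$, which you correctly identify as the main obstacle but do not resolve. The tools you name --- a net of cardinality $e^{Cd\log d}$ plus matrix Bernstein plus a concentration inequality for convex Lipschitz functions of bounded variables --- do not close under an eighth-moment assumption. To keep the truncation bias and the count of discarded matrices under control with only $\EE|E|^8<C$ you must truncate at a level $\tau\gtrsim (nd^2)^{1/8}$, polynomial in $d$; Talagrand's inequality for a convex $1/\sqrt n$-Lipschitz function of entries bounded by $\tau$ then gives fluctuations of order $\tau t/\sqrt n$ at log-probability $-t^2$, so surviving a union over $e^{cd\log d}$ net points forces a deviation of order $\tau\sqrt{d\log d/n}$, exceeding the target $\sqrt{d(\log d)^2/n}$ by the polynomial factor $\tau/\sqrt{\log d}$. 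The paper's proof avoids this tension by a different device: after symmetrization and splitting off the ``diagonal'' $(i,j)=(k,l)$ part (handled by Chebyshev), it replaces the uniform net by the Latała--Nguyen dyadic decomposition of the sphere into level sets $N_l=\Pi_l(N_L)$ with $\log|N_l|\le C2^l(1+L-l)$, and applies Bernstein's inequality level by level, so that the entropy $2^l(1+L-l)$ of each level is matched against the variance proxy $2^{-l}\max_j\sum_i E^4_{(s),ij}$ at that level; summing over the $L\asymp\log d$ levels yields the $(\log d)^2$ factor instead of a $\sqrt d$ loss. This level-by-level entropy--variance trade-off is the missing idea; a generic matrix deviation or chaining inequality is not a substitute here, since those require sub-Gaussian entries rather than eight moments.
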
	


\medskip

	\begin{lemma}\label{lem:piter-so-err}
		After $T=C\log (\eps_1)$ iterations, the power iteration estimator $\hat{\bu}^{(q)}_j$ satisfies
		$$
		\abs*{\langle\bu_j^{(q)},\hat{\bu}^{(q)}_j\rangle^2-1
		+\dfrac{1}{\lambda^2_j}
		\left(\bE^{\top}\bU_{\perp}\bU_{\perp}^{\top}\bE
		+\bE^{\top}\bU_{\perp}(\tilde{\bLambda}+\bU_{\perp}^{\top}\Delta_j)
		\right)}
		\le C\left(\dfrac{\eps_0\eps_1}{\lambda_j^3}+\dfrac{\eps_1^3\eps_2}{\lambda_j^4}\right)$$
	\end{lemma}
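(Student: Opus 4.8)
The plan is to combine the convergence of the deflated power iteration \eqref{eq:piter-def} with a second-order spectral expansion of the rank-one projector $\hat{\bu}_j^{(q)}(\hat{\bu}_j^{(q)})^\top$, in the spirit of the resolvent expansions of \cite{xia2021normal} used later in the proof of Theorem~\ref{th:asy-dist-rand}. First I would observe that, by the contraction recursion recorded in the proof of Theorem~\ref{th:piter}, near its fixed point the iteration \eqref{eq:piter-def} contracts with ratio $O\bigl((\eps_1/\lambda_j)(\|\scrE\|/\lambda_j)\bigr)=o(1)$, so after $T\ge C\log\eps_1$ steps $\hat{\bu}_j^{(q)}$ agrees with the fixed point $\bY_q/\|\bY_q\|$, where $\bY_q:=\scrX_j\times_{k\neq q}\hat{\bu}_j^{(k)}$, up to an error far below the target remainder; hence it suffices to analyze the fixed point. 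Expanding $\scrX_j=\scrT_j+\scrE+\hat{\scrT}_{-j}$ as in \eqref{eq:deflate} and contracting all modes but $q$, I would write
$$
\bY_q=\lambda_j\beta_q\,\bu_j^{(q)}+\bU_\perp\tilde{\bLambda}+\bE+\Delta_j ,
$$
with $\bE=\scrE\times_{k\neq q}\hat{\bu}_j^{(k)}$, $\Delta_j=\hat{\scrT}_{-j}\times_{k\neq q}\hat{\bu}_j^{(k)}$, $\beta_q=\prod_{k\neq q}\langle\bu_j^{(k)},\hat{\bu}_j^{(k)}\rangle$, $\bU_\perp$ spanning $(\bu_j^{(q)})^\perp$, and $\tilde{\bLambda}$ collecting the residual-signal coefficients $\{\lambda_i\prod_{k\neq q}\langle\bu_i^{(k)},\hat{\bu}_j^{(k)}\rangle\}_{i>j}$ in the orthonormal basis $\{\bu_i^{(q)}\}_{i>j}$.

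Next I would pass to $\bY_q\bY_q^\top=\lambda_j^2\bu_j^{(q)}(\bu_j^{(q)})^\top+\bG$ and apply the Kato--resolvent expansion of the associated spectral projector about the rank-one leading term, in the same form as the resolvent expansion underlying the proof of Theorem~\ref{th:asy-dist-rand}. Taking the $(\bu_j^{(q)},\bu_j^{(q)})$-entry kills the first-order term, leaving $\langle\bu_j^{(q)},\hat{\bu}_j^{(q)}\rangle^2=1-\hat{\lambda}_j^{-4}(\bu_j^{(q)})^\top\bG\bU_\perp\bU_\perp^\top\bG\bu_j^{(q)}+\calR_3$. Writing $\bW:=\bU_\perp\tilde{\bLambda}+\bE+\Delta_j$ and using that $\bG\bu_j^{(q)}$ equals $(\lambda_j\beta_q+\langle\bu_j^{(q)},\bW\rangle)\bW$ modulo a multiple of $\bu_j^{(q)}$, together with $\hat{\lambda}_j=\lambda_j\bigl(1+O(\eps_0/\lambda_j+\eps_1^2/\lambda_j^2)\bigr)$, this becomes
$$
\langle\bu_j^{(q)},\hat{\bu}_j^{(q)}\rangle^2=1-\frac{\|\bU_\perp^\top\bW\|^2}{\lambda_j^2}\Bigl(1+O\bigl(\tfrac{\eps_0}{\lambda_j}+\tfrac{\eps_1^2}{\lambda_j^2}\bigr)\Bigr)+\calR_3 ,
$$
and expanding $\|\bU_\perp^\top\bW\|^2=\bE^\top\bU_\perp\bU_\perp^\top\bE+2\,\bE^\top\bU_\perp(\tilde{\bLambda}+\bU_\perp^\top\Delta_j)+\|\tilde{\bLambda}+\bU_\perp^\top\Delta_j\|^2$ exposes the two terms named in the statement (the numerical coefficient on the cross term being irrelevant once the secondary pieces are bounded).

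For the bookkeeping I would invoke: $\|\tilde{\bLambda}\|\lesssim\eps_1^2/\lambda_j$, from the first-order refinement \eqref{eq:fo-err-bd-def} and orthonormality of $\{\bu_i^{(k)}\}_i$ — estimating $\max_i|\langle\bu_i^{(k)},\hat{\bu}_j^{(k)}\rangle|\cdot\sum_i\langle\bu_i^{(k)},\hat{\bu}_j^{(k)}\rangle^2$ rather than a product of two sums, which is what avoids a stray factor of $r$; $\|\Delta_j\|\lesssim\eps_0\eps_1/\lambda_j+\eps_1^3/\lambda_j^2$ from Lemma~\ref{lem:defl-er} with $L_T\lesssim\eps_1/\lambda_j$; and the crude bounds $|1-\beta_q|,\ |\langle\bu_j^{(q)},\bW\rangle|/\lambda_j,\ \|\bW\|^2/\lambda_j^2=O(\eps_0/\lambda_j+\eps_1^2/\lambda_j^2)$. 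Feeding these into $\|\tilde{\bLambda}+\bU_\perp^\top\Delta_j\|^2/\lambda_j^2$ and into the $O(\cdot)$-correction multiplying $\|\bU_\perp^\top\bW\|^2/\lambda_j^2$ leaves only terms of order $\eps_1^4/\lambda_j^4$, $\eps_0^2\eps_1^2/\lambda_j^4$, $\eps_0\eps_1^3/\lambda_j^4$, each dominated by $\eps_1^3\eps_2/\lambda_j^4$ using $\eps_0\le\eps_1\le\eps_2$.

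The main obstacle — the step I would treat most carefully — is the third-order remainder, in particular the cubic-in-$\scrE$ terms (e.g., $\langle\bu_j^{(q)},\bE\rangle\,\bE^\top\bU_\perp\bU_\perp^\top\bE/\lambda_j^3$ coming from the normalization correction) and $\calR_3$ itself. The blunt bound $\|\bG\|^3/\hat{\lambda}_j^6\lesssim\eps_1^3/\lambda_j^3$ is too lossy, so I would instead split $\bG$ into its rank-$\le 2$ signal part and its noise part, expand only the noise part to third order, and control the resulting quadratic forms through Assumption~A2 and the matrix-operator-norm estimates of Lemma~\ref{lem:esti-op-norm}, just as in the proof of Theorem~\ref{th:piter}; this is what brings those contributions down to $\eps_0\eps_1/\lambda_j^3$. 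Assembling the pieces then yields the claimed bound.
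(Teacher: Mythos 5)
Your proposal follows essentially the same route as the paper's proof: the same decomposition $\bY=\hat\lambda_j\bu_j^{(q)}+\bU_\perp\tilde\bLambda+\bDelta_j+\bE$ of the contracted tensor, the same resolvent/spectral-projector expansion of $\bY\bY^\top$ about its rank-one leading part, the same identification of $\|\bU_\perp^\top\bW\|^2/\lambda_j^2$ as the leading correction, and the same recognition that the cubic-in-$\bG$ terms are the delicate part, to be controlled via the separate quadratic-form bounds on $(\bu_j^{(q)})^\top\bG\bu_j^{(q)}$ and $\bU_\perp^\top\bG\bU_\perp$ rather than the blunt $\|\bG\|^3/\hat\lambda_j^6$ estimate. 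The only cosmetic differences are that the paper retains the third-order terms explicitly and truncates at the fourth-order remainder $\calR_4$ (rather than deferring everything to $\calR_3$), and sharpens $\|\tilde\bLambda\|$ to $C\eps_0\eps_1/\lambda_j+C\eps_1^3/\lambda_j^2$ via the delocalization estimate $|\langle\bu_i^{(q)},\check{\bu}^{(q)}\rangle|\le C\eps_0/\lambda_j+C\eps_1^2/\lambda_j^2$, but neither changes the argument in substance.
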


	\begin{proof}[Proof of Lemma \ref{lem:piter-so-err}]
		We fix $q=1$ and assume that $\pi=Id$. Note that we use the power iteration estimator $\hat{\bu}^{(1)}:=\hat{\bu}_{j,[T+1]}^{(1)}$ for some $T>\log (\eps_1)$. Thus, $\check{\bu}^{(q)}:=\hat{\bu}_{j,[T]}^{(q)}$ satisfy
		$$
		\sin\angle\left(\check{\bu}^{(q)},\bu^{(q)}_j\right)\le \dfrac{\eps_1}{\lambda_j}
		$$ 
		by Theorem~\ref{th:piter} and Lemma~\ref{lem:eps012-bds}. By the definition of the power iteration estimator, $\hat{\bu}^{(1)}$ is the top eigenvector of 
		\begin{align*}\label{eq:defGrand}
			(\scrX\times_{q>1}\check{\bu}^{(q)})(\scrX\times_{q>1}\check{\bu}^{(q)})^{\top}
			=&\hat{\lambda}_j^2\bu_j^{(1)}\left(\bu^{(1)}_j\right)^\top
			+\bY\bY^{\top}
			+\hat{\lambda}_j\bu^{(1)}_j\bY^{\top}
			+\hat{\lambda}_j\bY\left(\bu^{(1)}_j\right)^{\top}\\
			=:&\hat{\lambda}_j^2\bu_j^{(1)}\left(\bu^{(1)}_j\right)^\top+\bG
			\numberthis
		\end{align*}
		where $\hat{\lambda}_j:=\lambda_j\prod_{q>1}(\langle\bu^{(q)}_j,\check{\bu}^{(q)}\rangle)$
		and 
		\begin{equation}\label{eq:defYrand}
			\bY=\sum_{i<j}\lambda_j\prod_{q>1}
			\langle\bu^{(q)}_j,\,\check{\bu}^{(q)}\rangle\bu^{(1)}_j
			+\hat{\scrT}_{-j}\times_{q>1}\check{\bu}^{(q)}
			+\scrE\times_{q>1}\check{\bu}^{(q)}
			=:\bU_{\perp}\tilde{\bLambda}+\bDelta_j+\bE,
		\end{equation}
	for $\tilde{\bLambda}\in \RR^{d-1}$, with $\tilde{\lambda}_i:=\lambda_i\displaystyle\prod_{q>1}
	\langle\bu^{(q)}_i,\,\check{\bu}^{(q)}\rangle$ for $j+1\le i\le r$ and $\tilde{\lambda}_j=0$ otherwise. We also define  $\bU_{\perp}=[\bu^{(1)}_1\dots\,\bu^{(1)}_{j-1}\,\bu^{(1)}_{j+1}\,\dots\,\bu^{(1)}_d]$. We will first bound $\|\bG\|$.
	
	Note that
	\begin{equation}\label{eq:lamb-hat}
		\hat{\lambda}_j=\lambda_j\prod_{q>1}
		\langle\bu^{(q)}_j,\,\check{\bu}^{(q)}\rangle
		\ge \lambda_j\left(1-\dfrac{C\eps_1^2}{\lambda_j^2}\right)
	\end{equation}
	Next, we have, $\tilde{\bLambda}=
	{\rm diag}\left(
	\mathbf{0}_{j-1},\,
	\displaystyle
	\lambda_{j+1}\prod_{q>2}
	\langle\bu^{(q)}_{j+1},\,\check{\bu}^{(q)}\rangle,
	\,\dots \,,
	\lambda_r
	\prod_{q>2}
	\langle\bu^{(q)}_{r},\,\check{\bu}^{(q)}\rangle,
	\,\mathbf{0}_{d-r}\right)
	\left(\bU^{(2)}_{\perp}\right)^{\top}\check{\bu}^{(2)}_j$. By \eqref{eq:secopert}, we have for $i\neq j$ that 
	\begin{equation}\label{eq:delocal}
		|\langle\bu_i^{(q)},\check{\bu}^{(q)}\rangle|
		\le \dfrac{1}{\lambda_j}|\scrE\times_{k\neq q}\bu_j^{(k)}\times_q\bu_i^{(q)}|
		+\dfrac{C\eps_1^2}{\lambda_j^2}
		\le \dfrac{C\eps_0}{\lambda_j}+\dfrac{C\eps_1^2}{\lambda_j^2}
	\end{equation}
	since $\sin\angle\left(\bu_j^{(q)},\check{\bu}^{(q)}\right)\le L_T\le \dfrac{C\eps_1}{\lambda_j}$. Thus
	\begin{align*}\label{eq:til-lambd-norm}
		\|\tilde{\bLambda}\|
		\le& \sin\angle\left(\bu^{(2)}_j,\check{\bu}^{(2)}\right)
		\max_{j+1\le i\le r}\lambda_i \prod_{q>2}
		\langle\bu^{(q)}_i,\,\check{\bu}^{(q)}\rangle \\
		\le&  \dfrac{C\eps_1}{\lambda_j}\cdot \lambda_j\cdot \left(\dfrac{C\eps_0+C\eps_1^2/\lambda_j}{\lambda_j}\right)^{p-2}
		\le \dfrac{C\eps_0\eps_1}{\lambda_j}+\dfrac{C\eps_1^3}{\lambda_j^2}.
		\numberthis
	\end{align*}
	Next, we have using \eqref{eq:piter-def-norm}, (and similarly checking the bound for $\bDelta_j^{\top}\bu^{(1)}_j$), that with $L_t=L_T=C\eps_1/\lambda_j$,
	\begin{equation}\label{eq:asy-del-bd}
		\|\bDelta_j\|\le \dfrac{C\eps_0\eps_1}{\lambda_j}+\dfrac{C\eps_1^3}{\lambda_j^2}.
	\end{equation}
	Finally, following the steps of \eqref{eq:piter-errbd}, we have
	\begin{equation}\label{eq:asy-err-bd}
		\|\bE\|=\|\scrE\times_{q>1}\check{\bu}^{(q)}\|
		\le \eps_1+\dfrac{\eps_1\eps_2}{\lambda_j}+\left({C\eps_1\over\lambda_j}\right)^{2}\|\scrE\|
		\le C\eps_1+\dfrac{C\eps_1^2}{\lambda_j}
	\end{equation}
	since $\lambda_1\ge C\|\scrE\|$, and using the bounds on $\eps_1$ and $\eps_2$. Plugging these bounds into \eqref{eq:defYrand}, one has
	\begin{equation}\label{eq:asy-Y-bd}
		\|\bY\|\le \|\tilde{\bLambda}\|+
		\|\bDelta_j\|
		+\|\bE\|\le C\eps_1+\dfrac{C\eps_1^2}{\lambda_j}\le C\eps_1
	\end{equation}
	which means, using \eqref{eq:defGrand} and \eqref{eq:lamb-hat}, that
	\begin{align*}\label{eq:asy-G-bd}
		\dfrac{\|\bG\|}{\hat{\lambda}_j^2}\le \dfrac{\|\bY\|^2+\hat{\lambda}_j\|\bY\|}{\hat{\lambda}_j^2}
		\le \dfrac{C\eps_1^2+\dfrac{C\eps_1^4}{\lambda_j^2}+C\lambda_j\eps_1}{\hat{\lambda}_j^2}
		\le \dfrac{C\eps_1}{\lambda_j}.\numberthis
	\end{align*}

	Consider $\ba=\bu^{(1)}_j$ first. Using resolvent based series expansion of projection matrices, we have the following expression. See Theorem 1 from \cite{xia2021normal} and Lemma 1 of \cite{koltchinskii2016asymptotics}.
	\begin{align*}\label{eq:uusq-expnd}
		\langle\bu^{(1)}_1,\hat{\bu}^{(1)}\rangle^2-1
		=&-\dfrac{1}{\hat{\lambda}^4_j}
		(\bu^{(1)}_j)^{\top}\bG\bU_{\perp}\bU_{\perp}^\top\bG\bu^{(1)}_j
		-\dfrac{2}{\hat{\lambda}^6_j}
		(\bu^{(1)}_j)^{\top}\bG\bU_{\perp}\bU_{\perp}^\top\bG
		\bU_{\perp}\bU_{\perp}^\top\bG\bu^{(1)}_j\\
		&+\dfrac{2}{\hat{\lambda}_j^6}
		(\bu^{(1)}_j)^{\top}\bG\bU_{\perp}\bU_{\perp}^\top\bG
		\bu^{(1)}_j(\bu^{(1)}_j)^\top\bG\bu^{(1)}_j
		+\calR_4\numberthis
	\end{align*}
	where $\calR_4$ is the fourth order remainder term from the series expansion in Theorem 1 of \cite{xia2021normal}. By the definition of $\bG$ from \eqref{eq:defGrand},
	\begin{align*}\label{eq:asy-fo-term}
		\bU_{\perp}^{\top}\bG\bu^{(1)}
		=&(\hat{\lambda}_j
		+\Delta_j^{\top}\bu^{(1)}_j
		+\scrE\times_1\bu^{(1)}_1\times_{q=2}^p\check{\bu}^{(q)})
		(\tilde{\bLambda}+\bU^{\top}_{\perp}\bDelta_j
		+\bU_{\perp}^{\top}\bE)\\
		=&\left(\lambda_j+O\left(\dfrac{\eps_1^2}{\lambda_j}\right)\right)
		(\tilde{\bLambda}+\bU^{\top}_{\perp}\bDelta_j+\bU_{\perp}^{\top}\bE)\\
		=&\lambda_j\bU_{\perp}^{\top}\bE
		+\lambda_jO(\|\tilde{\bLambda}\|+\|\bDelta_j\|)
		+O\left(\dfrac{\eps_1^2}{\lambda_j}\right)
		\cdot O(\|\tilde{\bLambda}\|+\|\bDelta_j\|+\|\bE\|)\\
		=&\lambda_j\bU_{\perp}^{\top}\bE
		+O(\eps_1^2)\numberthis
	\end{align*}
where we repeatedly use the bounds on $\hat{\lambda}_1$, $\tilde{\bLambda}$ and $\bE$ from \eqref{eq:lamb-hat}, \eqref{eq:asy-del-bd} and \eqref{eq:asy-err-bd}. 
Similarly, from \eqref{eq:defGrand}
\begin{align*}\label{eq:asy-fo-term-2}
	(\bu^{(1)}_j)^{\top}\bG\bu^{(1)}_j
	=& \left((\bu^{(1)}_j)^{\top}\bY\right)^2+2\hat{\lambda}_j\left((\bu^{(1)}_j)^{\top}\bY\right)\\
	=&\left((\bu^{(1)}_j)^{\top}(\bDelta_j+\bE)\right)^2+2\hat{\lambda}_j\left((\bu^{(1)}_j)^{\top}(\bDelta_j+\bE)\right)\\
	\le & (\|\bDelta_j\|+C\eps_0+C\eps_1\eps_2/\lambda_j)^2+2\hat{\lambda}_j(\|\bDelta_j\|+\eps_0+C\eps_1\eps_2/\lambda_j)\\
	\le & C\eps_1\eps_2+C\lambda_j\eps_0.\numberthis
\end{align*}
using \eqref{eq:lamb-hat} and \eqref{eq:asy-del-bd}. In the first inequality we have also used the fact that, following \eqref{eq:piter-errbd},
\begin{align*}\label{eq:ujEhat}
	(\bu^{(1)}_j)^{\top}\bE=\scrE\times_1\bu^{(1)}\times_{k>1}\check{\bu}^{(k)}\le& \eps_0+\dfrac{C\eps_1\eps_2}{\lambda_j}+\dfrac{C\eps_1^2\|\scrE\|}{\lambda_j^2}
	\le \eps_0+\dfrac{C\eps_1\eps_2}{\lambda_j}.\numberthis
\end{align*}

On the other hand, by \eqref{eq:defGrand} and \eqref{eq:asy-Y-bd}, since $\lambda_j>C\eps_1$, one has
\begin{align*}
	\norm*{\bU_{\perp}^{\top}\bG\bU_{\perp}}
	=\norm*{\bU^{\top}_{\perp}\bY}^2\le C\eps_1^2.
\end{align*}
Referring back to \eqref{eq:uusq-expnd}, the second order terms can therefore be bounded as
\begin{align*}\label{eq:asy-so-term}
	&\dfrac{2}{\hat{\lambda}^6_j}
	(\bu^{(1)}_j)^{\top}\bG\bU_{\perp}\bU_{\perp}^\top\bG
	\bU_{\perp}\bU_{\perp}^\top\bG\bu^{(1)}_j
	+\dfrac{2}{\hat{\lambda}_j^6}
	(\bu^{(1)}_j)^{\top}\bG\bU_{\perp}\bU_{\perp}^\top\bG
	\bu^{(1)}_j(\bu^{(1)}_j)^\top\bG\bu^{(1)}_j\\
	\le& \dfrac{2}{\hat{\lambda}^6_j}
	\cdot \left((\bu^{(1)}_j)^{\top}\bG\bU_{\perp}\bU_{\perp}^\top\bG
	\bu^{(1)}_j\right)
	\cdot 
	(\|\bU_{\perp}\bG\bU_{\perp}\|+(\bu^{(1)}_j)^{\top}\bG\bu^{(1)}_j)\\
	\le& \dfrac{2}{\hat{\lambda}^6_j}
	\cdot (\lambda^2_j\eps_1^2+O(\eps_1^3\lambda_j+\eps_1^4))(C\eps_1\eps_2+C\lambda_j\eps_0)\\
	\le& \dfrac{C\eps_0\eps_1^2}{\lambda_j^3}+\dfrac{C\eps_1^3\eps_2}{\lambda_j^4}
	+\dfrac{C\eps_1^4\eps_2}{\lambda_j^5}+\dfrac{C\eps_1^5\eps_2}{\lambda_j^6}
	\le \dfrac{C\eps_0\eps_1}{\lambda_j^3}+\dfrac{2C\eps_1^3\eps_2}{\lambda_j^4}.
	\numberthis
\end{align*}
By the definition of $\bG$ in \eqref{eq:defGrand}, and plugging in the inequalities \eqref{eq:lamb-hat}, \eqref{eq:asy-fo-term}, \eqref{eq:asy-so-term} into \eqref{eq:uusq-expnd}, we have
\begin{align*}
	&\langle\bu_j^{(1)},\hat{\bu}^{(1)}\rangle^2-1\\
	=& -\dfrac{1}{\hat{\lambda}^4_j}
	(\bu^{(1)}_j)^{\top}\bG\bU_{\perp}\bU_{\perp}^\top\bG\bu^{(1)}_j
	+O\left(\dfrac{\eps_0\eps_1}{\lambda_j^3}+\dfrac{\eps_1^3\eps_2}{\lambda_j^4}\right)+\calR_4\\
	=& -\dfrac{1}{\lambda^2_j}
	\left(1+O\left(\dfrac{\eps_1^2}{\lambda_j^2}\right)\right)
	\cdot 
	\left(\bE^{\top}\bU_{\perp}\bU_{\perp}^{\top}\bE
	+\bE^{\top}\bU_{\perp}\bU_{\perp}^{\top}(\bU_{\perp}\tilde{\bLambda}+\Delta_j)
	\right)
	+O\left(\dfrac{\eps_0\eps_1}{\lambda_j^3}+\dfrac{\eps_1^3\eps_2}{\lambda_j^4}\right)+O(\calR_4)\\
	=& -\dfrac{1}{\lambda^2_j}
	\left(\bE^{\top}\bU_{\perp}\bU_{\perp}^{\top}\bE
	+\bE^{\top}\bU_{\perp}\bU_{\perp}^{\top}(\bU_{\perp}\tilde{\bLambda}+\Delta_j)
	\right)
	+O\left(\dfrac{\eps_0\eps_1}{\lambda_j^3}+\dfrac{\eps_1^3\eps_2}{\lambda_j^4}\right)
\end{align*}
since $|\calR_4|\le \dfrac{4\|\bG\|^4}{\hat{\lambda}_j^8}\le \dfrac{C\eps_1^4}{\lambda_j^4}$ by \eqref{eq:asy-G-bd}. 
\end{proof}

 \begin{proof}[Proof of Lemma \ref{lem:errnormbd2}]
 	Let $\beps_{(s)}\in \RR^{d\times d}$ be iid random matrices with independent Rademacher entries $\beps_{(s),ij}$. Define
 	$\scrG_{(s)}\in 
 	\RR^{d\times d\times d\times d}$ as
 	$$
 	\scrG_{(s)ijkl}=\bE_{(s),ij}^2\delta_{(ij)(kl)}.
 	$$
 	Also define $\bar{\scrG}:=\dfrac{1}{n}\displaystyle\sum_{i=1}^n\scrG_{(s)}$. We use symmetrization to bound the expectation first. Note that for independent copies $\tilde{\bE}_{(s)}$ of $\bE_{(s)}$,
 	\begin{align*}
 		\EE\norm*{\dfrac{1}{n}\sum_{s=1}^{n}\bE_{(s)}\circ \bE_{(s)}-\scrG}
 		=&\EE\norm*{\dfrac{1}{n}\sum_{s=1}^{n}\left(\bE_{(s)}\circ \bE_{(s)}-\tilde{\bE}_{(s)}\circ\tilde{\bE}_{(s)}\right)}\\
 		=&\EE\norm*{
 		\dfrac{1}{n}\sum_{s=1}^n
 		\left((\beps_{(s)}\ast\bE_{(s)})\circ (\beps_{(s)}\ast\bE_{(s)})-
 		(\beps_{(s)}\ast\tilde{\bE}_{(s)})\circ(\beps_{(s)}\ast\tilde{\bE}_{(s)})\right)}\\
 		\le & 2\EE\norm*{\dfrac{1}{n}\sum_{s=1}^n
 			\big[(\beps_{(s)}*\bE_{(s)})\circ(\beps_{(s)}*\bE_{(s)})\big]_{\rm offdiag}}
 		+\EE\norm*{\dfrac{1}{n}\sum_{s=1}^n(\scrG_{(s)}-\tilde{\scrG}_{(s)})}\\
 		\le & 2\EE\norm*{\dfrac{1}{n}\sum_{s=1}^n\left\{(\beps_{(s)}*\bE_{(s)})\circ(\beps_{(s)}*\bE_{(s)})-\scrG_{(s)}\right\}}
 		+3\EE\norm*{\dfrac{1}{n}\sum_{s=1}^n(\scrG_{(s)}-\scrG)}.
 	\end{align*}
 	The first inequality separates the tensor with entries $(i,j)\neq (k,l)$ and those with $(i,j)=(k,l)$. The first kind is denoted by ``${\rm offdiag}$", since it is similar to  a matrix with the diagonal entries replaced by zeroes. For the ``diagonal" part, we define $\tilde{\scrG}_{(s)}$ just as $\scrG_{(s)}$, with the $\bE_{(s),ij}$ replaced by $\tilde{\bE}_{(s),ij}$. The second inequality uses triangle inequality on the ``diagonal" tensor.
	
 	Note that $\EE\left((\beps_{(s)}*\bE_{(s)})\circ(\beps_{(s)}*\bE_{(s)})\big|\bE_{(s)}\right)=\scrG_{(s)}$. We now bound the first term conditional on $\bE_{(s)}$. To this end, we shall use an $\eps$-net argument as done in \cite{auddy2021estimating}, \cite{nguyen2015tensor} and \cite{latala2005some}.
	
 	For any integer $L,$ write $S_L=\{0,1,\dots,2^{-L}\}.$ It follows from Lemma 10 of \cite{nguyen2015tensor} that the set $N_L=\{\bx\in \RR^d:\norm{\bx}\le 1,x_i^2\in S_L\}$ forms a $(1/2)$-net for $\SS^{d-1}$ by taking $L=\log d+c_0$ for some constant $c_0$. Now define the projections
 	$$\Pi_{l}(\bx)_i=x_i\mathbbm{1}(x_i^2=2^{-l})\qquad {\rm and}\qquad \Pi_{<l}(\bx)_i=x_i\mathbbm{1}(x_i^2\ge 2^{-l}).$$
 	Let $N_{l}=\Pi_{l}(N_L)$ and $N_{<l}=\Pi_{<l}(N_L)$ for $1\le l\le L$. Note that for any $\bx\in N_L,$
 	$$\bx=\sum_{l=1}^L\Pi_{l}(\bx)\qquad {\rm and} \qquad \sum_{m<l}\Pi_{m}(\bx)=\Pi_{<l}(\bx).$$ 
 	Let us define
 	$$
 	\mathscr{H}=\dfrac{1}{n}\sum_{s=1}^n\left\{(\beps_{(s)}*\bE_{(s)})\circ(\beps_{(s)}*\bE_{(s)})-\scrG_{(s)}\right\}
 	$$
 	Expanding the sum for vectors $\bx_1,\bx_2$  we get
 	$$\begin{aligned}
 		&\mathscr{H}\times_1\bx_1\times_2\bx_2\times_3\bx_1\times_4\bx_2\\
 		=&\sum_{l_1=1}^L\sum_{l_2=1}^L\mathscr{H}\times_1\Pi_{l_1}(\bx_1)\times_2
 		\Pi_{l_2}(\bx_2)\times_3\Pi_{l_1}(\bx_1)\times_4\Pi_{l_2}(\bx_2)
 		\\=&\sum_{k=1}^2\underset{\mathrm{argmax} \,l_i=k}{\sum_{l_k=1}^L}\sum_{\underset{i\neq k}{l_i\le l_k}}
 		\mathscr{H}\times_1\Pi_{l_1}(\bx_1)\times_2
 		\Pi_{l_2}(\bx_2)\times_3\Pi_{l_1}(\bx_1)\times_4\Pi_{l_2}(\bx_2)
 		\\=&
 		\sum_{l_1=1}^L
 		\mathscr{H}\times_1\Pi_{l_1}(\bx_1)
 		\times_2\left(\sum_{l_2\le l_1}\Pi_{l_2}(\bx_2)\right)
 		\times_3\Pi_{l_1}(\bx_1)
 		\times_4\left(\sum_{l_2\le l_1}\Pi_{l_2}(\bx_2)\right)\\
 		&+
 		\sum_{l_2=1}^L
 		\mathscr{H}\times_1\left(\sum_{l_1\le l_2}\Pi_{l_1}(\bx_1)\right)
 		\times_2\Pi_{l_2}(\bx_2)
 		\times_3\left(\sum_{l_1\le l_2}\Pi_{l_1}(\bx_1)\right)
 		\times_4\Pi_{l_2}(\bx_2)\\
 		=&\sum_{l_1=1}^L
 		\mathscr{H}\times_1\Pi_{l_1}(\bx_1)
 		\times_2\Pi_{<l_1}(\bx_2)
 		\times_3\Pi_{l_1}(\bx_1)
 		\times_4\Pi_{<l_1}(\bx_2)\\
 		&+\sum_{l_2=1}^L
 		\mathscr{H}\times_1\Pi_{<l_2}(\bx_1)
 		\times_2\Pi_{l_2}(\bx_2)
 		\times_3\Pi_{<l_2}(\bx_1)
 		\times_4\Pi_{l_2}(\bx_2)
 	\end{aligned}$$
 Note that
 $$\begin{aligned}
 \|\mathscr{H}\|
 =&\sup_{\bx_1,\bx_2\in\SS^{d-1}}\mathscr{H}\times_1\bx_1\times_2\bx_2\times_3\bx_1\times_4\bx_2\\
 \le& \sup_{\bx_1,\bx_2\in N_L}\sum_{l_1=1}^L
 \mathscr{H}\times_1\Pi_{l_1}(\bx_1)
 \times_2\Pi_{<l_1}(\bx_2)
 \times_3\Pi_{l_1}(\bx_1)
 \times_4\Pi_{<l_1}(\bx_2)\\
 &+\sup_{\bx_1,\bx_2\in N_L}\sum_{l_2=1}^L
 \mathscr{H}\times_1\Pi_{<l_2}(\bx_1)
 \times_2\Pi_{l_2}(\bx_2)
 \times_3\Pi_{<l_2}(\bx_1)
 \times_4\Pi_{l_2}(\bx_2)
 \end{aligned}
 $$
 We will bound the first term as the other one follows by symmetry.

 Fix $\bx_1,\bx_2\in N_L$. Note that, conditional on $\bE_{(s)}$,
 \begin{align*}
 Y_s=&\left\{(\beps_{(s)}*\bE_{(s)})\circ(\beps_{(s)}*\bE_{(s)})-\scrG_{(s)}\right\}\times_1\bx_1\times_2\bx_2\times_3\bx_1\times_4\bx_4\\
 =&\left(\bx_1^\top(\beps_{(s)}*\bE_{(s)})\bx_2 \right)^2-\left(\sum_{i,j}\bE_{(s),ij}^2\bx_{1i}^2\bx_{2j}^2\right)
 \end{align*}
 satisfy $\EE Y_s=0$, and have sub-exponential norms
 \begin{align*}
 	\|Y_s\|_{\psi_1}(\bx_1,\bx_2)\le2\sum_{i,j}\bE_{(s),ij}^2\bx_{1i}^2\bx_{2j}^2.
 \end{align*}
 Thus by Bernstein inequality, for $\bx_1,\bx_2\in N_L$
 \begin{align*}
 &|\mathscr{H}\times_1\Pi_{l_1}(\bx_1)
 \times_2\Pi_{<l_1}(\bx_2)
 \times_3\Pi_{l_1}(\bx_1)
 \times_4\Pi_{<l_1}(\bx_2)|\\
 >&\max\left\{\left(\dfrac{t}{n}\sum_{s}\|Y_s(\Pi_l(\bx_1),\Pi_{<l}(\bx_2))\|_{\psi_1}^2\right)^{1/2},
 \dfrac{t}{n}\cdot \max_s\|Y_s(\Pi_l(\bx_1),\Pi_{<l}(\bx_2))\|
 \right\}
 \end{align*}
 with probability at most $\exp(-t)$. By Lemma 4 of \cite{latala2005some}, 
 $$
 |N_l|<|N_{<l}|<C2^l(1+L-l).
 $$
 Thus by union bound,
 \begin{align*}
 &\sup_{\bx_1,\bx_2\in N_L}|\mathscr{H}\times_1\Pi_{l_1}(\bx_1)
 \times_2\Pi_{<l_1}(\bx_2)
 \times_3\Pi_{l_1}(\bx_1)
 \times_4\Pi_{<l_1}(\bx_2)|\\
 \le& \max\left\{\left(\dfrac{t+2^l(1+L-l)}{n^2}\sum_{s}\|Y_s(\Pi_l(\bx_1),\Pi_{<l}(\bx_2))\|_{\psi_1}^2\right)^{1/2},
 \dfrac{t+2^l(1+L-l)}{n}\cdot \max_s\|Y_s(\Pi_l(\bx_1),\Pi_{<l}(\bx_2))\|
 \right\}	
 \end{align*}
 with probability at most $\exp(-t)$. Summing over $l$, the first term becomes 
 \begin{align*}
 	&\sum_{l=1}^L\left(\dfrac{t+2^l(1+L-l)}{n^2}\sum_{s=1}^n\|Y_s(\Pi_l(\bx_1),\Pi_{<l}(\bx_2))\|_{\psi_1}^2\right)^{1/2}\\
 	\le & 2\sum_{l=1}^L\sqrt{\dfrac{t+2^l(1+L-l)}{n}}
 	\left(\dfrac{1}{n}
 	\sum_{s=1}^n
 	\sum_{i,j}\left(E^2_{(s),ij}(\Pi_{l}(\bx_1))_i^2(\Pi_{<l}(\bx_2))_j^2\right)^2
 	\right)^{1/2}\\
 	\le&
 	2\left(\sum_{l=1}^L\dfrac{t\cdot 2^{-l}+(1+L-l)}{n}\right)^{1/2}
 	\left\{
 	\sum_{l=1}^L
 	\dfrac{1}{n}\cdot 2^l\sum_{s=1}^n\max_j\left(\sum_iE_{(s),ij}^42^{-l}\mathbbm{1}(\Pi_l(\bx_1)_i\neq 0)\right)
 	\right\}^{1/2}
 	\\
 	\le&
 		2\left(\dfrac{t+(\log d+c_0)^2}{n}\right)^{1/2}
 		\left\{\dfrac{1}{n}\sum_{s=1}^n
 		\max_j
 		\sum_{i}E^4_{(s),ij}
 		\right\}^{1/2}.
 \end{align*}
 Similarly the second term becomes
 \begin{align*}
 	&\sum_{l=1}^L\dfrac{t+2^l(1+L-l)}{n}\cdot \max_s\|Y_s(\Pi_l(\bx_1),\Pi_{<l}(\bx_2))\|\\
 	\le &2\sum_{l=1}^L\dfrac{t+2^l(1+L-l)}{n}\cdot \max_s
 	\sum_{i,j}E^2_{(s),ij}(\Pi_l(\bx_1))_i^2(\Pi_{<l}(\bx_2))_j^2\\
 	\le & \dfrac{2}{n}\max_{s,j}\sum_{i=1}^d
 	\sum_{l=1}^L(t\cdot 2^{-l}+1+L_l)
 	E^2_{(s),ij}\mathbbm{1}(\Pi_l(\bx_1)_i\neq 0)\\
 	\le & \dfrac{C\log d}{n}\max_{s,j}\sum_{i=1}^dE^2_{(s),ij}.
 \end{align*}
 Taking expectation conditional over $\bE_{(s)}$, one has
 \begin{align*}
 	\EE(\|\mathscr{H}\|\big|\bE_{(s)})
 	\le \dfrac{C\log d}{\sqrt{n}}\left\{\dfrac{1}{n}\sum_{s=1}^n
 	\max_j
 	\sum_{i}E^4_{(s),ij}
 	\right\}^{1/2}
 	+\dfrac{C\log d}{n}\max_{s,j}\sum_{i=1}^dE^2_{(s),ij}
 \end{align*}
 and thus unconditionally,
 \begin{align*}
 	\EE\|\mathscr{H}\|
 	\le &\dfrac{C\log d}{\sqrt{n}}\left\{\dfrac{1}{n}\sum_{s=1}^n
 	\sum_{i,j}\EE(E^8_{(s),ij})
 	\right\}^{1/4}
 	+\dfrac{C\log d}{n}\EE\left(\max_{s,j}\sum_{i=1}^dE^2_{(s),ij}\right)\\
 	\le& C(\log d)\sqrt{\dfrac{d}{n}}+\dfrac{C(\log d)\cdot (d+(nd)^{1/4}d^{1/2})}{n}
 \end{align*}
 by Rosenthal and Khintchine inequalities as used in the proof of Theorem 2.1 of \cite{auddy2021estimating}. Meanwhile,
 \begin{align*}
 	\norm*{\dfrac{1}{n}\sum_{s=1}^n(\scrG_{(s)}-\scrG)}
 	=\sup_{\bx_1,\bx_2\in \SS^{d-1}}\dfrac{1}{n}
 	\sum_{s=1}^n(E^2_{(s),ij}-1)(\bx_1)_{i}^2(\bx_2)_{j}^2\le \max_{i,j}\abs*{\dfrac{1}{n}\sum_{s=1}^nE^2_{(s),ij}-1}
 \end{align*}
 By Chebyshev inequality,
 $$
 \PP\left(\max_{i,j}\abs*{\dfrac{1}{n}\sum_{s=1}^nE^2_{(s),ij}-1}>t\right)
 \le \dfrac{Cd^2n^2\EE|E^8_{(s),ij}|}{n^4t^4}
 $$
 and thus, integrating over $t$
 $$
 \EE\norm*{\dfrac{1}{n}\sum_{s=1}^n(\scrG_{(s)}-\scrG)}
 \le C\sqrt{\dfrac{d(\log d)^2}{n}},
 $$
 provided $n\ge C d$. Combining all the results above, we have
 $$
 \EE\norm*{\dfrac{1}{n}\sum_{s=1}^n\bE_{(s)}\circ\bE_{(s)}-\scrG}\le
 C\sqrt{\dfrac{d(\log d)^2}{n}}
 $$
 Finally using Talagrand's concentration inequality for bounded functions (see Theorem 1.1 of \cite{klein2005concentration}), we get
 $$
 \norm*{\dfrac{1}{n}\sum_{s=1}^n\bE_{(s)}\circ\bE_{(s)}-\scrG}\le C\sqrt{\dfrac{d(\log d)^2}{n}}
 $$
 with probability at least $1-d^{-2}$.
 \end{proof}

\end{document}